\documentclass[twoside,11pt]{article}

% Any additional packages needed should be included after jmlr2e.
% Note that jmlr2e.sty includes epsfig, amssymb, natbib and graphicx,
% and defines many common macros, such as 'proof' and 'example'.
%
% It also sets the bibliographystyle to plainnat; for more information on
% natbib citation styles, see the natbib documentation, a copy of which
% is archived at http://www.jmlr.org/format/natbib.pdf

%\usepackage{hyperref}
\usepackage{url}
\usepackage{amssymb}
\usepackage{comment}
\usepackage{xspace}
\usepackage{framed}
% Definitions of handy macros can go here

\newcommand{\reals}{\mathbb{R}}
\newcommand{\eps}{\varepsilon}
\newcommand{\dst}{\displaystyle}
%<<<<<<< .mine
\newcommand{\Ac}[1]{A^{\text{c}}[#1]}
\newcommand{\Agc}[1]{A^{\text{gc}}[#1]}
\newcommand{\Ad}[1]{A^{\text{d}}[#1]}
\newcommand{\Agd}[1]{A^{\text{gd}}[#1]}

\newcommand{\Aic}{A^{\text{c}}_i}
\newcommand{\Aigc}{A^{\text{gc}}_i}
\newcommand{\Bgc}{B^{\text{gc}}}

\newcommand{\Alg}{{\textsc{GreeDi}}\xspace}

\newcommand{\BBalg}{X}
\newcommand{\matroid}{\mathcal{M}}
\newcommand{\indepset}{\mathcal{I}}
\newcommand{\C}{\zeta} % class of constraints
\newcommand{\optsol}{\Ac{\C}}

\newcommand{\Oa}[1]{A_{\max}^{\text{gc}}[#1]}
\newcommand{\Ob}[1]{A_\textit{B}^{\text{gc}}[#1]}

\usepackage{jmlr2e}
%=======
%\newcommand{\Ac}[1]{A^*_c[#1]}
%\newcommand{\Ad}{A^*_{\text{d}}}
%\newcommand{\remark}[1]{{\color{blue}#1}}
%>>>>>>> .r2029
%   \usepackage[pdftex]{graphicx}

   \graphicspath{{../pdf/}{../jpeg/}{}}
  % and their extensions so you won't have to specify these with
  % every instance of \includegraphics
   \DeclareGraphicsExtensions{.pdf,.jpeg,.png}
   
\usepackage{caption}
\usepackage{subcaption}

\usepackage{amsmath,algorithm,algorithmic}

\newcommand{\abs}[1]{\left\lvert#1\right\rvert}

% Heading arguments are {volume}{year}{pages}{submitted}{published}{author-full-names}

\jmlrheading{16}{2015}{1-41}{10/14}{00/00}{Baharan Mirzasoleiman, Amin Karbasi, Rik Sarkar and Andreas Krause}

% Short headings should be running head and authors last names

\ShortHeadings{Distributed Submodular Maximization}{Mirzasoleiman, Karbasi, Sarkar and Krause}
\firstpageno{1}

\begin{document}

\title{Distributed Submodular Maximization}%: \\Identifying Representative Elements in Massive Data\\ Title should be 50 characters or less*}

\author{\name Baharan Mirzasoleiman \email baharanm@inf.ethz.ch \\
       \addr Department of Computer Science\\
       ETH Zurich\\
       Universitaetstrasse 6, 8092 Zurich, Switzerland
%       Seattle, WA 98195-4322, USA
		\AND
       \name Amin Karbasi \email amin.karbasi@yale.edu \\
       \addr School of Engineering and Applied Science\\
       Yale University\\
       New Haven, USA
       \AND
       \name Rik Sarkar \email rsarkar@inf.ed.ac.uk \\
       \addr Department of Informatics \\
       University of Edinburgh \\
		10 Crichton St, Edinburgh EH8 9AB, United Kingdom
       \AND
       \name Andreas Krause \email krausea@ethz.ch \\
       \addr Department of Computer Science\\
       ETH Zurich\\
       Universitaetstrasse 6, 8092 Zurich, Switzerland}
%       Berkeley, CA 94720-1776, USA}

\editor{name}

\maketitle

\begin{abstract}%   <- trailing '%' for backward compatibility of .sty file
Many large-scale machine learning problems--clustering, non-parametric learning, kernel machines, etc.--require selecting a small yet representative subset from a  large dataset. Such problems can often be reduced to maximizing a submodular set function subject to various constraints. Classical approaches to submodular optimization require centralized access to the full dataset, which is impractical for truly large-scale problems.  In this paper, we consider the problem of submodular function maximization in a distributed fashion. 
We develop a simple, two-stage protocol \Alg, %~ \amk{any better suggestion?}, 
that is easily implemented using MapReduce style computations. We theoretically analyze our approach, and show that under certain natural conditions, performance close to the centralized approach can be achieved. 
%%%
We begin with monotone submodular maximization subject to a cardinality constraint, and then extend this approach to obtain approximation guarantees for (not necessarily monotone) submodular maximization subject to more general constraints including matroid or knapsack constraints.
%To lend additional support to our theoretical analysis, we also implemented our methods in gigantic datasets and reported the results. 
%%%
In our extensive experiments, we demonstrate the effectiveness of our approach on several applications, including sparse Gaussian process inference and exemplar based clustering on tens of millions of examples using Hadoop.
\end{abstract}

\begin{keywords}
distributed computing, submodular functions, approximation algorithms, greedy algorithms, map-reduce%, representative elements 
%a list of five key-words
\end{keywords}

\section{Introduction}\label{sec:intro}
Numerous machine learning tasks require selecting representative subsets of manageable size out of large datasets. Examples range from exemplar based clustering \citep{dueck07} to active set selection for non-parametric learning \citep{rasmussen06}, to 
viral marketing \citep{kempe03}, and data subset selection for the purpose of training complex models \citep{lin2011class}. Many such problems can be reduced to the problem of {\em maximizing a submodular set function} subject to cardinality or other feasibility constraints such as matroid, or knapsack constraints \citep{krause2010budgeted,krause12survey, lee2009non}.

Submodular functions exhibit a natural diminishing returns property common in many well known objectives: the marginal benefit of any given element decreases as we select more and more elements. Functions such as entropy or maximum weighted coverage are typical examples of functions with diminishing returns. As a result, submodular function optimization has numerous applications in machine learning and social networks: viral marketing \citep{kempe03, babaei2013revenue, mirzasoleiman2012immunizing}, information gathering~\citep{krause11submodularity}, document summarization~\citep{lin2011class}, and active learning~\citep{golovin11jair,guillory2011-active-semisupervised-submodular}.

Although maximizing a submodular function is NP-hard in general,
a seminal result of \cite{nemhauser78} states that a simple greedy algorithm 
produces solutions competitive with the optimal (intractable) solution \citep{nemhauser78best,feige98threshold}. %,badanidiyuru2014fast}.  
However, such greedy algorithms or their accelerated variants \citep{minoux78accelerated,badanidiyuru2014fast,mirzasoleiman2015lazier} do not scale well when the dataset is massive. As data volumes in modern applications increase faster than the ability of individual computers to process them, we need to look at ways to adapt our computations using parallelism. 
 
MapReduce \citep{dean04mapreduce} %-- originally developed by Google~\cite{dean04mapreduce} -- 
is arguably one of the most successful programming models for 
reliable and efficient  parallel computing. It works by distributing the data to independent machines: {\em map} tasks redistribute the data for appropriate parallel processing and the output then gets sorted and processed in parallel by
%, and combined by
{\em reduce} tasks.

To perform submodular optimization in MapReduce, we need to design suitable parallel algorithms. The greedy algorithms  that work well for centralized submodular optimization do not translate easily to parallel environments. The algorithms are inherently sequential in nature, since the marginal gain from adding each element is dependent on the elements picked in previous iterations.   
This mismatch makes it inefficient to apply classical algorithms directly to parallel setups.

In this paper, we develop a distributed procedure for maximizing  submodular functions, that can be easily implemented in MapReduce. Our strategy is to partition the data (e.g., randomly) and process it in parallel. In particular:
\begin{itemize}
\item
We present a simple, parallel protocol, called \Alg for distributed submodular maximization subject to cardinality constraints. 
It requires minimal communication, and can be easily implemented in MapReduce style parallel computation models.

\item 
We show that under some natural conditions, for large datasets the quality of the obtained solution is provably competitive with the best centralized solution.

\item
We discuss extensions of our approach to obtain approximation algorithms for (not-necessarily monotone) submodular maximization subject to more general types of constraints, including matroid and knapsack constraints.

\item
We implement our approach for exemplar based clustering and active set selection in Hadoop, and show how our approach allows to scale exemplar based clustering and sparse Gaussian process inference to datasets containing tens of millions of points.

\item
We extensively evaluate our algorithm %on very large datasets. %Our experimental results demonstrate the effectiveness of our approach on a variety of submodular maximization problems. 
on several machine learning problems, including
%and show that for problems such that 
exemplar based clustering, active set selection and finding cuts in graphs, and show that our approach leads to parallel solutions that are very competitive with those obtained via centralized methods ($98\%$  in exemplar based clustering, $97\%$ in  active set selection, $90\%$ in finding cuts).  
\end{itemize}

This paper is organized as follows. We begin in Section \ref{sec:related} by discussing background and related work. In Section \ref{sec:problem}, we formalize the distributed submodular maximization problem under cardinality constraints, and introduce example applications as well as naive approaches toward solving the problem. We subsequently present our \Alg algorithm in Section \ref{sec:results}, and prove its approximation guarantees. We then consider maximizing a submodular function subject to more general constraints in Section \ref{sec:constraints}. We also present computational experiments on very large datasets in Section \ref{sec:experiments}, showing that in addition to its provable approximation guarantees, our algorithm provides results close to the centralized greedy algorithm. We conclude in Section \ref{sec:conclusion}.

\section{Background and Related Work}\label{sec:related}
\looseness= -1
\subsection{Distributed Data Analysis and MapReduce} 
Due to the rapid increase in dataset sizes, and the relatively slow advances in sequential processing capabilities of modern CPUs, parallel computing paradigms have received much interest. Inhabiting a sweet spot of resiliency, expressivity and programming ease, the MapReduce style computing model ~\citep{dean04mapreduce} has emerged as prominent foundation for large scale machine learning and data mining algorithms~\citep{chu06map, ekanayake08}.
A MapReduce job takes the input data as a set of \textit{$<key; value>$} pairs. Each job consists of three stages: the \textit{map} stage, the \textit{shuffle} stage, and the \textit{reduce} stage. The map stage, partitions the data randomly across a number of machines by associating each element with a \textit{key} and produce a set of $<key; value>$ pairs. Then, in the shuffle stage, the value associated with all of the elements with the same key gets merged and send to the same machine.
% processed by the same reducer. 
Each reducer then processes the values associated with the same key and outputs a set of new $<key; value>$ pairs with the same key. The reducers' output could be input to another MapReduce job and a program in MapReduce paradigm can consist of multiple rounds of map and reduce stages \citep{karloff2010model}.
\subsection{Centralized and Streaming Submodular Maximization} 
The problem of centralized maximization of submodular functions has received much interest, starting with the seminal work of \cite{nemhauser78}. Recent work has focused on providing approximation guarantees for more complex constraints \cite[for a more detailed account, see the recent survey by][]{krause12survey}. \citet{golovin10distributed} consider an algorithm for online distributed submodular maximization with an application to sensor selection. However, their approach requires $k$ stages of communication, which is unrealistic for large $k$ in a MapReduce style model. \citet{krause2010budgeted} consider the problem of submodular maximization in a streaming model; however, their approach makes strong assumptions about the way the data stream is generated and is not applicable to the general distributed setting. 
Recently, \citet{Badanidiyuru2014Streaming} provide a single pass streaming algorithm for cardinality-constrained submodular maximization with $1/2-\varepsilon$ approximation guarantee to the optimum solution that makes no assumptions on the data stream.

There has also been new improvements in the running time of the standard greedy solution for solving SET-COVER (a special case of submodular maximization) when the data is large and disk resident \citep{cormode10}. More generally, \citet{badanidiyuru2014fast} and \citet{mirzasoleiman2015lazier} improve the running time of the greedy algorithm for maximizing a monotone submodular function by reducing the number of oracle calls to the objective function.  In a similar spirit,~\citet{wei2014fast} propose a multi-stage  framework for submodular maximization. In order to reduce the memory and %function evaluation, 
computation cost, they apply an approximate greedy procedure to maximize surrogate (proxy) submodular functions  instead of optimizing the target function at each stage.  
%Very recently, \cite{} used the idea of sub-sampling to develop the first linear-time algorithm for maximizing a general monotone submodular function subject to a cardinality constraint. 
 The above approaches are sequential in nature and it is not clear how to parallelize them.  However, they  can be naturally integrated into our distributed framework to achieve further acceleration. 
%However, it is not clear how we can implement a greedy solution in  distributed settings. 
\subsection{Scaling Up: Distributed Algorithms}  Recent work has focused on specific instances of submodular optimization in distributed settings.
Such scenarios often occur in large-scale graph mining problems where the data itself is too large to be stored on one machine.
%In particular, Tsourakakis et al.~propose heuristic algorithms
%for diameter estimation and triangle counting \cite{1,2,3}. 
%Chierichetti et al.~
In particular, \citet{Chierichetti2010} address the MAX-COVER problem and provide a $(1-1/e-\epsilon)$ approximation to the centralized algorithm at the cost of passing over the dataset many times. Their  result is further improved by 
%Blelloch et al.
\citet{Blelloch2011spaa}.  %Lattanzi et al.~
\cite{lattanzi11} address more general graph problems by introducing the idea of filtering, namely, reducing the size of the input in a distributed fashion
so that the resulting, much smaller, problem instance can be solved
on a single machine. This idea is, in spirit, similar to our distributed method \Alg.  In contrast,  we provide a more general framework, and characterize settings where performance competitive with the centralized setting can be obtained. The present version is a significant extension of our previous conference paper \citep{mirzasoleiman2013distributed}, providing theoretical guarantees for both monotone and non-monotone submodular maximization problems subject to more general types of constraints, including matroid and knapsack constraints (described in Section \ref{sec:constraints}), 
and additional empirical results (Section \ref{sec:experiments}).  
Parallel to our efforts \citep{mirzasoleiman2013distributed},~\cite{kumar2013fast} has taken the approach of adapting the sequential greedy algorithm to distributed settings. However, their method requires knowledge of the ratio between the largest and smallest marginal gains of the elements, and generally requires a non-constant (logarithmic) number of rounds. We provide empirical comparisons in Section~\ref{sec:comparision}.

\section{Submodular Maximization}\label{sec:problem}
In this section, we first review  submodular functions and how to greedily maximize them. We then describe the \textit{distributed submodular maximization} problem, the focus of this paper. Finally, we  discuss two naive approaches towards solving this problem.

\subsection{Greedy Submodular Maximization}
Suppose that we have a large dataset of images, e.g. the set of all images on the Web or an online image hosting website such as Flickr, and we wish to  retrieve a subset of images that best represents the visual appearance  of the dataset. Collectively, these images can be considered as \textit{exemplars} that \textit{summarize} the visual categories
%concepts 
of the dataset as shown in Fig. \ref{fig:img}.

\begin{figure}[t]
                \centering
                \includegraphics[width=1\textwidth]{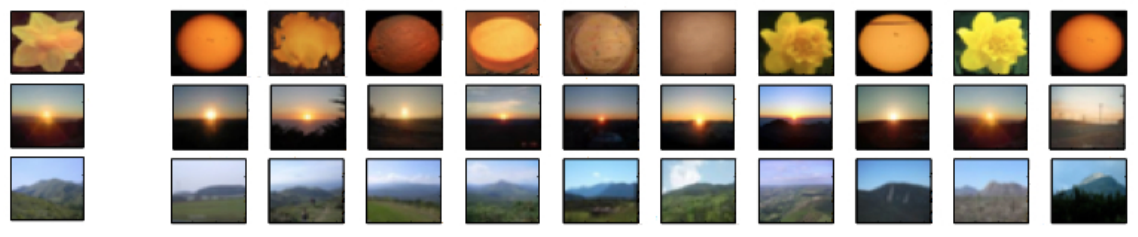}
                \captionsetup{format=hang}
                \caption{Cluster exemplars (left column) discovered by our distributed algorithm GreeDi described in Section \ref{sec:results} %\ak{mention dataset, ref. to experiments}
                applied to the Tiny Images dataset \citep{torralba200880}, and a set of representatives from each cluster.}
                \label{fig:img}
\end{figure}

One way to approach this problem is to formalize it as the \textit{$k$-medoid} problem. Given a set $V=\{e_1, e_2,\dots, e_n\}$ of images (called ground set) associated with a (not necessarily symmetric) dissimilarity function, we seek to select a subset $S\subseteq V$ of at most $k$  exemplars or cluster centers, and then assign each image in the dataset to its least dissimilar exemplar. %, which tries to minimize the dissimilarity between points in a dataset and their associated cluster centers. 
If an element $e \in V$ is assigned to exemplar $v\in S$, then the cost associated with $e$ is the dissimilarity between $e$ and $v$. %Let $l(e,S) = \min_{v \in S} l(e,v)$ denote the dissimilarity between a point $e \in V$ and a set $S \subseteq V$. 
The goal of the $k$-medoid problem is to choose exemplars that minimize the sum of dissimilarities between every data point $e \in V$ and its assigned cluster center.%  (cf., Section \ref{sec:examples} for more details), i.e.

Solving the $k$-medoid problem optimally is NP-hard, however, as we discuss in Section \ref{sec:examples}, we can transform this problem, and many other summarization tasks, to the problem of maximizing a monotone submodular function subject to a cardinality constraint

\begin{equation} \label{problem1}
\max_{S\subseteq V} f(S)  \quad \text{s.t.}  \quad |S|\leq k.
\end{equation} 

\noindent Submodular functions are set functions 
which satisfy 
%\textit{submodularity}, 
the following natural diminishing returns property.  

\begin{definition}[c.f., \citet{nemhauser78}]
A set function $f: 2^V \rightarrow \mathbb{R}$ is {\em submodular}, if for every $A\subseteq B \subseteq V$ and $e\in V\setminus B$
\[ f(A \cup \{ e \}) - f(A) \geq f(B \cup \{ e \}) - f(B). \]
%\[f(A \cup B) + F(A \cap B) \leq F(A) + F(B)\]
Furthermore, $f$ is called {\em monotone} iff for all $A \subseteq B \subseteq V$ it holds that $f(A) \leq f(B)$.
% and non-monotone otherwise. 
\end{definition}

\noindent %Moreover, it is well known (cf., \citet{nemhauser78}) that the above definition is equivalent to $F(A \cup \{ e \}) - F(A) \geq F(B \cup \{ e \}) - F(B) $, whenever $A\subseteq B$ and $e\notin V\setminus B$, which formalizes the intuition of diminishing returns. %about the reduction is loss by adding a cluster center. 
%That is, the return is always non-negative and selecting more elements does not hurt.
We will generally additionally require that $f$ is nonnegative, i.e., $f(A)\geq 0$ for all sets $A$.
%A monotone function $F$ is non-negative if %\textit{normalized}*, i.e. (
%$F(\emptyset) = 0$. 

Problem~\eqref{problem1} is NP-hard for many classes of submodular functions~\citep{feige98threshold}. A fundamental result by ~\cite{nemhauser78} establishes that a simple greedy algorithm that starts with the empty set  and iteratively augments the current solution with an element of maximum incremental value  %\ak{$F$ and $f$ are used inconsistently.  Change to $f$ globally. }
\begin{equation}\label{eq:argmax}
v^* = \arg\max_{v \in V \setminus A} f(A \cup \{ v \}),
\end{equation}
continuing until $k$ elements have been selected, is guaranteed to provide a constant factor approximation.

\begin{theorem} \textbf{\citep{nemhauser78}}
For any non-negative and monotone submodular function $f$, the greedy heuristic always produces a solution $\Agc{k}$ of size $k$ that achieves at least a constant factor $(1-1/e)$ of the optimal solution.
\[  f(\Agc{k}) \geq (1-1/e) \max_{|A| \leq k} f(A). \]
\end{theorem}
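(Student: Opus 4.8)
The plan is to track how quickly the greedy value approaches the optimum by establishing a one-step improvement guarantee and then unrolling the resulting recurrence. Write $\text{OPT} = \max_{|A|\le k} f(A)$, let $O$ be an optimal set attaining it, and let $\Agc{i}$ denote the greedy solution after $i$ elements have been selected (so $\Agc{0}=\emptyset$). The heart of the argument is the claim that every greedy step closes at least a $1/k$ fraction of the remaining gap:
\[ f(\Agc{i+1}) - f(\Agc{i}) \;\ge\; \frac{1}{k}\bigl(\text{OPT} - f(\Agc{i})\bigr). \]

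To prove this per-step bound, I would start from monotonicity to write $\text{OPT} = f(O) \le f(O \cup \Agc{i})$, and then expand $f(O \cup \Agc{i}) - f(\Agc{i})$ as a telescoping sum of marginal gains over the elements of $O \setminus \Agc{i}$. Submodularity lets me upper bound each such marginal gain, taken on top of the full current set, by its marginal gain on top of $\Agc{i}$ alone, i.e. by $f(\Agc{i}\cup\{v\}) - f(\Agc{i})$. Since the greedy rule in Equation \eqref{eq:argmax} selects the element of maximum marginal value, each of these is at most $f(\Agc{i+1}) - f(\Agc{i})$; and because $|O|\le k$ there are at most $k$ terms. Chaining these observations yields $\text{OPT} - f(\Agc{i}) \le k\bigl(f(\Agc{i+1}) - f(\Agc{i})\bigr)$, which rearranges to the claim.

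With the per-step bound in hand, I would set $\delta_i = \text{OPT} - f(\Agc{i})$ and rewrite the inequality as $\delta_{i+1} \le (1 - 1/k)\,\delta_i$. Iterating from $i=0$ to $i=k$ gives $\delta_k \le (1-1/k)^k\,\delta_0$. Using the elementary inequality $(1-1/k)^k \le e^{-1}$ together with $\delta_0 = \text{OPT} - f(\emptyset) \le \text{OPT}$ (where nonnegativity of $f$ is invoked), I conclude $\delta_k \le \text{OPT}/e$, that is, $f(\Agc{k}) \ge (1 - 1/e)\,\text{OPT}$, which is exactly the stated guarantee.

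The only real subtlety — and the step I would treat most carefully — is the per-step lemma, specifically the two-fold use of structure: monotonicity to inflate $O$ to $O \cup \Agc{i}$ without decreasing value, and submodularity (the diminishing-returns definition) to bound marginal gains taken on top of $\Agc{i}$ by the individual greedy marginals. Everything after that is a mechanical recurrence and the bound $(1-1/k)^k \le 1/e$. I should also make sure the telescoping decomposition is written out cleanly, since that is precisely where submodularity enters the argument.
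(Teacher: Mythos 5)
Your proof is correct: the per-step bound $f(\Agc{i+1})-f(\Agc{i}) \ge \frac{1}{k}\bigl(\mathrm{OPT}-f(\Agc{i})\bigr)$ derived via monotonicity, the telescoping expansion over $O\setminus\Agc{i}$, submodularity, and the greedy choice, followed by the recurrence and $(1-1/k)^k\le 1/e$, is exactly the classical Nemhauser--Wolsey--Fisher argument. The paper itself offers no proof of this statement (it is cited from Nemhauser et al., 1978), and your argument is the canonical one the citation refers to; note that the same recurrence, run for $q$ steps instead of $k$, immediately gives the generalization $f(\Agc{q})\ge(1-e^{-q/k})\max_{|A|\le k}f(A)$ that the paper states and later uses in its appendix.
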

\noindent This result can be easily extended to
$  f(\Agc{l}) \geq (1-e^{-l/k}) \max_{|A| \leq k} f(A) $, where $l$ and $k$ are two positive integers
\cite[see,][]{krause12survey}. 
%Moreover, for several classes of monotone submodular functions, it is known that  this result is the best polynomial time approximation guarantee that one can hope for unless $P=NP$ \citep{feige98threshold}.

\subsection{Distributed Submodular Maximization}\label{sec:dsmproblem}

In many today's applications where the size of the ground set $|V|=n$ %and cardinality constraint $k$ 
 is very large and cannot be stored on a single computer, running the standard greedy algorithm or its variants  \cite[e.g., lazy evaluations,][]{minoux78accelerated, leskovec07, mirzasoleiman2015lazier} in a centralized manner is infeasible. Hence, we seek a solution that is suitable for large-scale parallel computation. 
The greedy method described above is in general difficult to parallelize, since it is inherently sequential: at each
step, only the object with the highest marginal gain is chosen and every subsequent step  depends on the preceding ones.

Concretely, we consider the setting where the ground set $V$  is very large and cannot be handled on a single machine,  thus must be distributed among a set of $m$ machines. While there are several approaches towards parallel computation, in this paper we consider the following model that can be naturally implemented in  MapReduce. The computation proceeds in a sequence of rounds. In each round, the dataset is distributed to $m$ machines. Each machine $i$ carries out computations independently in parallel on its local data. After all machines finish, they synchronize by exchanging a limited amount of data (of size polynomial in $k$ and $m$, but independent of $n$). Hence, any distributed algorithm in this model must specify: 
1) how to distribute $V$ among the $m$ machines, 2) which algorithm should run on each machine, and 3) how to communicate and merge the resulting solutions. 

In particular, the distributed submodular maximization problem requires the specification of the above steps in order to implement an approach for submodular maximization. More precisely, given a monotone submodular function $f$, a cardinality constraint $k$, and a number of machines $m$, we wish to produce a solution $\Ad{m,k}$ of size $k$ such that $f(\Ad{m,k})$ is competitive with the optimal {\em centralized} solution $\max_{|A|\leq k, A\subseteq V} f(A)$.

 \subsection{Naive Approaches Towards Distributed Submodular Maximization}
One way to solve problem~\eqref{problem1} in a distributed fashion is as follows. 
%One way of  implementing the greedy algorithm in the computational model of Section~\ref{sec:dsmproblem}  would be the following. 
The dataset is first partitioned (randomly, or using some other strategy) onto the $m$ machines,
with $V_i$ representing the data allocated to machine $i$. We then proceed in $k$ rounds. In each round, all machines--in parallel--compute the marginal gains of all elements in their sets $V_i$. Next, they  communicate their candidate to a central processor, who identifies the globally best element, which is in turn communicated to the $m$ machines. This element is then taken into account for computing the marginal gains and selecting the next  elements. This algorithm (up to decisions on how break ties)  implements exactly the centralized greedy algorithm, and hence provides the same approximation guarantees on the quality of the solution.
Unfortunately, this approach requires synchronization after each of the $k$ rounds. In many applications, $k$ is quite large (e.g., tens of thousands), rendering this approach impractical for MapReduce style computations. 

An alternative approach for large $k$ would be to   greedily select $k/m$ elements independently  on each machine (without synchronization), and then merge them to obtain a solution of size $k$. This approach that requires only two rounds (as opposed to $k$), is much more communication efficient, and can be easily implemented using a single MapReduce stage. Unfortunately, many machines may select redundant elements, and thus the merged solution may suffer from diminishing returns. It is not hard to construct examples for which  this approach  produces solutions that are a factor $\Omega(m)$ worse than the centralized solution. 

In Section~\ref{sec:results}, we introduce an alternative protocol \Alg, which requires little communication, while at the same time yielding a solution competitive with the centralized one, under certain natural additional assumptions.

\subsection{Applications of Distributed Submodular Maximization}\label{sec:examples}

In this part, we discuss two concrete problem instances, with their  corresponding submodular objective functions $f$, where the size of the datasets often  requires a distributed solution for the underlying submodular maximization.
\subsubsection{Large-scale Nonparametric Learning} \label{sec:nonparam}
Nonparametric learning (i.e., learning of models whose complexity may depend on the dataset size $n$) are notoriously hard to scale to large datasets. A concrete instance of this problem arises from training Gaussian processes or performing MAP inference in Determinantal Point Processes, as considered below. Similar challenges arise in many related  learning methods, such as training kernel machines, when attempting to scale them to large data sets.

\textit{Active Set Selection in Sparse Gaussian Processes (GPs).} Formally a GP is a joint probability distribution over a (possibly infinite) set of random variables $\mathbf{X}_{V}$, indexed by the ground set $V$, such that every (finite) subset $\mathbf{X}_S$ for $S=\{e_1, \dots, e_s\}$ is distributed according to a multivariate normal distribution. More precisely,  we have $$P(\mathbf{X}_S = \mathbf{x}_S) = \mathcal{N}(\mathbf{X}_S; \mu_S, \Sigma_{S,S}),$$ where $\mu= (\mu_{e_1}, \dots, \mu_{e_s})$ and $\Sigma_{S,S} = [\mathcal{K}_{e_i,e_j}]$ are prior mean and  covariance matrix, respectively.
The covariance matrix is parametrized via a positive definite kernel $\mathcal{K(\cdot,\cdot)}$. As a concrete example, when elements of the ground set $V$ are embedded in a Euclidean space, a commonly used kernel  in practice is the squared exponential kernel defined as follows:
%
% For example, a commonly used kernel  in practice where elements of the ground set $V$ are embedded in a Euclidean space is the squared exponential kernel 
$$\mathcal{K}(e_i,e_j) = \exp(-||e_i-e_j||^2_2/h^2).$$
\noindent Gaussian processes are commonly used as priors for nonparametric regression. In GP regression, each data point $e\in V$ is considered a random variable. Upon observations $\mathbf{y}_A = \mathbf{x}_A+\mathbf{n}_A$ (where $\mathbf{n}_A$ is a vector of independent Gaussian noise with variance $\sigma^2$), the predictive distribution of a new data point $e\in V$ is a normal distribution $P(\mathbf{X}_e\mid \mathbf{y}_A) = \mathcal{N}(\mu_{e|A}, \Sigma^2_{e|A})$, where mean $\mu_{e|A}$ and variance $\sigma^2_{e|A}$ are given by
\begin{align}
& \mu_{e|A} = \mu_e + \Sigma_{e,A} (\Sigma_{A,A}+\sigma^2\mathbf{I})^{-1} (\mathbf{x}_A-\mu_A), \label{eq:GPmean}\\
& \sigma^2_{e|A} = \sigma^2_e - \Sigma_{e,A} (\Sigma_{A,A}+\sigma^2\mathbf{I})^{-1}\Sigma_{A,e}. \label{eq:GPvar}
\end{align}

Evaluating \eqref{eq:GPmean} and~\eqref{eq:GPvar} is computationally expensive as it requires solving a linear system of $|A|$ variables. Instead, most efficient approaches for making predictions in GPs rely on choosing a small--so called \textit{active}--set of data points. For instance, in the Informative Vector Machine (IVM) one seeks a set $S$ such that the \textit{information gain}, defined as 
$$f(S) = I (\mathbf{Y}_S;\mathbf{X}_{V}) = H(\mathbf{X}_{V}) - H(\mathbf{X}_{V}|\mathbf{Y}_{S}) = \frac{1}{2}\log\det(\mathbf{I}+\sigma^{-2}\Sigma_{S,S})$$ is maximized. 
\noindent It can be shown that this choice of $f$ is monotone submodular \citep{krause05near}. For medium-scale problems, the standard greedy algorithms provide good solutions. For massive data however, we need to resort to distributed algorithms. In Section~\ref{sec:experiments}, we will show how \Alg can choose near-optimal subsets out of a dataset of 45 million vectors.

\textit{Inference for Determinantal Point Processes.}
%\subsubsection{Inference for Determinantal Point Processes.}
%can be considered as another application of active set selection problem.
A very similar problem arises when performing inference in Determinantal Point Processes (DPPs). DPPs \citep{macchi1975coincidence} are distributions over subsets with a preference for diversity, i.e., there is a higher probability associated with sets  containing dissimilar elements. Formally, a point process $\mathcal{P}$ on a set of items $V = \{1,2, ..., N \}$ is a probability measure on $2^V$ (the set of all subsets of $V$). $\mathcal{P}$ is called {\em determinantal point process} if for every $S \subseteq V$ we have: 
$$\mathcal{P}(S) \propto \det (K_S),$$
where $K$ is a positive semidefinite kernel matrix, and $K_S \equiv [K_{ij}]_{i,j \in S}$, is the restriction of $K$ to the entries indexed by elements of $S$ (we adopt that $\det (K_{\emptyset}) = 1$).
%that concisely captures probabilistic mutual exclusion between items and $\det (K_{\emptyset}) = 1$.
The normalization constant can be computed explicitly from the following equation
$$\sum_S \det(K_S ) = \det(\mathbf{I} + K),$$
where $I$ is the $N \times N$ identity matrix. Intuitively, the kernel matrix determines which items are similar and therefore less likely to appear together. 

In order to find the most diverse and informative subset of size $k$, we need to find $\arg \max_{|S|\leq k} \det(K_S)$ which is NP-hard, as the total number of possible subsets is exponential \citep{ko1995exact}. However, the objective function is log-submodular, i.e. $f(S)=\log \det(K_S )$ is a submodular function \citep{kulesza2012determinantal}. Hence, MAP inference in large DPPs is another potential application of distributed submodular maximization.

\subsubsection{Large-scale Exemplar Based Clustering} 
%\rik{We have already described this. Either condense the previous version, or this version.}

Suppose we wish to select a set of exemplars, that best represent a massive dataset.
%These exemplars can be considered as the elements that \textit{summarize} the dataset.
%In such problems, the goal is to select a set of examples from the dataset that are representative of the dataset as a whole. 
One approach for finding such exemplars is solving the $k$-medoid problem \citep{kaufman2009finding}, which aims to minimize the sum of pairwise dissimilarities between exemplars and elements of the dataset. 
%It is known that $k$-medoid is  more robust to noise and outliers as compared to $k$-means since it minimizes a sum of pairwise dissimilarities instead of a sum of squared Euclidean distances.  
More precisely, let us assume that for the dataset $V$ we are given a nonnegative function $l:V\times V\rightarrow \mathbb{R}$ (not necessarily assumed symmetric, nor obeying the triangle inequality) such that  $l (\cdot, \cdot)$ encodes dissimilarity between elements of the underlying set $V$.  Then, the cost function for the $k$-medoid problem is: 

\begin{equation}\label{eq:loss_definition}
L(S) = \frac{1}{|V|}\sum_{v\in V} \min_{e \in S} l(e,\upsilon). %This loss function is not submodular. However, 
\end{equation}
Finding the subset $$S^* = \arg\min_{|S| \leq k} L(S)$$ of cardinality at most $k$ that minimizes the cost function \eqref{eq:loss_definition} is NP-hard. 
%Unfortunately, the problem
%\[ S^* = \arg \min_{|S| \leq k} L(S) \]
%is NP-hard to solve exactly.
However, by introducing an auxiliary element $e_0$, a.k.a. phantom exemplar, we can turn $L$ into a monotone submodular function \citep{krause2010budgeted} 
\begin{equation}\label{eq:loss_red}
f(S) = L(\{e_0\}) - L(S\cup \{e_0\}).
\end{equation}
\noindent In  words, $f$ measures the decrease in the loss associated with the  set $S$ versus the loss associated with just the auxiliary element.  
We begin with a phantom exemplar and try to find the active set that together with the phantom exemplar % has the largest 
reduces the value of our loss function more than any other set. Technically, any point $e_0$ that satisfies the following condition can be used as a phantom exemplar:
%For every point $e_0$ that satisfies the following property %\ak{this doesn't make sense; def. has to be independent of $S$.}
$$\max_{v' \in V}l(v,v') \leq l(v,e_0), \quad \forall v \in V \setminus S.$$
This condition ensures that once the distance between any $v \in V \setminus S$ and $e_0$ is greater than the maximum distance between elements in the dataset, then $L(S\cup \{e_0\}) = L(S)$. As a result, maximizing $f$ (a monotone submodular function)  is equivalent to minimizing the cost function $L$. %Hence, the standard greedy algorithm provides a very good solution. 
This problem becomes especially computationally challenging when we have a large dataset and we wish to extract a manageable-size set of exemplars, further motivating our distributed approach.

\subsubsection{Other Examples}
Numerous other real world problems in machine learning 
%such as outbreak detection as well as corpus summarization and speech summarization 
can be modeled as maximizing a monotone submodular function subject to appropriate constraints (e.g., cardinality, matroid, knapsack).  To name a few, specific applications that have been considered range from efficient content discovery for web crawlers and multi topic blog-watch \citep{Chierichetti2010}, over document summarization \citep{lin2011class} and speech data subset selection \citep{wei2013using}, to outbreak detection in social networks \citep{leskovec07}, online advertising and network routing \citep{de2003combinatorial}, revenue maximization in social networks \citep{hartline2008optimal}, and inferring network of influence \citep{gomez10}. In all such examples, the size of the dataset (e.g., number of webpages, size of the corpus, number of blogs in the blogosphere, number of nodes in social networks) is  massive, thus \Alg offers a scalable approach, in contrast to the standard greedy algorithm, for such problems.

\section{The {\sc \Alg} Approach for Distributed Submodular Maximization}\label{sec:results}
In this section we present our main results. We first provide our distributed solution \Alg~for  maximizing submodular functions under cardinality constraints. We then show how we can make use of the geometry of data inherent in many practical settings in order to obtain strong data-dependent bounds on the performance of our distributed algorithm.

\subsection{An Intractable, yet Communication Efficient Approach}
Before we introduce \Alg, we first consider an intractable, but communication--efficient two-round parallel protocol to illustrate the ideas. This approach,
%The road map towards finding a good approximate to the optimum solution of \eqref{problem1} in a distributed fashion is 
shown in Algorithm~\ref{alg:dist}, first distributes the ground set $V$ to $m$ machines. Each machine then finds the {\em optimal} solution, i.e., a set of cardinality at most $k$, that maximizes the value of $f$ in each partition. These solutions are then merged, and the optimal subset of cardinality $k$ is found in the combined set. We denote  this distributed solution by $f(\Ad{m,k})$.

As the optimum centralized solution $\Ac{k}$ achieves the maximum value of the submodular function, it is clear that $f(\Ac{k})\geq f(\Ad{m,k})$.
For the special case of selecting a single element $k=1$, we have  $f(\Ac{1}) = f(\Ad{m,1})$. Furthermore, for {\em modular} functions $f$ (i.e., those for which $f$ and $-f$ are both submodular), it is easy to see that the distributed scheme in fact returns the optimal centralized solution as well.
In general, however, there can be a gap between the distributed and the centralized solution. %provided by a (greedy) two-round distributed solution and the (greedy) centralized solution. 
Nonetheless, as the following theorem shows, this gap cannot be more than $1/\min(m,k)$. %when we choose $\kappa\geq k$. 
Furthermore, this result is tight. %MapReduce model. 
\begin{theorem}\label{th:gap}
Let $f$ be a monotone submodular function and let $k>0$. Then, $f(\Ad{m,k})) \geq \frac{1}{\min(m,k)} f(\Ac{k})$. In contrast, for any value of $m$ and $k$, there is a monotone submodular function $f$ such that $f(\Ac{k}) = \min(m,k) \cdot f(\Ad{m,k})$.
\end{theorem}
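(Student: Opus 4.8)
The statement has two halves, a worst-case lower bound on the distributed solution and a matching construction showing it cannot be improved; I would treat them separately. For the lower bound, write $O=\Ac{k}$ for the centralized optimum, let $V_1,\dots,V_m$ be the partition of the ground set, and set $O_i = O\cap V_i$. The one structural fact I will lean on is that each machine returns an \emph{optimal} set $A_i\subseteq V_i$ of size at most $k$; since $O_i\subseteq V_i$ and $|O_i|\le|O|\le k$, optimality gives $f(A_i)\ge f(O_i)$. Because every $A_i$ survives into the merged ground set and has size at most $k$, the final selection obeys $f(\Ad{m,k})\ge \max_i f(A_i)\ge \max_i f(O_i)$. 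From here I derive the two bounds whose minimum is the claim. First, submodularity together with $f\ge 0$ gives subadditivity, so $f(O)\le \sum_{i=1}^m f(O_i)\le m\,\max_i f(O_i)$, whence $f(\Ad{m,k})\ge f(O)/m$. Second, letting $v^\star=\arg\max_{v\in O} f(\{v\})$, submodularity gives $f(O)\le \sum_{v\in O} f(\{v\})\le k\,f(\{v^\star\})$; the machine holding $v^\star$ then has $f(A_i)\ge f(\{v^\star\})\ge f(O)/k$, so $f(\Ad{m,k})\ge f(O)/k$. Combining the two yields $f(\Ad{m,k})\ge f(O)/\min(m,k)$.

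For the tightness half I would exhibit, for each $m$ and $k$, a monotone submodular $f$ and a partition making these inequalities simultaneously tight. Set $t=\min(m,k)$ and take $f$ to be a coverage function on $t$ disjoint blocks of items $P_1,\dots,P_t$, each block containing exactly $k$ items. The ground set has two kinds of elements: $t$ ``heavy'' elements $o_1,\dots,o_t$, where $o_i$ covers all of $P_i$; and, for each $i$, a family of $k$ ``light'' elements, the $j$-th covering a single item of $P_i$. I place $o_i$ and the $k$ light elements of block $i$ on machine $i$ (the remaining $m-t$ machines may be left empty). The optimum is $O=\{o_1,\dots,o_t\}$, of size $t\le k$, covering all $tk$ items, so $f(\Ac{k})=tk$.

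The crux is what each machine does and why the merge is crippled. On machine $i$ every size-$k$ subset covers at most the $k$ items of $P_i$, and both ``all $k$ light elements'' and ``$o_i$ alone'' attain this maximum value $k$; under worst-case tie-breaking the machine returns the light elements and discards $o_i$. Consequently no heavy element reaches the merged set $B$, which then contains only singletons, each covering one item. Here lies the key asymmetry: the optimum packs $tk$ items into only $t\le k$ \emph{budget-efficient} elements, whereas $B$ offers only \emph{budget-inefficient} elements, so any size-$k$ selection from $B$ covers at most $k$ items and $f(\Ad{m,k})=k$. This gives exactly $f(\Ac{k})=t\,f(\Ad{m,k})=\min(m,k)\,f(\Ad{m,k})$.

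I expect the construction, not the lower bound, to be the main obstacle. The subtle point, and the reason naive attempts collapse to no gap, is that the per-machine optima are genuinely as valuable as the optimal pieces they replace, so hiding the optimum cannot rely on the machines failing to find value locally; submodularity forbids the ``useless alone, valuable together'' synergy one might wish to plant. The gap must instead be manufactured entirely through the cardinality constraint: the replacements reconstruct the same local coverage using many more elements, so that the budget $k$ starves the merged solution even though $B$'s total coverage is large. Verifying that this efficiency mismatch yields the factor \emph{exactly} $\min(m,k)$ rather than something smaller, and confirming that the centralized optimum is still realized by the heavy elements, are the steps I would check most carefully.
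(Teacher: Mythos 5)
Your proof is correct, and the lower-bound half is essentially identical to the paper's argument: the paper proves the same two inequalities as separate lemmas --- $\max_i f(\Aic[k]) \geq \frac{1}{m} f(\Ac{k})$ by expanding $f(\Ac{k})$ telescopically over the pieces $B_i = \Ac{k} \cap V_i$ and applying submodularity (your subadditivity step), and $\max_i f(\Aic[k]) \geq \frac{1}{k} f(\Ac{k})$ via the best singleton of the optimum and the machine that holds it --- and then combines them through $f(\Ad{m,k}) \geq \max_i f(\Aic[k])$, exactly as you do. For the tightness half you take a genuinely different, though structurally parallel, route: the paper's example is information-theoretic rather than coverage-based. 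There, machine $i$ holds $k$ independent unbiased Bernoulli variables $X_{i,1},\dots,X_{i,k}$ plus the single ``aggregate'' element $Y_i = (X_{i,1},\dots,X_{i,k})$, and $f$ is joint entropy; on each machine $Y_i$ ties with the $k$ unit variables (both give entropy $k$), adversarial tie-breaking discards $Y_i$, so the merged set contains only unit-entropy elements and the distributed value is $k$, while the centralized solution takes the $Y_i$'s and attains $k\min(m,k)$. Your heavy element $o_i$ plays precisely the role of $Y_i$ and your light elements play the role of the $X_{i,j}$, so the two constructions are isomorphic; notably, both rest on worst-case tie-breaking, which the paper invokes explicitly (``If we assume $\Aic[k] = \{X_{i,1},\ldots,X_{i,k}\}$\,''). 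What your version buys is elementarity --- a finite coverage function rather than entropy of random variables --- and a uniform treatment of both regimes via $t=\min(m,k)$ blocks with empty leftover machines, whereas the paper keeps all $m$ machines populated and lets the centralized solution use only $k$ of the $Y_i$'s when $m>k$. Your closing worry is resolved affirmatively: the efficiency mismatch does give exactly the factor $\min(m,k)$, matching the paper.
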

The proof of all the theorems can be found in the appendix. The above theorem fully characterizes the performance of Algorithm~\ref{alg:dist} in terms of the best centralized solution.  
In practice, we cannot run Algorithm~\ref{alg:dist}, since there is no efficient way to  identify the optimum subset $A_i^c[k]$ in  set $V_i$, unless P=NP. 
In the following, we introduce an efficient distributed approximation -- \Alg. We will further show, that under some additional assumptions, much stronger guarantees can be obtained.

\begin{figure}[t!]
\vspace{-.5cm}
%\begin{minipage}%{.5\textwidth}
\begin{algorithm}[H]
\caption{Inefficient Distributed Submodular Maximization \vspace{.1cm}} 
\label{alg:dist}
\begin{algorithmic}[1]
\REQUIRE{Set $V$, $\# $of partitions $m$, constraints  $k$.}
\ENSURE{Set $\Ad{m,k}$.}
\STATE Partition $V$ into $m$ sets $V_1, V_2, \dots, V_m$.
\STATE In each partition $V_i$ find the optimum set $A^c_i[k]$ of cardinality $k$.
\STATE Merge the resulting sets: $B = \cup_{i=1}^m A^c_i[k].$
\STATE Find the optimum set of cardinality $k$ in $B$. %, to find a solution $\Ad{m,k}$. 
Output this solution $\Ad{m,k}$. % $\Ad{m,k} = \arg \max \{ \Ad{m,k}, \arg \max(A^c_i[k]) \}$.
%\vspace{.45cm}
\end{algorithmic}
\end{algorithm}
%\end{minipage}
\hspace{0.05cm}
\end{figure}

\subsection{Our {\sc \Alg} Approximation}

Our  efficient distributed method \Alg is shown in Algorithm~\ref{alg:gdist}. It parallels the intractable Algorithm~\ref{alg:dist}, but replaces the selection of optimal subsets, i.e., $\Aic[k]$, by greedy solutions $\Aigc[k]$.  Due to the approximate nature of the greedy algorithm, we allow it  to pick sets slightly larger than $k$.
More precisely, \Alg is a two-round algorithm that takes the ground set $V$, the number of partitions $m$, and the cardinality constraint $\kappa$. %$l$ (final solution) and $\kappa$ (intermediate outputs). 
It first distributes the ground set over $m$ machines. Then each machine separately runs the standard greedy algorithm by sequentially finding an element $e\in V_i$ that maximizes the discrete derivative  (\ref{eq:argmax}). Each machine $i$--in parallel--continues adding elements  to the set $\Aigc[\cdot]$  until it reaches $\kappa$ elements. We define  $\Oa{\kappa}$ to be the set with the maximum value among $\{A^{\text{gc}}_1[\kappa], A^{\text{gc}}_2[\kappa], \dots, A^{\text{gc}}_m[\kappa]\}$. Then the solutions are merged, i.e., $B = \cup_{i=1}^m \Aigc[\kappa]$, and another round of greedy selection is performed over $B$ %but this time 
until $\kappa$ elements are selected. We denote this solution by $\Ob{\kappa}$. The final distributed solution with parameters $m$ and $\kappa$, denoted by $\Agd{m,\kappa}$, is the set with a higher value   between  $\Oa{\kappa}$ and $\Ob{\kappa}$ (\textit{c.f.}, Figure \ref{fig:GreeDi} shows \Alg schematically).  The following result  parallels Theorem~\ref{th:gap}.
%
%greedy solution for parameters $m$ and $\kappa$% and $l$, $\Agd{m,\kappa,l}$
%, $\Agd{m,\kappa}$, is the best among $\Oa{\kappa}$ and $\Ob{\kappa}$ (\textit{c.f.}, Figure \ref{fig:GreeDi}). 

\begin{figure}[t!]
\vspace{-.5cm}
%\begin{minipage}%{.51\textwidth}
\begin{algorithm}[H]
\caption{Greedy Distributed Submodular Maximization (\Alg)} 
\label{alg:gdist}
\begin{algorithmic}[1]
\REQUIRE{Set $V$, $\# $of partitions $m$, constraints $\kappa$.}
\ENSURE{Set $\Agd{m,\kappa}$.}
\STATE Partition $V$ into $m$ sets $V_1, V_2, \dots, V_m$ {(arbitrarily or at random)}.
\STATE Run the standard greedy algorithm on each set $V_i$ to find a solution $\Aigc[\kappa]$. 
\STATE  Find $\Oa{\kappa} = \arg\max_A\{F(A):A\in\{A^{\text{gc}}_1[\kappa],\dots,A^{\text{gc}}_m[\kappa]\}\}$ %F(A^{gc}_i[\kappa])$.
\STATE Merge the resulting sets: $B = \cup_{i=1}^m \Aigc[\kappa].$
\STATE Run the standard greedy algorithm on $B$ to find a solution $\Ob{\kappa}$.% $\Agd{m,\kappa,l}$. 
\STATE Return $\Agd{m,\kappa} = \arg\max_A \{F(A): A\in \{\Oa{\kappa}, \Ob{\kappa}\}\}$.%\{ \Agd{m,\kappa,l}, \arg \max(A^{gc}_i[\kappa]) \}$.
\end{algorithmic}
\end{algorithm}
%\end{minipage}
\hspace{0.05cm}
\end{figure}

\begin{theorem}\label{th:ggreedy}
Let $f$ be a monotone submodular function and  %$l\geq 
$\kappa \geq k$. Then $$f(\Agd{m,\kappa}) \geq \frac{(1-e^{-\kappa/k})}{\min(m,k)} f(\Ac{k}).$$ 
\end{theorem}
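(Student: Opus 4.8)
The plan is to observe that the claimed guarantee is already attained by the best single-machine greedy solution $\Oa{\kappa}$, so that, since $f(\Agd{m,\kappa}) \geq f(\Oa{\kappa})$ by construction (the algorithm returns the better of $\Oa{\kappa}$ and $\Ob{\kappa}$), it suffices to lower-bound $f(\Oa{\kappa})$. Concretely, I would compose two ingredients: (i) the combinatorial pigeonhole argument underlying Theorem~\ref{th:gap}, which controls how much value any single partition can be forced to lose relative to the centralized optimum, and (ii) the extended greedy approximation guarantee applied \emph{locally} on each machine.

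For ingredient (i), consider how the optimal centralized set $\Ac{k}$ splits across the partitions, i.e.\ the pieces $\Ac{k} \cap V_i$. Since $f$ is monotone submodular it is subadditive, so
\[
f(\Ac{k}) \leq \sum_{i=1}^{m} f(\Ac{k} \cap V_i).
\]
Because $|\Ac{k}| \leq k$, at most $\min(m,k)$ of the pieces $\Ac{k} \cap V_i$ are nonempty (if $k \le m$ the $k$ elements touch at most $k$ machines; otherwise there are only $m$ machines), so the right-hand side is at most $\min(m,k) \cdot \max_i f(\Ac{k} \cap V_i)$. Finally each $\Ac{k} \cap V_i$ is a feasible size-$\leq k$ subset of $V_i$, hence $f(\Ac{k} \cap V_i) \leq f(\Aic[k])$ by optimality of $\Aic[k]$ within $V_i$. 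Chaining these gives
\[
\max_{i} f(\Aic[k]) \geq \frac{1}{\min(m,k)}\, f(\Ac{k}).
\]

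For ingredient (ii), note that on machine $i$ the greedy algorithm runs for $\kappa$ steps, so by the extended Nemhauser bound stated above (with $l = \kappa$, comparing against the best size-$k$ subset of $V_i$, whose value is $f(\Aic[k])$),
\[
f(\Aigc[\kappa]) \geq \left(1 - e^{-\kappa/k}\right) f(\Aic[k]).
\]
Taking the maximum over machines and using that $\Oa{\kappa}$ is the best local greedy solution yields $f(\Oa{\kappa}) = \max_i f(\Aigc[\kappa]) \geq (1 - e^{-\kappa/k}) \max_i f(\Aic[k])$. Substituting the bound from ingredient (i) and invoking $f(\Agd{m,\kappa}) \geq f(\Oa{\kappa})$ then gives the claim.

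I do not anticipate a serious obstacle, since the proof is essentially the composition of the two guarantees above. The one point requiring care is ingredient (i): making precise that $\Ac{k}$ touches at most $\min(m,k)$ machines, so the subadditivity bound produces $\min(m,k)$ rather than $m$ in the denominator, together with the correct local optimality step $f(\Ac{k}\cap V_i)\le f(\Aic[k])$. It is worth emphasizing that this worst-case bound is driven \emph{entirely} by the single best partition $\Oa{\kappa}$; the second greedy round producing $\Ob{\kappa}$ is not needed here and instead becomes important for the sharper, data-dependent bounds developed later.
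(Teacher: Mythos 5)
Your proof is correct, and its overall architecture matches the paper's: lower-bound the output by the best first-round solution $\Oa{\kappa}$, apply the extended greedy guarantee (the paper's Lemma~\ref{lem:ggeneral}) on each machine against its local optimum $\Aic[k]$, and then bound $\max_i f(\Aic[k])$ against $f(\Ac{k})$ by a factor $1/\min(m,k)$; like the paper, you correctly note that the second round plays no role in this worst-case bound. Where you genuinely differ is in ingredient (i). The paper establishes the $1/\min(m,k)$ factor as the better of two separate bounds: Lemma~\ref{lemm_m} gives $1/m$ by decomposing $\Ac{k}$ across all machines, and Lemma~\ref{lemm_k} gives $1/k$ by subadditivity over singletons plus locating the machine holding the single best element. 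You instead get $1/\min(m,k)$ in one shot by a pigeonhole count: at most $\min(m,k)$ of the pieces $\Ac{k}\cap V_i$ are nonempty. This unification is cleaner and is essentially the form needed for the generalization to arbitrary constraints in Theorem~\ref{th:ggreedy_const}. One caveat: your step $\sum_{i=1}^m f(\Ac{k}\cap V_i)\leq \min(m,k)\,\max_i f(\Ac{k}\cap V_i)$ silently treats empty intersections as contributing zero, i.e., it assumes the normalization $f(\emptyset)=0$, while the paper only assumes $f\geq 0$ (for a constant positive $f$ with $m>k$ your inequality literally fails). The fix is immediate: telescope only over the nonempty pieces $B_{i_1},\dots,B_{i_t}$, whose union is exactly $\Ac{k}$ and whose number $t$ is at most $\min(m,k)$, which gives $f(\Ac{k})\leq\sum_{j=1}^{t} f(B_{i_j})\leq \min(m,k)\max_i f(\Ac{k}\cap V_i)$ using only $f(\emptyset)\geq 0$. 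The paper's two-lemma route sidesteps this issue entirely, since its $1/k$ argument never sums over empty sets; that robustness is what the less elegant split buys.
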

For the special case of $\kappa=k$ the result of \ref{th:ggreedy} simplifies to $f(\Agd{m,\kappa}) \geq \frac{(1-1/e)}{\min(m,k)} f(\Ac{k})$. Moreover, it is straightforward to generalize \Alg to multiple rounds (i.e., more than two) for very large datasets.

In light of Theorem~\ref{th:gap}, one can expect that in general it is impossible to eliminate the dependency of the distributed solution on $\min(k,m)$\footnote{It has been very recently shown by \cite{mirzasoleiman2015distributed} that the tightest dependency is $\Theta(\sqrt{\min(m,k)})$.}. However, as we show in the sequel, in many practical settings, the ground set $V$ exhibits rich geometrical structure that can be used to obtain stronger guarantees.

\begin{figure}[t]
                \centering
                \includegraphics[width=1\textwidth]{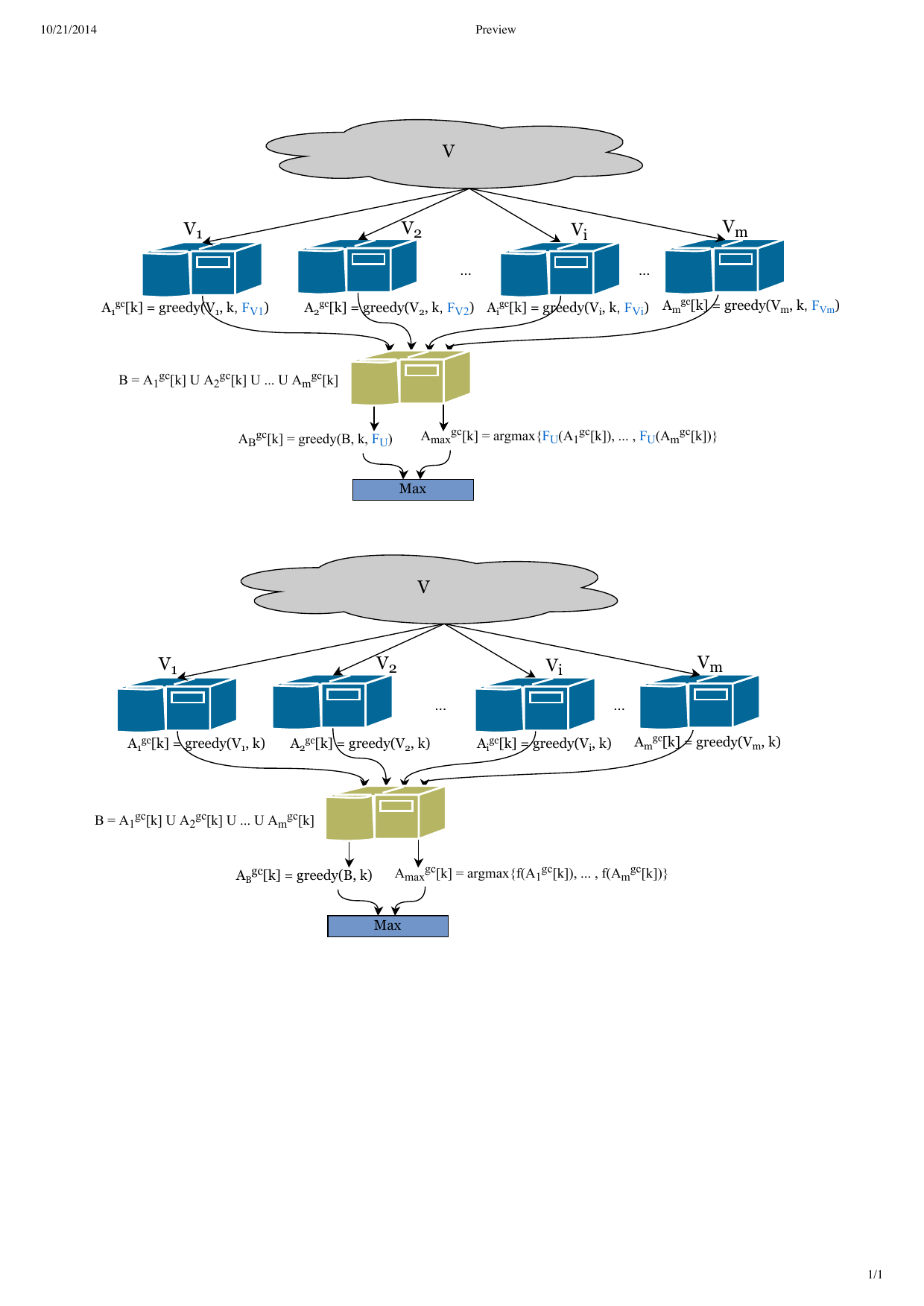}
                \caption{Illustration of our two-round algorithm \Alg
                }
                \label{fig:GreeDi}
\end{figure}

\subsection{Performance on Datasets with Geometric Structure}\label{sec:geo}

In practice, we can hope to do much better than the worst case bounds shown previously by exploiting underlying structure often present in real data and important set functions. In this part, we assume that a metric $d:V\times V\to \mathbb{R}$ exists on the data elements, and analyze performance of the algorithm on functions that vary slowly with changes in the input. We refer to these as {\em Lipschitz functions}:
\begin{definition} Let $\lambda>0$. A set function $f:2^V \to \reals$ is $\lambda$-\textit{Lipschitz} w.r.t.~metric $d$ on $V$, if for any integer $k$, any equal sized sets $S=\{e_1, e_2,\dots, e_k\}\subseteq V$ and $S^\prime=\{e^\prime_1, e^\prime_2,\dots,e^\prime_k\}\subseteq V$ and any matching of elements: $M=\{(e_1, e^\prime_1), (e_2, e^\prime_2)\dots, (e_k, e^\prime_k)\}$, the difference between $f(S)$ and $f(S^\prime)$ is bounded by: % the total of distances between respective elements:
\begin{equation}
\dst\left| f(S) - f(S^\prime)\right| \leq \lambda\sum_i d(e_i, e^\prime_i).
\end{equation}\label{eq:lip}
\end{definition}
%It is easy to see that 
\noindent We can show that the objective functions from both examples in Section~\ref{sec:examples} are $\lambda$-Lipschitz for suitable kernels/distance functions:

\begin{proposition}\label{prop:lipschitz-active}
Suppose that the covariance matrix of a Gaussian process is parametrized via a positive definite kernel $\mathcal{K}: V \times V \rightarrow \mathbb{R}$ which is Lipschitz continuous with respect to metric $d: V \times V \rightarrow \mathbb{R}$ with constant $\mathcal{L}$, i.e., for any triple of points $x_1, x_2, x_3 \in V$, we have $| \mathcal{K}(x_1, x_3) - \mathcal{K}(x_2, x_3) | \leq \mathcal{L} d(x_1, x_2)$.
Then, the mutual information $I(\mathbf{Y}_{S}; \mathbf{X}_{V}) = \frac{1}{2} \log \det(\mathbf{I} + K)$
%\frac{1}{2} \log ((2 \pi e)^n  |\Sigma_{V \setminus S | S}| )$
for the Gaussian process is $\lambda$-Lipschitz  with  %$\lambda = \mathcal{L}  k$
$\lambda = \mathcal{L} k^3$, where $k$ is the number of elements in the selected subset $S$. 
\end{proposition}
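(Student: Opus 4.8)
The plan is to reduce the statement to a first-order estimate on the smooth map $M\mapsto \tfrac12\log\det M$ taken along a straight-line path between the two relevant matrices, and then to convert the entrywise perturbation of the covariance matrix into the metric distances $d(e_i,e'_i)$. Write $K=\sigma^{-2}\Sigma_{S,S}$ and $K'=\sigma^{-2}\Sigma_{S',S'}$, where $\Sigma_{S,S}=[\mathcal{K}(e_i,e_j)]$ and $\Sigma_{S',S'}=[\mathcal{K}(e'_i,e'_j)]$, so that $f(S)=\tfrac12\log\det(\mathbf I+K)$ and $f(S')=\tfrac12\log\det(\mathbf I+K')$. First I would introduce the interpolation $M(t)=\mathbf I+(1-t)K'+tK$ for $t\in[0,1]$. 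The key structural point is that $K$ and $K'$ are positive semidefinite (principal submatrices of a PSD kernel matrix, scaled by $\sigma^{-2}>0$), so $(1-t)K'+tK\succeq 0$ and hence $M(t)\succeq\mathbf I$ for \emph{all} $t$; this gives the uniform spectral bound $\|M(t)^{-1}\|_2\le 1$, and in particular $|(M(t)^{-1})_{ab}|\le 1$ for every entry. Using the identity $\tfrac{d}{dt}\log\det M(t)=\operatorname{tr}\!\big(M(t)^{-1}\dot M(t)\big)$ with $\dot M(t)=K-K'$, the fundamental theorem of calculus yields
\[
f(S)-f(S') = \tfrac12\int_0^1 \operatorname{tr}\!\big(M(t)^{-1}(K-K')\big)\,dt .
\]

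Next I would bound the integrand entrywise. Writing $B=K-K'$ and using symmetry of $M(t)^{-1}$ and $B$, the trace expands as $\operatorname{tr}(M(t)^{-1}B)=\sum_{a,b}(M(t)^{-1})_{ab}B_{ab}$, so $|\operatorname{tr}(M(t)^{-1}B)|\le\sum_{a,b}|(M(t)^{-1})_{ab}|\,|B_{ab}|\le\sum_{a,b}|B_{ab}|$ by the entry bound above. It then remains to control $|B_{ab}|=\sigma^{-2}|\mathcal{K}(e_a,e_b)-\mathcal{K}(e'_a,e'_b)|$, where \emph{both} kernel arguments change. I would split this over the two arguments and apply the kernel's Lipschitz hypothesis to each:
\[
|\mathcal{K}(e_a,e_b)-\mathcal{K}(e'_a,e'_b)| \le \mathcal{L}\,d(e_a,e'_a) + \mathcal{L}\,d(e_b,e'_b).
\]
Summing over the $k\times k$ entries collapses each fixed distance $d(e_i,e'_i)$ into $k$ copies, giving $\sum_{a,b}|B_{ab}|\le 2\sigma^{-2}\mathcal{L}\,k\sum_i d(e_i,e'_i)$, and therefore $|f(S)-f(S')|\le \sigma^{-2}\mathcal{L}\,k\sum_i d(e_i,e'_i)$. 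Absorbing the noise factor under the standard normalization, this already certifies $\lambda$-Lipschitzness; the looser constant $\lambda=\mathcal{L}k^3$ stated in the proposition is recovered if the sharp entrywise trace inequality is replaced by cruder spectral/Frobenius-norm estimates, and since any overestimate of the Lipschitz constant still certifies $\lambda$-Lipschitzness, it is a valid bound as well.

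The main obstacle I anticipate is the uniform control of $M(t)^{-1}$ along the whole path. Because $\log\det$ is genuinely nonlinear in the matrix entries, a naive entrywise perturbation of $\det(\mathbf I+K)$ does not close, and one really needs the global spectral bound $M(t)\succeq\mathbf I$ to guarantee $\|M(t)^{-1}\|_2\le 1$ \emph{uniformly} in $t$; the PSD structure of covariance submatrices is exactly what supplies this. A subtlety worth flagging is that the interpolation must be carried out at the level of matrices, where convexity of the PSD cone is immediate, rather than by moving the underlying points, since the interpolated matrices need not be realizable as kernels of any actual point configuration. The only other point needing care is the simultaneous perturbation of both kernel arguments, which the triangle-inequality split above resolves at the cost of the extra factor of $k$ (and, under cruder bounds, the higher polynomial-in-$k$ overhead).
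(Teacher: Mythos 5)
Your proof is correct, and it takes a genuinely different route from the paper's. The paper swaps one element at a time and telescopes over $k$ swaps: each swap produces a perturbation $\Delta K$ supported on a single row and column, and the paper bounds $\lambda_{\max}\bigl(\Delta K(\mathbf{I}+K)^{-1}\bigr) \leq 2k^2\mathcal{L}\,d(e_1,e_1')$ via eigenvector expansions and Cauchy--Schwarz, then bounds the determinant by $(1+2k^2\mathcal{L}d)^k$ and uses $\log(1+x)\leq x$; the three factors of $k$ accumulate to give $\lambda = \mathcal{L}k^3$. You instead replace all points at once, interpolate linearly in matrix space, and integrate Jacobi's formula along the path; the crucial observation that $(1-t)K'+tK$ is PSD (so $M(t)\succeq \mathbf{I}$ and $|(M(t)^{-1})_{ab}|\leq 1$ uniformly) plus the entrywise trace bound replaces all of the paper's spectral estimates. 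Your approach is cleaner and yields the sharper constant $\lambda = \mathcal{L}k$ (which certifies the stated $\mathcal{L}k^3$ a fortiori, as you note), and it also sidesteps some rough edges in the paper's argument, such as bounding the diagonal entry of $\Delta K$ (where both arguments change, costing a factor $2$ the paper drops) and treating eigenvalues of the non-symmetric product $\Delta K(\mathbf{I}+K)^{-1}$. Two small points you should make explicit: the split of $|\mathcal{K}(e_a,e_b)-\mathcal{K}(e'_a,e'_b)|$ over the two arguments needs the symmetry $\mathcal{K}(x,y)=\mathcal{K}(y,x)$ (automatic for a GP covariance kernel, but the Lipschitz hypothesis is stated only for the first argument), and the paper's version of the proposition silently sets $\sigma=1$, so your explicit $\sigma^{-2}$ factor is a refinement rather than a discrepancy.
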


\begin{proposition}\label{prop:lipschitz-exemplar}

Let  $d:V\times V\to \mathbb{R}$ be a metric on the elements of the dataset. Furthermore, let  $l : V \times V \rightarrow \mathbb{R}$ encode the dissimilarity between elements of the underlying set $V$. 
%$($the dissimilarity $l(\cdot,\cdot)$ is not necessarily equal to the distance $d(\cdot,\cdot)$ $)$. 
Then for $l = d^\alpha$, $\alpha \geq 1$ the loss function $L(S) = \frac{1}{|V|}\sum_{v\in V} \min_{e \in S} l(e,\upsilon)$ (and hence also the corresponding submodular utility function $f$) is $\lambda$-Lipschitz with $\lambda=\alpha R^{\alpha-1}$, where $R$ 
%is the maximum distance between elements of the dataset.
 is the diameter of the ball encompassing elements of the dataset in the metric space.
In particular, for the $k$-medoid problem, which minimizes the loss function over all clusters with respect to $l=d$, we have $\lambda=1$, and for the $k$-means problem, which minimizes the loss function over all clusters with respect to $l=d^2$, we have $\lambda=2R$.
\end{proposition}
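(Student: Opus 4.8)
The plan is to prove the claim first for the loss function $L$ and then transfer it to $f$ via the phantom-exemplar identity~\eqref{eq:loss_red}. Fix two equal-sized sets $S=\{e_1,\dots,e_k\}$ and $S'=\{e'_1,\dots,e'_k\}$ together with the matching $M=\{(e_i,e'_i)\}$. Since $L$ is an average over $v\in V$ of the per-point assignment cost $g_S(v) := \min_{e\in S} l(e,v)$, I would first establish a pointwise bound $g_{S'}(v) - g_S(v) \le \alpha R^{\alpha-1}\sum_i d(e_i,e'_i)$ for every $v$, and then average over $v$ and appeal to symmetry in $S,S'$ to conclude $|L(S)-L(S')|\le \alpha R^{\alpha-1}\sum_i d(e_i,e'_i)$.

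The heart of the argument is this pointwise bound. First I would let $e_{i^\star}$ be the exemplar in $S$ closest to $v$, so that $g_S(v)=l(e_{i^\star},v)$, and compare against the \emph{matched} partner $e'_{i^\star}\in S'$, which need not be closest to $v$ in $S'$ but nonetheless satisfies $g_{S'}(v)\le l(e'_{i^\star},v)$. Hence $g_{S'}(v)-g_S(v)\le l(e'_{i^\star},v)-l(e_{i^\star},v) = d(e'_{i^\star},v)^\alpha - d(e_{i^\star},v)^\alpha$. Writing $a=d(e_{i^\star},v)$ and $b=d(e'_{i^\star},v)$, if $b\le a$ the left side is nonpositive and the bound is trivial; otherwise I would apply the mean value theorem to $t\mapsto t^\alpha$ to get $b^\alpha-a^\alpha=\alpha\,\xi^{\alpha-1}(b-a)$ for some $\xi\in(a,b)$. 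Two facts then finish the estimate: since both exemplars lie in the data ball of diameter $R$ we have $\xi\le b\le R$, so $\xi^{\alpha-1}\le R^{\alpha-1}$ (this is exactly where $\alpha\ge1$ is used, so that $t^{\alpha-1}$ is nondecreasing); and by the triangle inequality $b-a = d(e'_{i^\star},v)-d(e_{i^\star},v)\le d(e_{i^\star},e'_{i^\star})\le \sum_i d(e_i,e'_i)$. Combining gives the pointwise bound, and averaging over $v$ yields the claim for $L$.

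To transfer the bound to $f$, I would use $f(S)=L(\{e_0\})-L(S\cup\{e_0\})$, so that $|f(S)-f(S')| = |L(S\cup\{e_0\})-L(S'\cup\{e_0\})|$. The sets $S\cup\{e_0\}$ and $S'\cup\{e_0\}$ are equal-sized and matched by $M\cup\{(e_0,e_0)\}$, and since $d(e_0,e_0)=0$ the extra term contributes nothing; moreover, whenever the phantom exemplar realizes the minimum for some $v$ the corresponding pointwise difference is $\le 0$ (as $e_0$ is matched to itself), so no distance to $e_0$ ever enters the $R$-bound. Applying the $L$-bound to the augmented sets therefore gives $|f(S)-f(S')|\le \alpha R^{\alpha-1}\sum_i d(e_i,e'_i)$. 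The two special cases are then immediate: $\alpha=1$ gives $\lambda=R^0=1$ for $k$-medoid, and $\alpha=2$ gives $\lambda=2R^{1}=2R$ for $k$-means.

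The step I expect to require the most care is the pointwise comparison via the matching: the assignment cost is a minimum over the whole set, so one cannot compare $S$ and $S'$ term by term directly, and the key idea is to relax $g_{S'}(v)$ upward to the cost of the matched partner $e'_{i^\star}$ rather than its true nearest neighbor. The only other place demanding attention is confirming that the diameter $R$ (together with $\alpha\ge1$) legitimately bounds $\xi^{\alpha-1}$ even after introducing $e_0$, which the self-matching of the phantom exemplar handles cleanly.
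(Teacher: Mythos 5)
Your proof is correct, but it takes a genuinely different route from the paper's. The paper establishes the bound by a hybrid, swap-one-exemplar-at-a-time argument: it passes from $S$ to $S'$ through intermediate sets $S_i=\{e'_1,\dots,e'_i,e_{i+1},\dots,e_k\}$, performs a four-case analysis of how each data point's assignment can change under a single swap, concludes $|L(S_{i-1})-L(S_i)|\le \alpha R^{\alpha-1}\,d(e_i,e'_i)$, and telescopes over the $k$ swaps; the elementary estimate on $|a^\alpha-b^\alpha|$ is obtained there from the factorization $a^\alpha-b^\alpha=(a-b)(a^{\alpha-1}+a^{\alpha-2}b+\cdots+b^{\alpha-1})$, which is literally valid only for integer $\alpha$. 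You instead compare $S$ and $S'$ in one shot: for each $v$ you upper-bound the minimum over $S'$ by the cost of the matched partner of $v$'s nearest exemplar in $S$, obtain a one-sided pointwise bound, and then symmetrize in $S$ and $S'$. This buys two things: your \emph{relax to the matched partner} step absorbs the paper's case analysis in a single line, and your mean-value-theorem estimate $b^\alpha-a^\alpha=\alpha\xi^{\alpha-1}(b-a)$ with $\xi\le R$ is rigorous for every real $\alpha\ge 1$, not just integers, so it matches the generality actually claimed in the statement. What the paper's version buys in exchange is the slightly finer per-swap estimate and a more direct treatment of the phantom exemplar: it invokes the defining property of $e_0$ to get $L(S\cup\{e_0\})=L(S)$, making the transfer to $f$ via \eqref{eq:loss_red} immediate, whereas you re-run the matching argument on the augmented sets with $e_0$ matched to itself --- which also works (and, as you note, does not even require the phantom condition, only that the self-matched pair contributes zero distance and never forces a distance to $e_0$ into the $R$-bound).
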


Beyond Lipschitz-continuity, many practical instances of submodular maximization can be expected to satisfy a natural \textit{density} condition. Concretely, whenever we consider a representative set (i.e., optimal solution to the submodular maximization problem), we expect that any of its constituent elements has potential candidates for replacement in the ground set. For example, in our exemplar-based clustering application, we expect that cluster centers are not isolated points, but have many almost equally representative points close by. Formally, 
for any element $v\in V$, we define its $\alpha${\em-neighborhood} as the set of elements in $V$ within  distance $\alpha$ from $v$ (i.e., $\alpha$-close to $v$): $$N_\alpha (v) = \{w:d(v,w)\leq \alpha\}.$$ 
By $\lambda$-Lipschitz-continuity, it must hold that if we replace element $v$ in set $S$ by an $\alpha$-close element $v'$ (i.e., $v'\in N_\alpha(v)$) to get a new set $S'$ of equal size, it must hold that $|f(S)-f(S')|\leq \alpha \lambda$.

As described earlier, our algorithm \Alg~partitions $V$ into sets $V_1, V_2,\dots V_m$ for parallel processing. If in addition we assume that elements are assigned uniformly at random to  different machines, $\alpha$-neighborhoods are sufficiently dense, and the submodular function is Lipschitz continuous, then \Alg is guaranteed to produce a solution close to the centralized one. More formally, we have the following theorem.
\begin{theorem}\label{thm:alg-neighborhoods}
Under the conditions that 1) elements are assigned uniformly at random to $m$ machines, 2) for each $e_i\in \Ac{k}$ we have $\abs{N_\alpha(e_i)}\geq k m \log(k/\delta^{1/m})$, and 3) $f$ is $\lambda$-Lipschitz continuous, then with probability at least $(1- \delta)$ the following holds:
%
% algorithm \Alg~assigns elements uniformly randomly to $m$  processors,
%%and outputs $\Agd{m,\kappa, l}$
%then with probability at least $(1- \delta)$, 
$$f(\Agd{m,\kappa})\geq (1-e^{-\kappa/k})(f(\Ac{k})-\lambda\alpha k).$$
%when $f$ is a $\lambda$-lipschitz function.
\end{theorem}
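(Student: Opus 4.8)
The plan is to show that, with probability at least $1-\delta$, at least one machine receives a near-copy of the \emph{entire} optimal solution $\Ac{k}$, and then to combine the single-machine greedy guarantee with Lipschitz continuity. I would first dispatch the deterministic core and isolate a single probabilistic claim, since that is where all the work lies.

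For the deterministic core, write $\Ac{k}=\{e_1,\dots,e_k\}$ and suppose some partition $V_{i^*}$ contains, for every $j$, an element $e_j'\in N_\alpha(e_j)\cap V_{i^*}$, so that $d(e_j,e_j')\le\alpha$. Then $S'=\{e_1',\dots,e_k'\}\subseteq V_{i^*}$ has size $k$, and applying the $\lambda$-Lipschitz property to the matching $\{(e_j,e_j')\}$ gives $f(S')\ge f(\Ac{k})-\lambda\sum_j d(e_j,e_j')\ge f(\Ac{k})-\lambda\alpha k$. Since the greedy run on $V_{i^*}$ selects $\kappa\ge k$ elements, the extended Nemhauser bound yields $f(A^{\text{gc}}_{i^*}[\kappa])\ge(1-e^{-\kappa/k})\max_{|A|\le k,\,A\subseteq V_{i^*}}f(A)\ge(1-e^{-\kappa/k})f(S')$. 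Because $\Oa{\kappa}$ is the best of the per-machine greedy solutions and $\Agd{m,\kappa}$ dominates $\Oa{\kappa}$, we obtain $f(\Agd{m,\kappa})\ge(1-e^{-\kappa/k})(f(\Ac{k})-\lambda\alpha k)$, conditioned on the existence of such a covering machine $i^*$.

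It then remains to prove that a machine covering all of $\Ac{k}$ exists with probability at least $1-\delta$. Fix a machine $i$ and an optimal element $e_j$. Under uniform assignment the $\abs{N_\alpha(e_j)}$ neighbors land on machine $i$ independently with probability $1/m$ each, so the chance that none does is $(1-1/m)^{\abs{N_\alpha(e_j)}}\le e^{-\abs{N_\alpha(e_j)}/m}\le e^{-k\log(k/\delta^{1/m})}=(\delta^{1/m}/k)^k$, invoking condition~(2). A union bound over the $k$ optimal elements bounds the probability that machine $i$ misses at least one $e_j$ by $k(\delta^{1/m}/k)^k\le\delta^{1/m}$, where the last step uses $\delta^{1/m}\le 1\le k$.

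The final and most delicate step is to aggregate over the $m$ machines. These per-machine failure events are not independent, but they are \emph{negatively associated}: if one machine is missing some neighbor, that neighbor is distributed among the remaining machines and makes them more likely to be covered. This lets me upper bound the probability that \emph{all} machines fail by the product of the individual failure probabilities, $\prod_{i=1}^m\delta^{1/m}=\delta$; hence with probability at least $1-\delta$ a covering machine $i^*$ exists, and the deterministic argument closes the proof. I expect the main obstacle to be making this negative-association step rigorous (or otherwise justifying the replacement of the dependent events by their product), whereas the Lipschitz-plus-greedy portion is routine once a good machine is known to exist.
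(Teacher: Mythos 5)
Your proof follows essentially the same route as the paper's: with high probability some machine receives an element-wise $\alpha$-close copy of $\Ac{k}$, the $\lambda$-Lipschitz property converts this into a value gap of at most $\lambda\alpha k$, and the generalized greedy bound $(1-e^{-\kappa/k})$ on that machine combined with $f(\Agd{m,\kappa})\geq f(\Oa{\kappa})$ closes the argument. The only divergence is in the across-machine probabilistic step: the paper carves disjoint witness subsets of size $m\log(k/\delta^{1/m})$ out of each neighborhood and then simply multiplies the per-machine failure probabilities $\delta^{1/m}$ to obtain $\delta$, whereas you work with the full neighborhoods and justify that product bound via negative association of the random assignment --- a point the paper asserts without justification, so your version is, if anything, the more careful one.
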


Note that once the above conditions are satisfied for small values of $\alpha$ (meaning that there is a high density of data points within a small distance from each element of the optimal solution) then the distributed solution will be close to the optimal centralized one. In particular if we let $\alpha\rightarrow 0$, the distributed solution is guaranteed to be within a $1-e^{\kappa/k}$ factor from the optimal centralized solution. This situation naturally corresponds to very large datasets. In the  following, we discuss more thoroughly this important scenario.

\subsection{Performance Guarantees for Very Large Datasets}

Suppose that our dataset is a finite sample $V$ drawn i.i.d. from an underlying {\em infinite} set $\mathcal{V}$, according to some (unknown) probability distribution. Let $\Ac{k}$ be an optimal solution in the infinite set, i.e., $\Ac{k} = \arg\max_{S \subseteq \mathcal{V}} f(S)$, such that around each $e_i\in \Ac{k}$, there is a neighborhood of radius at least $\alpha^*$ where the probability density is at least $\beta$ at all points (for some constants $\alpha^*$ and $\beta$). This implies that the solution consists of elements  coming from reasonably dense and therefore representative regions of the dataset.
%, and are not outliers. 

Let us suppose $g:\reals \rightarrow \reals$ is the {\em growth function of the metric}: $g(\alpha)$ is defined to be the volume of a ball of radius $\alpha$ centered at a point in the metric space. This means, for $e_i\in \Ac{k}$ the probability of a random element being in $N_\alpha(e_i)$ is at least $\beta g(\alpha)$ and the expected number of $\alpha$ neighbors of $e_i$ is at least $E[\abs{N_\alpha(e_i)}]=n\beta g(\alpha)$. 
 As a concrete example, Euclidean metrics of dimension $D$ have $g(\alpha)=O(\alpha^D)$. Note that for simplicity we are assuming the metric to be homogeneous, so that the growth function is the same at every point. For heterogeneous spaces, we require $g$ to have a uniform lower bound on the growth function at every point.

In these circumstances, the following theorem guarantees that if the dataset $V$ is sufficiently large and $f$ is  $\lambda$-Lipschitz, then \Alg~produces a solution close to the centralized one.

\begin{theorem}\label{thm:alg-sampling}
For $\dst n\geq \frac{8k m\log(k/\delta^{1/m})}{\beta  g(\frac{\eps}{\lambda k})}$, where $\frac{\eps}{\lambda k}\leq \alpha^*$, if the algorithm \Alg~assigns elements uniformly randomly to $m$ processors 
%and outputs $\Agd{m,\kappa, l}$
, then with probability at least $(1- \delta)$, $$f(\Agd{m,\kappa})\geq (1-e^{-\kappa/k})(f(\Ac{k})-\eps).$$
%when $f$ is a $\lambda$-lipschitz function.
\end{theorem}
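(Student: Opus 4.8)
The plan is to derive Theorem~\ref{thm:alg-sampling} as a corollary of Theorem~\ref{thm:alg-neighborhoods}, by showing that the sampling hypotheses here imply the neighborhood-density hypothesis (condition~2) there for a judicious choice of radius. First I would set $\alpha = \eps/(\lambda k)$. With this choice the Lipschitz penalty in Theorem~\ref{thm:alg-neighborhoods} becomes $\lambda\alpha k = \eps$, so that its conclusion reads exactly $f(\Agd{m,\kappa}) \geq (1-e^{-\kappa/k})(f(\Ac{k})-\eps)$, which is the target bound. The hypothesis $\eps/(\lambda k)\leq\alpha^*$ guarantees that this radius stays inside the region where the density lower bound $\beta$ is valid, so the growth-function estimate $\mathrm{Pr}[v\in N_\alpha(e_i)]\geq \beta g(\alpha)$ applies. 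Conditions~1 and~3 of Theorem~\ref{thm:alg-neighborhoods} are furnished directly by the uniform random assignment and the $\lambda$-Lipschitz assumption, so the only thing left to establish is condition~2.

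Next I would verify condition~2, namely that $\abs{N_\alpha(e_i)}\geq km\log(k/\delta^{1/m})$ for every $e_i\in\Ac{k}$, which is now a \emph{random} event because $V$ is an i.i.d.\ sample. For a fixed $e_i$ the count $\abs{N_\alpha(e_i)}$ is a sum of $n$ independent Bernoulli indicators, each equal to one with probability at least $\beta g(\alpha)$, so its expectation is at least $n\beta g(\eps/(\lambda k))$. The sample-size hypothesis $n\geq 8km\log(k/\delta^{1/m})/\bigl(\beta g(\eps/(\lambda k))\bigr)$ then forces this expectation to be at least $8km\log(k/\delta^{1/m})$, a factor of eight above the threshold required by condition~2. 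A multiplicative Chernoff lower-tail bound shows that $\abs{N_\alpha(e_i)}$ falls below $km\log(k/\delta^{1/m})$ only with exponentially small probability, and a union bound over the $k$ elements of $\Ac{k}$ controls the total probability that condition~2 fails.

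Finally I would compose the two sources of randomness: the sampling event (condition~2 holds for all $e_i\in\Ac{k}$) and the assignment event (the conclusion of Theorem~\ref{thm:alg-neighborhoods} holds, conditioned on condition~2). Conditioning the assignment guarantee on the high-probability sampling event and taking a union bound yields the stated overall confidence $(1-\delta)$. I expect the main obstacle to be precisely this probability bookkeeping: ensuring that the concentration slack — the factor of eight in the sample-size bound, together with the $\delta^{1/m}$ sitting inside the logarithm — is generous enough that the sampling failure is fully absorbed without inflating the final confidence past $1-\delta$. The Chernoff estimate itself is routine; the delicate part is choosing the constants so that the two $\delta$-level events compose cleanly into a single $(1-\delta)$ guarantee, and so that the reduction to Theorem~\ref{thm:alg-neighborhoods} is exact rather than merely asymptotic as $\alpha\to 0$.
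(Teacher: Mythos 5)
Your proposal is correct and follows essentially the same route as the paper: the paper also sets $\alpha=\eps/(\lambda k)$, uses a multiplicative Chernoff bound (exploiting the factor-of-8 slack in the sample size) plus a union bound over the $k$ optimal elements to establish the neighborhood-density condition, and then composes this sampling event with the random-partition argument underlying Theorem~\ref{thm:alg-neighborhoods} (its Lemma on $\alpha$-close sets) before invoking the greedy guarantee. The probability bookkeeping you flag as delicate is handled in the paper by splitting the failure probability into two $\delta/2$ events, exactly as you anticipate.
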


The above theorem shows that for very large datasets, \Alg provides a solution that is within a $1-e^{\kappa/k}$ factor of the optimal centralized solution. This result is based on the fact that for sufficiently large datasets, there is a suitably dense neighborhood around each member of the optimal solution. Thus, if the elements of the dataset are partitioned uniformly randomly to $m$ processors, at least one partition contains a set $A_i^c[k]$ such that its elements are very close to the elements of the optimal centralized solution and provides a constant factor approximation of the optimal centralized solution.

\subsection{Handling Decomposable Functions}\label{sec:extension} 
So far, we have assumed that the objective function $f$ is given to us as a black box, which we can evaluate for any given set $S$ {\em independently} of the dataset $V$. In many settings, however, the objective $f$ depends itself on the entire dataset. In such a setting, we cannot use \Alg as presented above, since we cannot evaluate $f$ on the individual machines without access to the full set $V$.
Fortunately, many such functions have a simple structure which we call \textit{decomposable}. 
More precisely, we call a  submodular function $f$ \textit{decomposable} if it can be written as a sum of submodular functions as follows %\ak{need to cite Gomes and Krause '10}
\citep{krause2010budgeted}:
$$f(S) = \frac{1}{|V|}\sum_{i \in V} f_i(S)$$
 In other words, there is separate submodular function associated with every data point $i\in V$. We require that each $f_i$ can be evaluated without access to the full set $V$. Note that the exemplar based clustering application we discussed in Section~\ref{sec:examples} is an instance of this framework, among many others.  
Let us define the evaluation of $f$ restricted to $D\subseteq V$ as follows: 
$$f_D (S) = \frac{1}{|D|}\sum_{i \in D} f_i (S)$$

\noindent In the remaining of this section, we show that assigning each element of the dataset randomly to a machine and running \Alg  will provide a solution that is with high probability  close to the optimum solution. For this, let us  assume that $f_i$'s are bounded, and without loss of generality $0\leq f_i(S)\leq 1$ for $1\leq i\leq |V|, S\subseteq V$. Similar to Section~\ref{sec:geo} we assume that \Alg~performs the partition by assigning elements uniformly at random to the machines. These machines then each greedily optimize $f_{V_i}$. The second stage of \Alg optimizes $f_U$, where $U\subseteq V$ is chosen uniformly at random with size $\lceil n/m\rceil$.

Then, we can show the following result. First, for any fixed $ \epsilon, m, k $, let us define $n_0$ to be the smallest integer such that for  $n\geq n_0$ we have $\ln(n)/n\leq \epsilon^2/(mk)$. 

\begin{theorem}\label{th:decom}
 For $n\geq \max(n_0, m\log(\delta/4m)/\epsilon^2)$, $\epsilon<1/4$, and under the assumptions of Theorem~\ref{thm:alg-sampling}, we have,
% and  that $\epsilon<1/4$  \Alg assigns elements of the ground set $V$ uniformly at random to $m$ machines, 
with probability at least $1-\delta$, $$f(\Agd{m,\kappa})\geq (1-e^{-\kappa/k})(f(\Ac{k})-2\eps).$$
\end{theorem}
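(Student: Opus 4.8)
The plan is to reduce Theorem~\ref{th:decom} to Theorem~\ref{thm:alg-sampling} by showing that, with high probability, every empirical objective the algorithm actually evaluates is uniformly close to the true objective $f$. Then running \Alg on the local functions $f_{V_i}$ and on $f_U$ behaves almost exactly as if it had black-box access to $f$, at the cost of one additional additive $\eps$, which is precisely the gap between the $(1-e^{-\kappa/k})(f(\Ac{k})-\eps)$ of Theorem~\ref{thm:alg-sampling} and the $(1-e^{-\kappa/k})(f(\Ac{k})-2\eps)$ claimed here.

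First I would establish a uniform concentration lemma. Fix a set $S$ with $|S|\le\kappa$. Since $f(S)=\frac{1}{n}\sum_{i\in V}f_i(S)$ is an average of the bounded quantities $f_i(S)\in[0,1]$, and each of $V_1,\dots,V_m,U$ is (up to rounding) a uniformly random subset of $V$ of size $\approx n/m$, the empirical value $f_D(S)$ for $D\in\{V_1,\dots,V_m,U\}$ is an average of $\approx n/m$ such terms. Hoeffding's inequality (for sampling without replacement, which is dominated by the with-replacement bound) gives
$$\Pr\big[\,|f_D(S)-f(S)|\ge \eps\,\big]\le 2\exp\!\big(-2(n/m)\eps^2\big).$$
Taking a union bound over the $m+1$ choices of $D$ and over all sets $S$ with $|S|\le\kappa$ — of which there are at most $n^{\kappa}$ — bounds the probability that closeness fails for some relevant pair $(D,S)$ by $2(m+1)n^{\kappa}\exp(-2(n/m)\eps^2)$. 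This is exactly where the hypotheses on $n$ enter: the condition $n\ge n_0$ (i.e.\ $\ln(n)/n\le \eps^2/(mk)$) forces the $n^{\kappa}$ factor to be absorbed, while the other sample-size hypothesis in the statement controls the remaining $\log(m/\delta)$ term, so the whole expression is at most $\delta/2$. Call the complementary (good) event $E$.

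Next I would condition on $E$ and transfer the greedy guarantee from the empirical functions back to $f$. The crucial point is that the $\eps$-closeness holds at the level of whole set values, not of marginal gains, so the error does not accumulate across the $\kappa$ greedy steps. The structural core of the proof of Theorem~\ref{thm:alg-sampling} shows that, with probability at least $1-\delta/2$, the random partition places in some machine $i^\star$ a set $\tilde S\subseteq V_{i^\star}$ with $|\tilde S|\le k$ and $f(\tilde S)\ge f(\Ac{k})-\eps$, obtained by replacing each $e_i\in\Ac{k}$ with an $\alpha$-close neighbor landing in $V_{i^\star}$ and invoking $\lambda$-Lipschitz continuity with $\alpha=\eps/(\lambda k)$. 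Running the standard greedy with the \emph{empirical} objective $f_{V_{i^\star}}$ on that machine yields $\Aigc[\kappa]$ with $f_{V_{i^\star}}(\Aigc[\kappa])\ge(1-e^{-\kappa/k})\,f_{V_{i^\star}}(\tilde S)$, and on $E$ we have $f_{V_{i^\star}}(\tilde S)\ge f(\tilde S)-\eps\ge f(\Ac{k})-2\eps$. Since $\Aigc[\kappa]\subseteq B$ and the second round greedily maximizes $f_U$ over $B$, the returned set $\Agd{m,\kappa}$ — the better of $\Oa{\kappa}$ and $\Ob{\kappa}$ — has empirical value at least $(1-e^{-\kappa/k})(f(\Ac{k})-2\eps)$, and a final application of $E$ translates this empirical value back into the stated bound on $f(\Agd{m,\kappa})$ after pooling the two failure probabilities into $\delta$.

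The main obstacle is the uniform concentration step: with only $\approx n/m$ samples available to each machine, controlling the deviation simultaneously over the exponentially many candidate sets a greedy run might encounter is what dictates the sample-size requirement, and it is essential to phrase closeness in terms of set values rather than marginal gains so that the error does not blow up by a factor of $\kappa$. The secondary, bookkeeping difficulty is to coordinate the two independent sources of slack — the $\eps$ from the Lipschitz/sampling argument inherited from Theorem~\ref{thm:alg-sampling} and the $\eps$ from empirical estimation — together with their two failure events, so that the additive loss stays at $2\eps$ inside the parentheses and the overall failure probability stays below $\delta$.
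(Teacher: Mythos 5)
Your proposal is correct and follows essentially the same route as the paper's proof: Hoeffding's inequality for sampling without replacement applied to each fixed set of size at most $\kappa$, a union bound over the at most $n^\kappa$ such sets and over the machines (with the condition $n\geq n_0$ serving precisely to absorb the $n^\kappa$ factor), and then combining the resulting uniform-closeness event with the density/Lipschitz argument of Theorem~\ref{thm:alg-sampling} and the greedy guarantee of Lemma~\ref{lem:ggeneral} to arrive at the $2\eps$ loss. The only differences are minor: the paper additionally conditions, via a Chernoff bound, on each machine receiving between $n/2m$ and $2n/m$ elements, which is what rigorously licenses the ``$\approx n/m$ samples'' step you gloss as rounding, while you are in fact slightly more careful than the paper in explicitly including $U$ in the union bound; both arguments share the same informal accounting in the last step, where the translation of the returned set's empirical value back to its true value is not charged separately against the stated bound.
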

%\amk{I changed the statement of the theorem. Please check.}
The above result demonstrates why \Alg performs  well on decomposable submodular functions with massive data even when they are evaluated locally on each machine. We will report our experimental results on exemplar-based clustering in the next section.

\begin{figure}[t]
                \centering
                \includegraphics[width=1\textwidth]{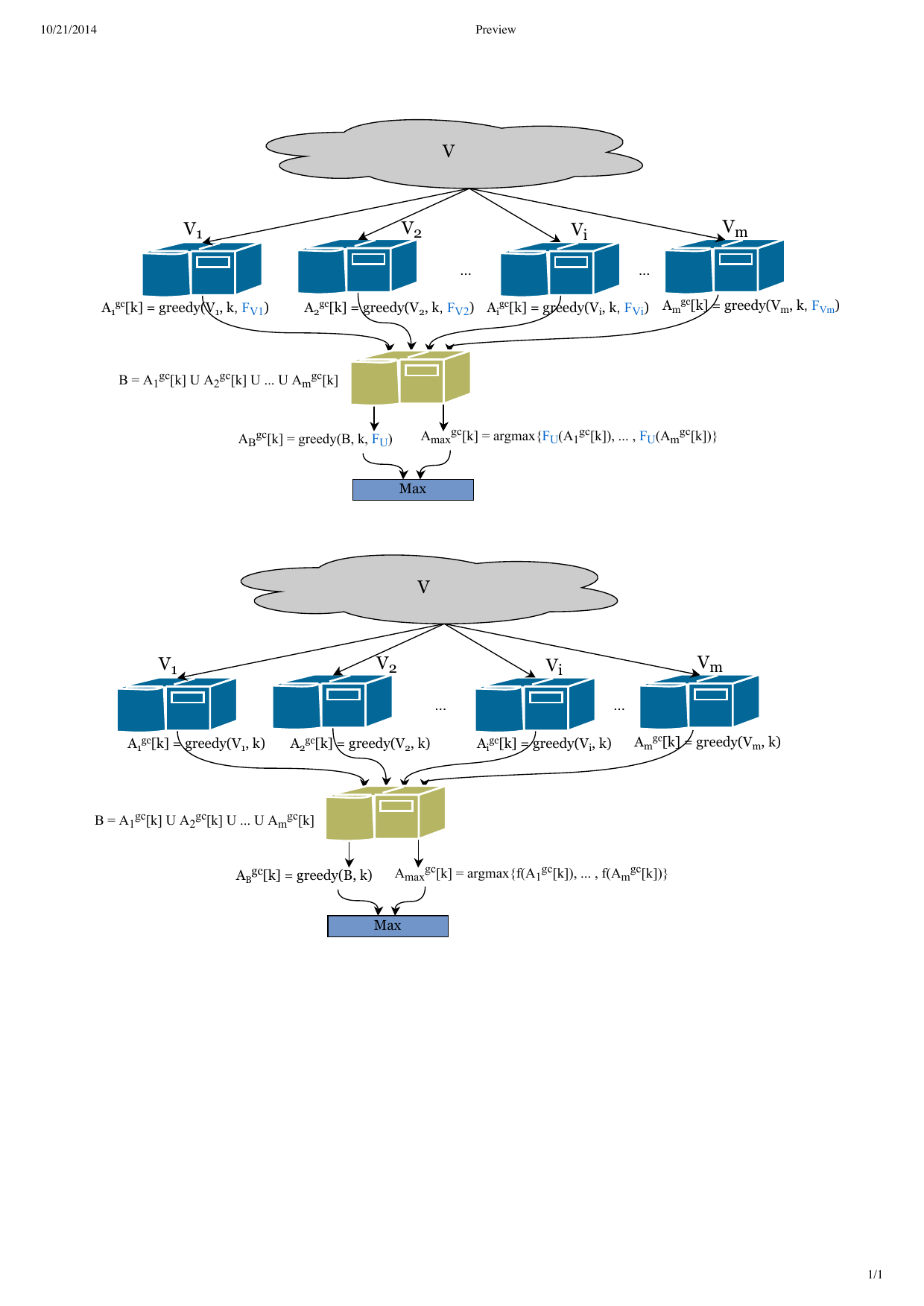}
                \caption{Illustration of our two-round algorithm \Alg for decomposable functions} 
\end{figure}\label{fig:decomp}

\subsection{Performance of {\sc \Alg} on Random Partitions Without Geometric Structure}
Very recently, a constant $(1 - e^{-1})/2$-approximation guarantee was proven for \Alg for the case of random partitioning of the data among the $m$ machines.
\begin{theorem}[\citet{barbosa2015power, mirrokni2015randomized}]
If elements are assigned uniformly at random to the machines, and $\kappa=k$, \Alg gives a $(1-1/e)/2$ approximation guarantee (in the average case) to the optimum centralized solution.
$$\mathbb{E}[f(\Agd{m,k})] \geq \frac{1-1/e}{2} f(\Ac{k}).$$ 
\end{theorem}
These results show that random partitioning of the data is sufficient to guarantee that \Alg provides a constant factor approximation, irrespective of $m$ and $k$, and without the requirement of any geometric structure. On the other hand, if geometric structure is present, the bounds from the previous sections can provide sharper approximation guarantees.

\section{(Non-Monotone) Submodular Functions with General Constraints}\label{sec:constraints}
In this section we show how \Alg can be extended to handle 1) more general constraints, and 2) non-monotone submodular functions. More precisely, we consider the following optimization setting
\begin{eqnarray*}
&&\text{Maximize } f(S)\\
&&\text{Subject to } S\in \C.
\end{eqnarray*}
Here,  we assume that the feasible solutions should be  members of the constraint set $\C \subseteq 2^V$.  The function $f(\cdot)$ is submodular but may not be monotone.  By overloading the notation we denote the set  that achieves the above constrained optimization problem by $\Ac{\C}$. Throughout this section we assume that the constraint set $\C$ is hereditary, meaning that if $A\in \C$ then for any $B\subseteq A$ we also require that $B \in \C$. Cardinality constraints are obviously hereditary, so are all the examples we mention below.

\subsection{Matroid Constraints}

A matroid $\mathcal{M}$ is a pair $(V,\mathcal{I})$ where $V$ is a finite set (called the ground set) and $\mathcal{I} \subseteq 2^V$ is a family of subsets of $V$ (called the independent sets) satisfying the following two properties:
\begin{itemize}
\item
\textit{Heredity property}: $A \subseteq B \subseteq V$ and $B \in \mathcal{I}$ implies that $A \in \mathcal{I}$, i.e. every subset of an independent set is independent.
\item
\textit{Augmentation property}: If $A, B \in \mathcal{I}$ and $|B| > |A|$, there is an element $e \in B \setminus A$ such that $A \cup \{e\} \in \mathcal{I}$.
\end{itemize}

Maximizing a submodular function subject to matroid constraints has found several applications in machine learning and data mining, ranging from %In  applications %\ak{should mention applications in ML, data mining etc.} 
%such as %maximum spanning tree problem and the maximum weighted matching problem in general graphs \citep{edmonds1971matroids} 
content aggregation on the web \citep{abbassi2013diversity} to viral marketing \citep{narayanam2012viral} and online advertising \citep{streeter09}. 

One way to approximately maximize a monotone submodular function $f(S)$ subject to the constraint that each $S$ is independent, i.e., $S \in \mathcal{I}$, is to use a generalization of the greedy algorithm.
This algorithm, which starts with an empty set and in each iteration picks the feasible element with maximum benefit until there is no more element $e$ such that $S \cup \{e\} \in \mathcal{I}$, is guaranteed to provide a $\frac{1}{2}$-approximation of the optimal solution \citep{fisher78}. Recently, this bound has been improved to $(1-1/e)$ using the continuous greedy algorithm \citep{calinescu2011maximizing}. % \ak{correct reference is Calinescu et al}
For non-negative and  non-monotone submodular functions with matroid constraints, the best known result is a 0.325-approximation based on simulated annealing \citep{gharan2011submodular}.

\paragraph{Curvature: }
For a submodular function $f$, the total curvature of $f$ with respect to a set $S$ is defined as:
$$ c = 1 - \min_{j \in V} \frac{f(j |S \setminus j)}{f(j)}.$$
Intuitively, the notion of curvature determines how far away $f$ is from being modular. In other words, it measures how much the marginal gain of an element w.r.t.~set $S$ can decrease as a function of $S$. In general, $c \in [0, 1]$, and for additive (modular) functions, $c = 0$, i.e., the marginal values are independent of $S$. In this case, the greedy algorithm returns the optimal solution to  $\max \{f(S) : S \in \mathcal{I} \}$. In general, the greedy algorithm gives a $\frac{1}{1+c}$-approximation to maximizing a non-decreasing submodular function with curvature $c$ subject to a matroid constraint \citep{conforti1984submodular}.
In case of the uniform matroid $\mathcal{I} = \{S : |S| \leq k \}$, the approximation factor is $(1-e^{-c})/c$.

\paragraph{Intersection of Matroids:} 
A more general case is when we have $p$ matroids $\mathcal{M}_1 = (V, \mathcal{I}_1), \mathcal{M}_2 = (V, \mathcal{I}_2), ..., \mathcal{M}_p = (V, \mathcal{I}_p)$ on the same ground set $V$, and we want to maximize the submodular function $f$ on the intersection of $p$ matroids. That is, $\mathcal{I} = \bigcap_i \mathcal{I}_i$ consists of all subsets of $V$ that are independent in all $p$ matroids. This constraint arises, e.g., when optimizing over rankings (which can be modeled as intersections of two partition matroids). Another recent application considered is finding the influential set of users in viral marketing when multiple products need to be advertised and each user can tolerate only a small number of recommendations \citep{du2013budgeted}. 
For $p$ matroid constraints, the $\frac{1}{p+1}$-approximation provided by the greedy algorithm \citep{fisher78} has been improved to a $(\frac{1}{p} - \varepsilon)$-approximation for $p \geq 2$ by \cite{lee2009submodular}. 
For the non-monotone %\ak{need to define non-monotone before} 
case, a $1/(p+2+1/p+\varepsilon)$-approximation based on local search is also given by \cite{lee2009submodular} .

\paragraph{$p$-systems:}
$p$-independence systems generalize constraints given by the intersection of $p$ matroids. Given an independence family $\indepset$ and a set $V'\subseteq V$, let $S(V')$ denote the set of maximal independent sets of $\indepset$ included in $V'$, i.e., $S(V')=\{A\in \indepset\mid \forall e\in V'\setminus A: A\cup\{e\}\notin \indepset\}$. 
%A\subseteq V'~ \& ~ \nexists B\subseteq V \text{ s.t. } A\subset B \subseteq V' \}$. 
Then we call $(V, \indepset)$ a $p$-system if for all nonempty $V'\subseteq V$ we have $$\max_{A\in S(V')}|A|\leq p\cdot \min_{A\in S(V')}|A|.$$
Similar to $p$ matroid constraints, the greedy algorithm provides a $\frac{1}{p+1}$-approximation guarantee for maximizing a monotone submodular function subject to a $p$-systems constraint \citep{fisher78}.
For the non-monotone case, a $2/( 3(p+2+1/p) )$-approximation can be achieved by combining an algorithm of \cite{gupta2010constrained} with the result for unconstrained submodular maximization of \cite{buchbinder12tight}. %, where $\alpha$ is the approximation guarantee for the unconstrained (non-monotone) submodular maximization problem.

%\textbf{the definition of p-system was wrong $->$ changed!!!!}
\subsection{Knapsack Constraints}
In many applications, including  feature and variable selection in probabilistic models \citep{krause05near} and document summarization \citep{lin2011class}, 
elements $e \in V$ have non-uniform costs $c(e) > 0$, and we 
%may 
wish to find a collection of elements $S$ that maximize $f$ subject to the constraint that the total cost of elements in $S$ does not exceed a given budget $\mathcal{R}$, i.e.
\[ \max_S f(S) ~~ \text{s.t.} ~ \sum_{v \in S} c(v) \leq \mathcal{R}. \]

\noindent Since the simple greedy algorithm  ignores cost while iteratively adding elements with maximum marginal gains according (see Eq. \ref{eq:argmax}) until $|S| \leq \mathcal{R}$, it can perform arbitrary poorly. 
However, it has been shown that taking the maximum over the solution returned by the greedy algorithm that works according to Eq. \ref{eq:argmax} and the solution returned by the modified greedy algorithm that optimizes the cost-benefit ratio 
\begin{equation*}\label{eq:mgreedy}
v^* = {\arg\,\max}_{
\substack{
e\in V \setminus S\\
c(v) \leq \mathcal{R} - c(S) }}  \frac{ f (S \cup \{e\}) - f (S) }{c(v)},
\end{equation*}
%presented in Eq. \ref{eq:mgreedy} 
provides a $(1-1/\sqrt{e})$-approximation of the optimal solution \citep{krause05note}. Furthermore, a more computationally expensive algorithm which starts with all feasible solutions of cardinality 3 and augments them using the cost-benefit greedy algorithm to find the set with maximum value of the objective function provides a $(1-1/e)$-approximation \citep{sviridenko04note}.
For maximizing non-monotone submodular functions subject to knapsack constraints,  a ($1/5 - \varepsilon$)-approximation algorithm based on local search was given by \cite{lee2009non}.

\paragraph{Multiple Knapsack Constraints:}
In some applications such as %network services, 
procurement auctions \citep{garg2001approximation}, video-on-demand systems and e-commerce \citep{kulik2009maximizing},
%A more general case is when 
we have a $d$-dimensional budget vector $\mathcal{R}$ and a set of element $e \in V$ where each element is associated with a $d$-dimensional cost vector. In this setting, we seek a subset of elements $S \subseteq V$ with a total cost of at most $\mathcal{R}$ that maximizes a non-decreasing submodular function $f$. \cite{kulik2009maximizing} proposed a two-phase algorithm that provides a $(1-1/e - \varepsilon)$-approximation for the problem by first guessing a constant number of elements of highest value, and then taking the value residual problem with respect to the guessed subset.
For the non-monotone case, \cite{lee2009non} provided a $(1/5 - \varepsilon)$-approximation based on local search.

\paragraph{$p$-system and $d$ knapsack constraints:}
A more general type of constraint that has recently found interesting applications in viral marketing \citep{du2013budgeted} can be constructed by combining a $p$-system with $d$ knapsack constraints 
which comprises the intersection of $p$ matroids or $d$ knapsacks as special cases. 
\cite{badanidiyuru2014fast} proposed a modified version of the greedy algorithm that guarantees a $1/(p+2d+1)$-approximation for maximizing a monotone submodular function subject to $p$-system and $d$ knapsack constraints.

Table \ref{appr} summarizes the approximation guarantees for monotone and non-monotone submodular maximization under different constraints.

\begin{table}[t!]
\begin{center}
    \begin{tabular}{|| p{2cm} || p{5.4cm} | p{7.cm}|}
    \hline
    Constraint & \multicolumn{2}{|c|}{Approximation ($\tau$)} \\
    %Approximation ($\tau$) & Approximation ($\tau$) \\ 
& monotone submodular functions & non-monotone submodular functions \\
    \hline \hline
    Cardinality & $1-1/e$ \citep{fisher78} & 0.325 \citep{gharan2011submodular}\\ \hline
    1 matroid & $1-1/e$ \citep{calinescu2011maximizing} & 0.325 \citep{gharan2011submodular} \\ \hline
    $p$ matroid & $1/p - \varepsilon$ \citep{lee2009submodular} & $1/(p+2 +1/p + \varepsilon) $ \citep{lee2009submodular} \\ \hline
    1 knapsack & $1-1/e$ \citep{sviridenko04note} & 1/5 - $\varepsilon$ \citep{lee2009non} \\ \hline
    $d$ knapsack & $1-1/e - \varepsilon$ \cite{kulik2009maximizing} & 1/5 - $\varepsilon$ \citep{lee2009non} \\ \hline 
    $p$-system & 1/($p+1$) \citep{fisher78} & $2/( 3(p+2+1/p) )$ \citep{gupta2010constrained}\\ \hline   
    $p$-system + $d$ knapsack & $1/(p+2d+1)$ \citep{badanidiyuru2014fast} & -- \\
    \hline    
    \end{tabular}
\end{center}
\captionsetup{format=hang}
\caption{
Approximation guarantees ($\tau$) for monotone and non-monotone submodular maximization under different constraints.
}\label{appr}
\end{table}

\subsection{{\sc \Alg} Approximation Guarantee under More General Constraints}
Assume that we have a set of constraints $\C \subseteq 2^V$ that is hereditary. Further assume we have access to a "black box" algorithm $\BBalg$ ~that gives us a constant factor approximation guarantee for maximizing a non-negative (but not necessarily monotone) submodular  function $f$ subject to $\C$, i.e.

\begin{equation}
\BBalg : (f, \zeta) \mapsto A^{\BBalg} \in \zeta ~~ \text{s.t.} ~~ f(A^{\BBalg}[\zeta]) \geq \tau \max_{A \in \zeta} f(A).
\end{equation}

%\max_{S \subseteq V} f(S) ~~ s.t. ~~ \zeta
\noindent We can modify \Alg to use any such approximation algorithm as a black box, and provide theoretical guarantees about the solution. %, based on the properties of the approximation algorithm. 
In order to process a large dataset, it first distributes the ground set over $m$ machines. Then instead of greedily selecting elements, each machine  $i$--in parallel--separately runs the black box algorithm $\BBalg$~on its local data in order to produce a feasible set $A^{\BBalg}_i[\zeta]$ meeting the constraints $\zeta$. We denote by $\Oa{\zeta}$ the set with maximum value among $A^{\BBalg}_i[\zeta]$.
Next, the solutions are merged: $B = \cup_{i=1}^m A^{\BBalg}_i [\zeta]$, and the black box algorithm is applied one more time to set $B$ to produce a solution $\Ob{\zeta}$. %the final result. 
 Then,
the distributed solution for parameter $m$ and constraints $\zeta$, $A^{\BBalg d}[m,\zeta]$, is the best among $\Oa{\zeta}$ and $\Ob{\zeta}$.
This procedure is given in more detail in Algorithm \ref{alg:gdist_const}. %Hereby, the distributed solution for parameter $m$ and constraints $\zeta$ is denoted by $A^{\BBalg d}[m,\zeta]$. 

\begin{figure}[h]
\vspace{-.5cm}
%\begin{minipage}%{.51\textwidth}
\begin{algorithm}[H]
\caption{\Alg under General Constraints} 
\label{alg:gdist_const}
\begin{algorithmic}[1]
\REQUIRE{Set $V$, $\# $of partitions $m$, constraints  $\zeta$, submodular function $f$.}
\ENSURE{Set $A^{\BBalg d}[m,\zeta]$.}
\STATE Partition $V$ into $m$ sets $V_1, V_2, \dots, V_m$.
\STATE \looseness -1 In parallel: Run the approximation algorithm \BBalg ~on each set $V_i$ to find a solution $A^{\BBalg}_i[\zeta]$.
\STATE Find $\Oa{\zeta} = \arg\max_A\{F(A) | A\in\{A^{\BBalg}_1[\zeta],\dots,A^{\BBalg}_m[\zeta]\}\}$.
\STATE Merge the resulting sets: $B = \cup_{i=1}^m A^{\BBalg}_i[\zeta].$
\STATE Run the approximation algorithm $\BBalg$~on $B$ to find a solution $\Ob{\zeta}$.% $A^{\BBalg d}[m,\zeta]$. 
\STATE Return $A^{\BBalg d}[m,\zeta] = \arg \max \{ \Oa{\zeta}, \Ob{\zeta} \}$.
\end{algorithmic}
\end{algorithm}
%\end{minipage}
\hspace{0.05cm}
\end{figure}
The following result generalizes Theorem~\ref{th:ggreedy} for maximizing a submodular function subject to more general constraints.
\begin{theorem}\label{th:ggreedy_const}
Let $f$ be a non-negative submodular function and $\BBalg$~be a black box algorithm that provides a $\tau$-approximation guarantee for submodular maximization subject to a set of hereditary constraints $\zeta$. Then 
$$f(A^{\BBalg d}[m,\zeta])) \geq \frac{\tau}{\min \big( m,\rho([\zeta]) \big)} f(\Ac{\zeta}),$$ 
 where $f(\Ac{\zeta})$ is the optimum centralized solution, and $\rho([\zeta]) = \max_{A \in \zeta} |A|$. 
% 
% is the maximum number of elements  selected by the black box algorithm subject to constraints $\zeta$.
\end{theorem}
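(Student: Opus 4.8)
The plan is to generalize the proof strategy that establishes Theorem~\ref{th:ggreedy}, replacing the greedy algorithm's $(1-1/e)$ guarantee with the black box's $\tau$-guarantee and the cardinality parameter $k$ with $\rho([\zeta]) = \max_{A\in\zeta}|A|$. The central idea is to compare the distributed solution against the optimal centralized solution $A^c[\zeta]$ in two complementary regimes, controlled respectively by $m$ and by $\rho([\zeta])$, and to show that one of the two candidate sets $\Oa{\zeta}$ (best per-machine solution) or $\Ob{\zeta}$ (solution on the merged set $B$) always achieves the claimed bound.

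First I would set $O = A^c[\zeta]$ and partition it according to the random assignment of the ground set, writing $O_i = O \cap V_i$ for the part of the optimum landing on machine $i$. By the heredity-type closure of the constraint family $\zeta$ (each $O_i$ inherits feasibility as a subset of the feasible $O$), each $O_i$ is itself a feasible input on machine $i$, so the black box applied to $V_i$ returns a set with $f(A^{\BBalg}_i[\zeta]) \geq \tau\, f(O_i)$. Taking the maximum over machines gives $f(\Oa{\zeta}) \geq \tau \max_i f(O_i) \geq \frac{\tau}{m} \sum_i f(O_i)$; since submodularity (together with nonnegativity) implies $\sum_i f(O_i) \geq f(O)$ by a standard superadditivity-over-a-partition argument, this yields the bound $f(A^{\BBalg d}[m,\zeta]) \geq f(\Oa{\zeta}) \geq \frac{\tau}{m} f(O)$, handling the case where $m \leq \rho([\zeta])$.

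For the complementary case I would argue via $\Ob{\zeta}$. The key observation is that since $|O| \leq \rho([\zeta])$, the optimum $O$ can be split into at most $\rho([\zeta])$ singletons, and at least one element (more carefully, a suitable feasible piece) of $O$ whose individual contribution is at least $f(O)/\rho([\zeta])$ must survive in a way recoverable from $B$. The cleanest route is to show that the entire optimum $O$ is contained in the merged set $B$ whenever each of its elements is selected on its home machine, or failing that, to lower-bound $f(\Ob{\zeta})$ by $\tau$ times the best feasible subset present in $B$, which in turn dominates $\frac{1}{\rho([\zeta])} f(O)$. Running the black box on $B$ then gives $f(\Ob{\zeta}) \geq \frac{\tau}{\rho([\zeta])} f(O)$, covering the regime $\rho([\zeta]) \leq m$. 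Combining the two cases and returning the better of $\Oa{\zeta}$ and $\Ob{\zeta}$ produces the $\frac{\tau}{\min(m,\rho([\zeta]))}$ factor.

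The main obstacle I anticipate is the step ensuring that a constant fraction of the optimum's value is actually recoverable by the black box in the merged stage, since unlike the cardinality case the greedy guarantees no longer apply verbatim and I must rely only on the abstract $\tau$-approximation property together with feasibility of subsets of $O$. In particular, arguing that $B$ contains a feasible set of value at least $f(O)/\rho([\zeta])$ requires care: I would use that the singleton $\{e^*\}$ for the highest-value element $e^*\in O$ is feasible (by heredity, assuming $\zeta$ is downward closed, which holds for all the constraint classes tabulated in Table~\ref{appr}) and that $f(\{e^*\}) \geq f(O)/|O| \geq f(O)/\rho([\zeta])$ by submodularity. Verifying that this downward-closedness assumption is legitimately available for the general constraint family $\zeta$, and that $e^*$ indeed appears in $B$, is the delicate point on which the whole argument hinges.
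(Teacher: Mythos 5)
Your first regime (the $\tau/m$ bound) is sound and matches the paper's argument: writing $O=\Ac{\zeta}$ and $O_i=O\cap V_i$, heredity makes each $O_i$ feasible on its machine, the black box gives $f(A^{\BBalg}_i[\zeta])\geq\tau f(O_i)$, and subadditivity of a nonnegative submodular function over the partition (the telescoping argument of Lemma~\ref{lemm_m}) yields $f(\Oa{\zeta})\geq\frac{\tau}{m}f(O)$. The genuine gap is in your second regime. You route the $\tau/\rho([\zeta])$ bound through $\Ob{\zeta}$, the black box run on the merged set $B$, and the argument hinges --- as you yourself flag --- on the best singleton $e^*\in O$ appearing in $B$. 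There is no reason this should hold: $B$ is the union of the machines' black box outputs, and the machine owning $e^*$ is under no obligation to select it (the same failure already occurs with greedy in the cardinality case). Your fallback --- lower-bounding $f(\Ob{\zeta})$ by $\tau$ times the best feasible subset of $B$ --- does not rescue the bound either: the only feasible subsets of $B$ whose value you can control are the per-machine outputs $A^{\BBalg}_i[\zeta]$ themselves, and the one on $e^*$'s machine is only guaranteed value $\tau f(\{e^*\})$, so this route gives $f(\Ob{\zeta})\geq\tau^2 f(O)/\rho([\zeta])$, losing a factor of $\tau$.

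The repair is to route the $\rho([\zeta])$ case through $\Oa{\zeta}$ as well, which is exactly what the paper does (its proof is the generalization of Lemma~\ref{lemm_k} and Theorem~\ref{th:ggreedy} with $k$ replaced by $\rho([\zeta])$ and the greedy guarantee by $\tau$): by submodularity and nonnegativity, $f(O)\leq\sum_{u\in O}f(\{u\})\leq\rho([\zeta])\,f(\{e^*\})$; the singleton $\{e^*\}$ is feasible by heredity and lies in some $V_j$, so the optimal feasible set in $V_j$ has value at least $f(O)/\rho([\zeta])$, and the black box output on machine $j$ --- hence $\Oa{\zeta}$ --- has value at least $\tau f(O)/\rho([\zeta])$. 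No element of $O$ ever needs to survive into $B$; the second-stage solution $\Ob{\zeta}$ enters only as an extra candidate in the final max, which can only help. Your concern about downward-closedness is legitimate but shared: the paper's proof also needs it (implicitly, so that $O\cap V_i$ and $\{e^*\}$ are feasible), and it holds for every constraint class in Table~\ref{appr}; it is not the obstruction --- the reliance on $e^*\in B$ is.
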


\noindent  Specifically, for submodular maximization subject to the matroid constraint $\matroid$, we have $\rho([A \in \mathcal{I}]) = r_{\matroid}$ where $r_{\matroid}$ is the rank of the matroid (i.e., the maximum size of any independent set in the system). For submodular maximization subject to the knapsack constraint $\mathcal{R}$, we can  bound $\rho([c(A) \leq \mathcal{R}])$  by $\lceil\mathcal{R}/ \min_v c(v)\rceil$ (i.e. the capacity of the knapsack divided by the smallest weight of any element).

\paragraph{Performance on Datasets with Geometric Structure.} When the submodular function $f(\cdot)$ and the constraint set $\C$ have more structure, then we can provide much better approximation guarantees.
Assuming the elements of $V$ are embedded in metric space with distance $d: V\times V\rightarrow \mathbb{R}^+$%. More specifically
, we say that  $\C$ is \textit{locally replaceable} with respect to a set $S \subseteq V$ with parameter $\alpha>0$ if 
$$\forall S' \subseteq V \text{ s.t. } |S'| = |S| \text{ and } d_{\infty}(S,S')\leq \alpha \Rightarrow S'\in \C.$$ Here, we define the distance $d_{\infty}$ between two sets $S$ and $S'$ of the same size $k$ as follows. Let $M$ be the set of all possible matchings between $S$ and $S'$, i.e., $$M = \{((e_1, e'_1), \dots, (e_k,e'_k)) \text{ s.t } e_i\in S \text{ and } e'_i\in S' \text{ for } 1\leq i\leq k\}.$$ Then $d_{\infty}(S,S') = \min_M\max_i d(e_i,e'_i)$.
% between two sets of the same size as $d_{\infty}(S,S') = \max_{e\in S, e'\in S'} d(e,e')$. 
We require locality only with respect to $\optsol$
to ensure that the optimum solution can be well approximated. What the locally replaceable property requires is that as elements of $\optsol$ get replaced by nearby elements, the resulting set is also a feasible solution. Combining this property with $\lambda$-Lipschitzness %(which states that the value of the function won't change much if elements get replaced by nearby elements) 
will provide us with the following theorem.

\begin{theorem}\label{thm:BBalg-neighborhoods}
Under the conditions that 1) elements are assigned uniformly at random to $m$ machines, 2) for each $e_i\in \Ac{\zeta}$ we have $\abs{N_\alpha(e_i)}\geq \rho([\zeta]) m \log(\rho([\zeta])/\delta^{1/m})$, 3) $f(\cdot)$ is $\lambda$-Lipschitz, and 4) $\C$ is locally replaceable with respect to $\optsol$ with parameter $\alpha$,
%If for each $e_i\in \Ac{\zeta}, \abs{N_\alpha(e_i)}\geq \rho([\zeta]) m \log(\rho([\zeta])/\delta^{1/m})$, and algorithm \Alg~assigns elements uniformly randomly to $m$  processors, 
%and outputs $\Agd{m,\kappa, l}$
then with probability at least $(1- \delta)$, $$f(A^{\BBalg d}[m,\zeta]))\geq \tau(f(\Ac{\zeta})-\lambda\alpha \rho([\zeta])).$$
%when $f$ is a $\lambda$-lipschitz function.
\end{theorem}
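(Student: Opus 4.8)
The plan is to reuse the architecture of the proof of Theorem~\ref{thm:alg-neighborhoods} almost verbatim, with the cardinality bound $k$ replaced by $\rho([\zeta])$ and the greedy factor $(1-e^{-\kappa/k})$ replaced by the black-box factor $\tau$. Everything reduces to a single high-probability event: that at least one machine receives a \emph{feasible} set whose elements are element-wise $\alpha$-close to the optimal centralized solution $\Ac{\zeta}$. Conditioned on that event the bound will follow deterministically from Lipschitz-continuity and the guarantee of $\BBalg$, while the probability estimate comes entirely from the random assignment of elements to machines.

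First I would establish the probabilistic step. Write $\Ac{\zeta}=\{e_1,\dots,e_r\}$ with $r\le\rho([\zeta])$, and call machine $j$ \emph{good} if $V_j$ meets every neighborhood $N_\alpha(e_i)$. Since each element lands on machine $j$ independently with probability $1/m$, the chance that $V_j$ misses all of $N_\alpha(e_i)$ is $(1-1/m)^{\abs{N_\alpha(e_i)}}\le e^{-\abs{N_\alpha(e_i)}/m}$, which by condition~(2) is at most $\left(\delta^{1/m}/\rho([\zeta])\right)^{\rho([\zeta])}$. A union bound over the $r\le\rho([\zeta])$ optimal elements bounds the probability that machine $j$ is not good by $\delta^{\rho([\zeta])/m}$. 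The exponent $1/m$ appearing in the hypothesis $\delta^{1/m}$ is calibrated precisely so that, across the $m$ partitions, the probability that \emph{no} machine is good is at most $\left(\delta^{\rho([\zeta])/m}\right)^m=\delta^{\rho([\zeta])}\le\delta$; here I would invoke the standard negative association of the uniform balls-into-bins assignment in order to multiply the per-machine failure probabilities. Hence some machine $j^*$ is good with probability at least $1-\delta$.

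The deterministic chain is then routine. On a good machine $j^*$, pick witnesses $e_i'\in V_{j^*}\cap N_\alpha(e_i)$ and set $S'=\{e_1',\dots,e_r'\}$. Applying $\lambda$-Lipschitz-continuity to the matching $(e_i,e_i')$ with $d(e_i,e_i')\le\alpha$ yields $f(S')\ge f(\Ac{\zeta})-\lambda\sum_i d(e_i,e_i')\ge f(\Ac{\zeta})-\lambda\alpha\rho([\zeta])$. Because $S'\subseteq V_{j^*}$ is feasible, the black box run on $V_{j^*}$ returns $A^{\BBalg}_{j^*}[\zeta]$ with $f(A^{\BBalg}_{j^*}[\zeta])\ge\tau\max_{A\in\zeta,\,A\subseteq V_{j^*}}f(A)\ge\tau f(S')$. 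Since $\Oa{\zeta}$ is the best per-machine solution and the output $A^{\BBalg d}[m,\zeta]$ is the better of $\Oa{\zeta}$ and $\Ob{\zeta}$, I obtain $f(A^{\BBalg d}[m,\zeta])\ge f(\Oa{\zeta})\ge\tau\left(f(\Ac{\zeta})-\lambda\alpha\rho([\zeta])\right)$, which is the claim.

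The hard part is the feasibility of the near-copy $S'$, since the black-box guarantee only lower-bounds $\max_{A\in\zeta,\,A\subseteq V_{j^*}}f(A)$ through genuinely feasible witnesses. For the cardinality constraint of Theorem~\ref{thm:alg-neighborhoods} this is automatic, as $\abs{S'}=r\le k$; but for a general family $\zeta$, swapping $e_i$ for a metric neighbor $e_i'$ may leave $\zeta$ (it can push the total knapsack cost past $\mathcal{R}$, or destroy matroid independence). I would resolve this by strengthening the density hypothesis to be \emph{constraint-aware}: require that for each $e_i$ every $e_i'\in N_\alpha(e_i)$ can legitimately replace $e_i$ in $\Ac{\zeta}$ while keeping the set in $\zeta$ (the metric analogue of the neighborhood condition, and exactly what makes the bound $\abs{S'}\le\rho([\zeta])$ usable). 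With that in hand, and noting that one should choose the $e_i'$ distinct so that $\abs{S'}=r$ for the Lipschitz comparison, the remaining steps are identical in form to those of Theorem~\ref{thm:alg-neighborhoods} and Theorem~\ref{th:ggreedy_const}.
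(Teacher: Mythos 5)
Your proof follows essentially the same route as the paper's, which establishes Theorem~\ref{thm:BBalg-neighborhoods} by repeating the proof of Theorem~\ref{thm:alg-neighborhoods} verbatim---the random-partition argument (Lemma~\ref{thm:random-close}) guaranteeing that some machine receives an $\alpha$-close copy of $\Ac{\zeta}$, followed by Lipschitz continuity and the per-machine approximation guarantee---with $k$ replaced by $\rho([\zeta])$ and the greedy factor of Lemma~\ref{lem:ggeneral} replaced by $\tau$. The feasibility gap you flag is genuine and the paper's proof silently ignores it: applying the black-box guarantee on machine $j^*$ requires the perturbed witness $S'$ to lie in $\zeta$, which is automatic under a cardinality constraint but not under matroid or knapsack constraints, so your constraint-aware strengthening of the neighborhood hypothesis (like your negative-association justification for multiplying the per-machine failure probabilities, which the paper does without comment) makes your argument strictly more careful than the one in the appendix.
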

The above result generalizes Theorem~\ref{thm:alg-neighborhoods} for maximizing non-negative submodular functions subject to different constraints.

\paragraph{Performance Guarantee for Very Large datasets.} Similarly, we can generalize Theorem~\ref{thm:alg-sampling} for maximizing non-negative submodular functions subject to more general constraints.
Suppose that our dataset is a finite sample $V$ drawn i.i.d. from an underlying {\em infinite} set $\mathcal{V}$, according to some (unknown) probability distribution. Let $\optsol$ be an optimal solution in the infinite set, i.e., $\optsol = \arg\max_{S \subseteq \mathcal{V}} f(S)$, such that around each $e_i\in \optsol$, there is a neighborhood of radius at least $\alpha^*$ where the probability density is at least $\beta$ at all points (for some constants $\alpha^*$ and $\beta$). 
%, and are not outliers.
Recall that $g:\reals \rightarrow \reals$ is the  growth function where $g(\alpha)$ measures the volume of a ball of radius $\alpha$ centered at a point in the metric space. 
%\footnote{Note that for simplicity we are assuming the metric to be homogeneous, so that the growth function is the same at every point. For inhomogeneous spaces, we require $g$ to be a uniform lower bound on the growth function at every point.}
\begin{theorem}\label{thm:BBalg-sampling}
For $\dst n\geq \frac{8\rho([\zeta]) m\log(\rho([\zeta])/\delta^{1/m})}{\beta  g(\frac{\eps}{\lambda \rho([\zeta])})}$, where $\frac{\eps}{\lambda \rho([\zeta])}\leq \alpha^*$, if  \Alg~assigns elements uniformly at random to $m$ processors and under the conditions that $f$ is $\lambda$-Lipschitz, and $\C$ is locally replaceable with respect to $\Ac{\zeta}$ with parameter $\alpha^*$, 
then with probability at least $(1- \delta)$, we have  $$f(A^{\BBalg d}[m,\zeta]))\geq \tau(f(\Ac{\zeta})-\eps).$$
%when $f$ is a $\lambda$-lipschitz function.
\end{theorem}

\paragraph{Performance Guarantee for Decomposable Functions.}
%In many settings, however, the objective $f$ depends itself on the entire dataset.
 For the case of decomposable functions described in Section~\ref{sec:extension}, 
%if we can evaluate the submodular functions associated with every data point $i \in V$ without access to the full set, 
the following generalization of Theorem~\ref{th:decom} holds for maximizing a non-negative submodular function subject to more general constraints. Let us define $n_0$ to be the smallest integer such that for  $n\geq n_0$ we have $\ln(n)/n\leq \epsilon^2/(m \cdot\rho([\zeta]))$. 
\begin{theorem}\label{th:BBalg-decom}
 For $n\geq \max(n_0, m\log(\delta/4m)/\epsilon^2)$, $\epsilon<1/4$, and under the assumptions of Theorem~\ref{thm:BBalg-sampling}, we have,
% and  that $\epsilon<1/4$  \Alg assigns elements of the ground set $V$ uniformly at random to $m$ machines, 
with probability at least $1-\delta$, $$f(A^{\BBalg d}[m,\zeta]))\geq \tau(f(\Ac{\zeta})-2\eps).$$
\end{theorem}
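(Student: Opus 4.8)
The plan is to reduce this statement to Theorem~\ref{thm:BBalg-sampling} by controlling the error incurred when each machine evaluates the \emph{empirical} objective $f_{V_i}$ (and the merge step evaluates $f_U$) in place of the true decomposable objective $f$. Since $f(S)=\frac{1}{|V|}\sum_{i\in V}f_i(S)$ with $f_i(S)\in[0,1]$, and each $V_i$ (respectively $U$) is a uniformly random subsample of $V$ of size roughly $n/m$, the empirical value $f_D(S)$ is an unbiased estimator of $f(S)$ for every fixed $S$, so the whole argument rests on a concentration estimate that is \emph{uniform} over the relevant sets.

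First I would establish that uniform concentration. For a fixed feasible set $S\in\zeta$, Hoeffding's inequality (which remains valid for sampling without replacement) gives $\Pr[|f_D(S)-f(S)|>\epsilon]\leq 2\exp(-2|D|\epsilon^2)$. Because every feasible set has cardinality at most $\rho([\zeta])$, there are at most $\binom{n}{\rho([\zeta])}\leq n^{\rho([\zeta])}$ candidate sets, so a union bound over all of them and over the $m+1$ subsamples $\{V_1,\dots,V_m,U\}$ yields, with probability at least $1-\delta/2$, that $|f_D(S)-f(S)|\leq\epsilon$ simultaneously for every feasible $S$ and every $D\in\{V_1,\dots,V_m,U\}$. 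The two stated lower bounds on $n$ are exactly what is needed to drive this failure probability below $\delta/2$: the condition $\ln(n)/n\leq\epsilon^2/(m\rho([\zeta]))$ ensures the $\rho([\zeta])\ln n$ term coming from the union bound is dominated by the $2(n/m)\epsilon^2$ exponent, while the second hypothesis on $n$ absorbs the remaining additive slack.

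Next I would argue that this uniform approximation preserves the black-box guarantee up to an additive $O(\epsilon)$ term. On the good event, running $\BBalg$ on machine $i$ with objective $f_{V_i}$ returns $A^{\BBalg}_i[\zeta]$ with $f_{V_i}(A^{\BBalg}_i[\zeta])\geq\tau\max_{A\in\zeta,\,A\subseteq V_i}f_{V_i}(A)$; replacing $f_{V_i}$ by $f$ costs at most $\epsilon$ on the left-hand set and at most $\tau\epsilon$ inside the maximum, so each machine's output (and, analogously, the merge step's output) is near-optimal for the \emph{true} $f$ up to $O(\epsilon)$. This lets me invoke the sampling argument of Theorem~\ref{thm:BBalg-sampling} with $f$ evaluated exactly: with probability at least $1-\delta/2$ the random partition places, in some single machine, a set whose elements are $\tfrac{\epsilon}{\lambda\rho([\zeta])}$-close to the optimal centralized solution $\Ac{\zeta}$, so that Lipschitz-continuity and the factor $\tau$ give $f(A^{\BBalg d}[m,\zeta])\geq\tau(f(\Ac{\zeta})-\epsilon)$ in the exact-evaluation model.

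Finally I would combine the two error sources: a union bound over the two bad events (each of probability at most $\delta/2$) gives overall success probability at least $1-\delta$, and under the intersection of the good events the $\epsilon$ loss from the geometric sampling argument and the additional $\epsilon$ loss from evaluating $f$ on subsamples rather than on all of $V$ simply add, yielding $f(A^{\BBalg d}[m,\zeta])\geq\tau(f(\Ac{\zeta})-2\epsilon)$. The main obstacle is the uniform-concentration step: one must observe that, although the black box may internally query many sets, its approximation guarantee only references feasible sets of cardinality at most $\rho([\zeta])$, so the union bound need range over just the at most $n^{\rho([\zeta])}$ such sets rather than over all of $2^V$; the delicate bookkeeping is then checking that the two hypotheses on $n$ are precisely calibrated to keep the failure probability below $\delta/2$ and that the two independent $\epsilon$'s combine cleanly into the stated $2\epsilon$.
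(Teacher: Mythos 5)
Your proposal follows essentially the same route as the paper: the paper proves this result by taking its proof of Theorem~\ref{th:decom} (Hoeffding without replacement for a fixed set, a union bound over the at most $n^{\rho([\zeta])}$ feasible sets and the $m$ machines, with the two hypotheses on $n$ calibrated to control this failure probability) and combining it with the geometric sampling guarantee of Theorem~\ref{thm:BBalg-sampling}, replacing $k$ by $\rho([\zeta])$ and the greedy factor by $\tau$, so that the sampling loss and the local-evaluation loss each contribute one $\eps$. Your only departures are minor refinements (explicitly including the second-stage sample $U$ in the union bound and splitting the failure probability as $\delta/2+\delta/2$), which if anything make the argument slightly more careful than the paper's.
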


\vspace{-.1cm}

\section{Experiments}\label{sec:experiments}
In our experimental evaluation we wish to address the following questions:
%In this section we  address three main questions regarding distributed submodular maximization: 
1) how well does \Alg perform compared to the centralized solution, 2) how good is the performance of \Alg  when using decomposable objective functions (see Section~\ref{sec:extension}), and finally 3) how well does \Alg scale in the context of massive datasets. To this end, we run  \Alg  on three scenarios: exemplar based clustering, active set selection in GPs and finding the maximum cuts in graphs. 

We compare the performance of our \Alg method to the following naive approaches: % discussed in Section~\ref{sec:problem}. More specifically, we demonstrate that \Alg outperforms the following distributed methods: 
\begin{itemize}
\item  {\em random/random}: in the first round each machine simply outputs $k$ randomly chosen elements from its local data points and in the second round  $k$ out of the merged $mk$ elements, are again randomly chosen as the final output. 
\item {\em random/greedy}: each machine  outputs $k$ randomly chosen elements from its local data points, then the standard greedy algorithm is run over $mk$ elements to find a solution of size $k$. 
\item {\em greedy/merge}: in the first round $k/m$ elements are chosen greedily from each machine and in the second round  they are merged to output a solution of size $k$. \item {\em greedy/max}: in the first round each machine  greedily finds a solution of size $k$ and in the second round the solution with the maximum value is reported. 
\end{itemize}
For \Alg, we let each of the $m$ machines select a set of size $\alpha k$, and select a final solution of size $k$  among the union of the $m$ solutions (i.e., among $\alpha k m$ elements). We present the performance of \Alg for different parameters $\alpha>0$. For datasets where we are able to find the centralized solution, we report the ratio of $f(A_{\text{dist}}[k])/f(\Agc{k})$, where $A_{\text{dist}}[k]$ is the distributed solution (in particular $\Agd{m,\alpha k,k}=A_{\text{dist}}[k]$ for \Alg).

\subsection{Exemplar Based Clustering} Our exemplar based clustering experiment involves \Alg applied to the clustering utility $f(S)$ (see Sec.~\ref{sec:examples}) with $d(x,x')= \| x-x' \|^2$.
We performed our experiments on a set of 10,000 \textit{Tiny Images} \citep{torralba200880}. 
%The images are obtained using Google's Image search and other engines and are associated with the nouns in English language. 
Each 32 by 32 RGB pixel image was represented by a 3,072 dimensional vector. 
%Following Torralba et al. \cite{torralba200880}, 
We subtracted from each vector the mean value, 
%(average of all components), 
 normalized it to unit norm, and used the origin as the auxiliary exemplar.
Fig.~\ref{fig:tiny} compares the performance of our approach to the benchmarks
%Random, Random/Greedy, Naive Greedy and Distributed Greedy with $\alpha = 2/m, 4/m$ 
%as well as the value of the best partition, 
with the number of exemplars set to $k = 50$, and varying  number of partitions $m$. 
It can be seen that \Alg significantly outperforms the benchmarks  
%Naive Greedy, Random/Greedy and Random approaches 
and provides a solution that is very close to the centralized one. Interestingly, even for very small $\alpha=\kappa/k<1$, \Alg performs very well.
%observations is that a slight increase in the number of examplars chosen in each partition can significantly improve the performance of \Alg.
Since the exemplar based clustering utility function is decomposable, we repeated the experiment for the more realistic case where the function evaluation in each machine was restricted to the local elements of the dataset in that particular machine (rather than the entire dataset). Fig \ref{fig:tiny_local} shows similar qualitative behavior for decomposable objective functions.

\begin{figure}[t]
%\vspace{-.5cm}
        \centering
        \begin{subfigure}[b]{0.5\textwidth}
                \centering
                \includegraphics[width=.9\textwidth, height=.9\textwidth]{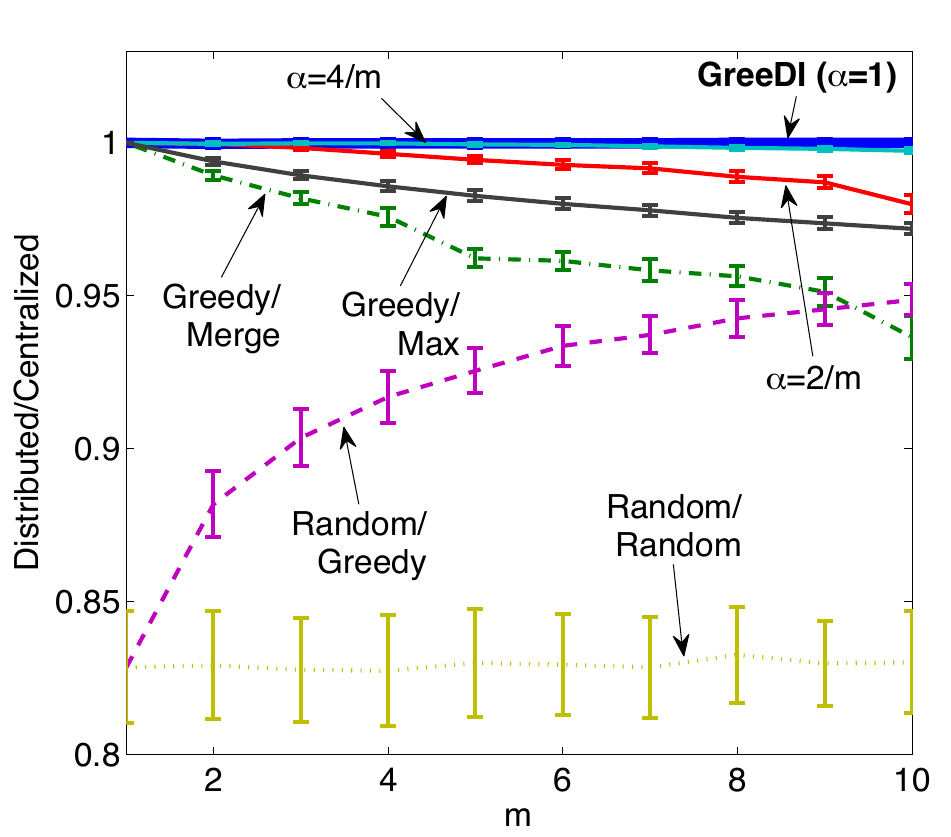}
                \vspace{-0.2cm}
                \caption{Global objective function}
                \label{fig:tiny}
        \end{subfigure}%
        \hspace{-.2cm}
        %~ %add desired spacing between images, e. g. ~, \quad, \qquad etc.
          %(or a blank line to force the subfigure onto a new line)
        \begin{subfigure}[b]{0.5\textwidth}
                \centering
                \includegraphics[width=.9\textwidth, height=.88\textwidth]{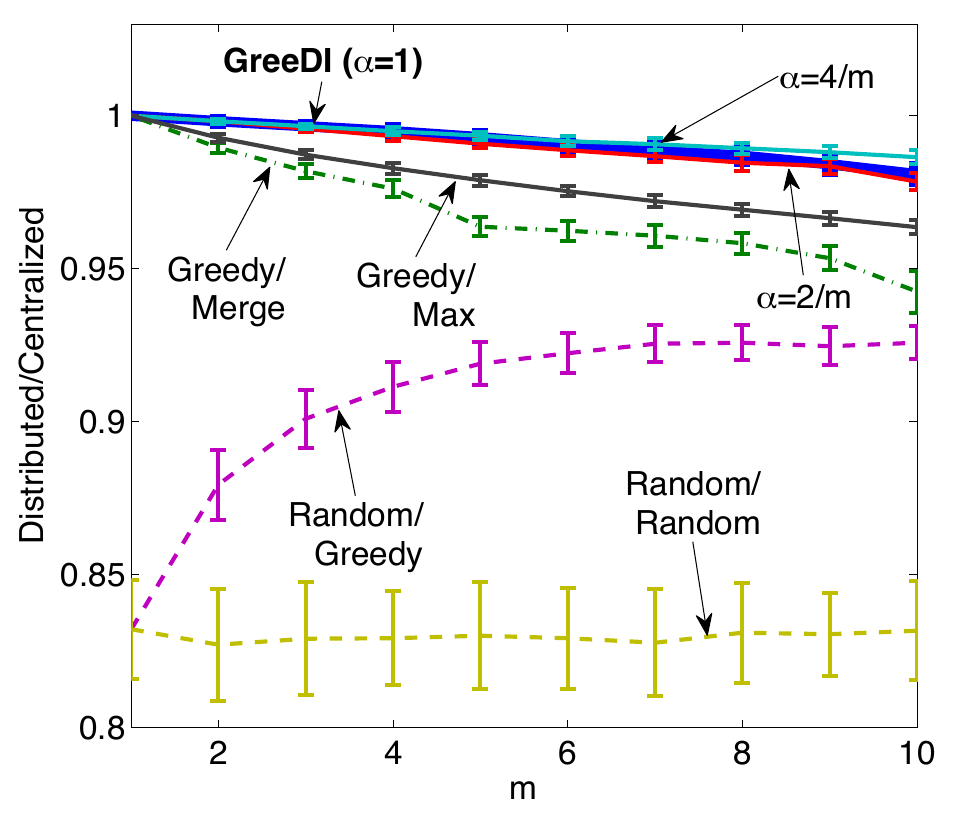}
                \vspace{-0.2cm}
                \caption{Local objective function}
                \label{fig:tiny_local}
        \end{subfigure}
        \hspace{-.2cm}
                \begin{subfigure}[b]{0.5\textwidth}
                \centering
                \includegraphics[width=.9\textwidth, height=.89\textwidth]{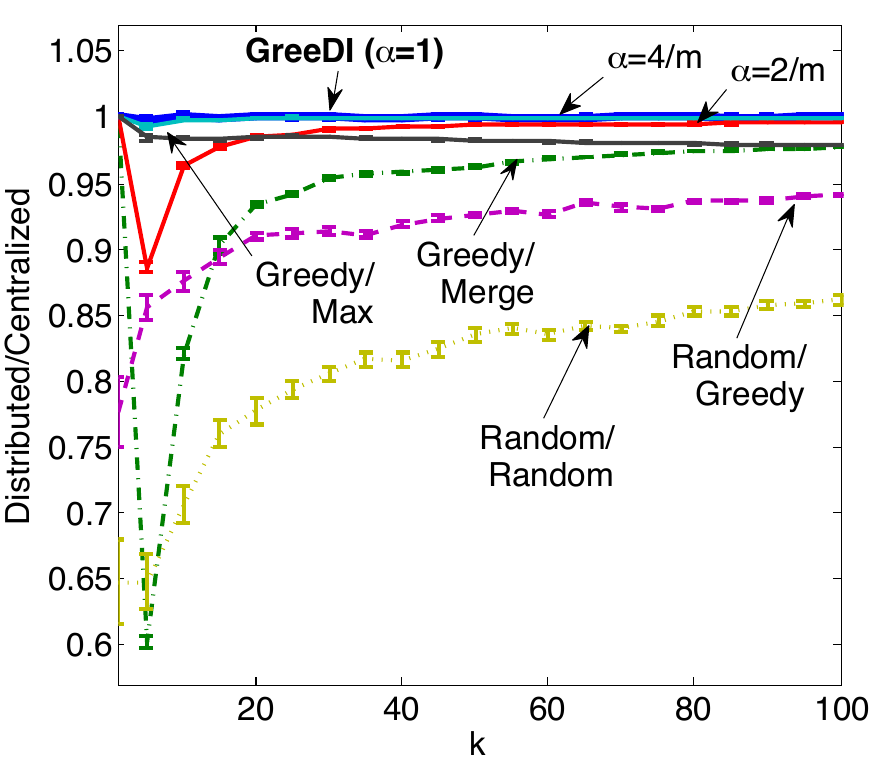}
                \vspace{-0.2cm}
                \caption{Global objective function}
                \label{fig:tinyk}
        \end{subfigure}%
        \hspace{-.2cm}
        %~ %add desired spacing between images, e. g. ~, \quad, \qquad etc.
          %(or a blank line to force the subfigure onto a new line)
        \begin{subfigure}[b]{0.5\textwidth}
                \centering
                \includegraphics[width=.9\textwidth, height=.88\textwidth]{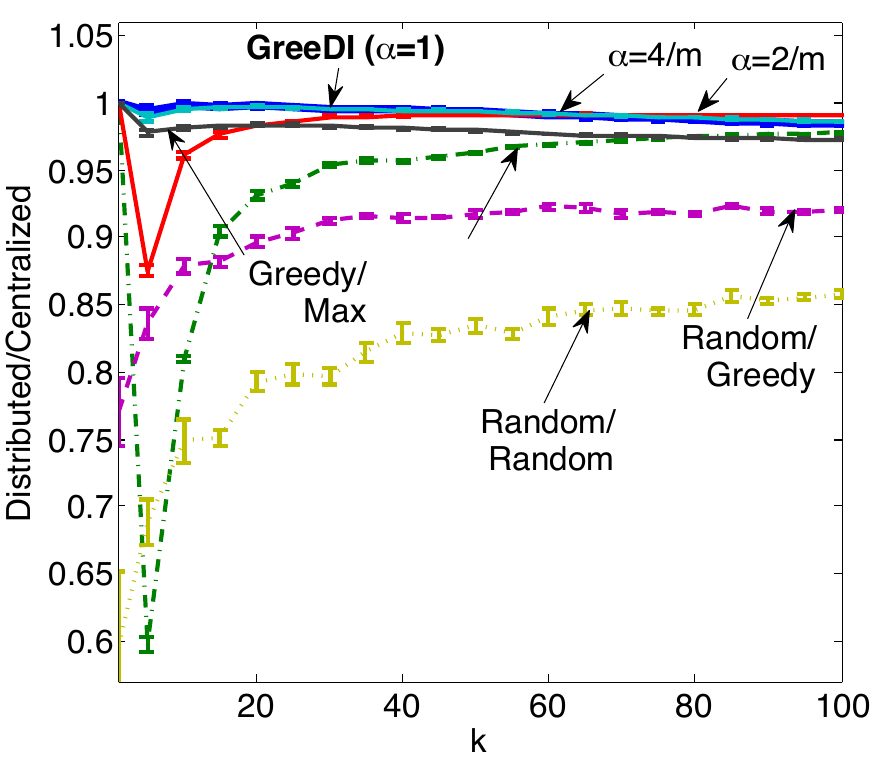}
                \vspace{-0.2cm}
                \caption{Local objective function}
                \label{fig:tinyk_local}
        \end{subfigure}
        %\hspace{-.2cm}
        \captionsetup{format=hang}
        \caption{
 		Performance of \Alg compared to the other benchmarks. a) and b) show the mean and standard deviation of the ratio of distributed vs.~centralized solution for global and local objective functions with budget $k = 50$ and varying the number $m$ of partitions.	 		
c) and d) show the same ratio for global and local objective functions for $m = 5$ partitions and varying budget $k$, for a set of 10,000 {\em Tiny Images}. 
  		}\label{fig:exp}
\end{figure}

\noindent \textbf{\textit{Large scale experiments with Hadoop.}} 
%We also designed two large scale experiments using Hadoop for active set selection and exemplar based clustering in order to test our algorithm on large scale datasets.
As our first large scale experiment, we applied \Alg to the whole dataset of 80,000,000 \textit{Tiny Images} \citep{torralba200880} in order to select a set of 64 exemplars. Our experimental infrastructure was a cluster of 10 quad-core machines running Hadoop with the number of reducers set to $m = 8000$. Hereby, each machine carried out a set of reduce tasks in sequence. We first partitioned the images uniformly at  random to  reducers. Each reducer separately performed the lazy greedy algorithm on its own set of 10,000 images ($\approx$123MB) to extract 64 images with the highest marginal gains w.r.t. the  local elements of the dataset in that particular partition. %The jobs were performed in parallel. 
We then merged the results and performed another round of lazy greedy selection on the merged results to extract the final 64 exemplars. Function evaluation in the second stage was performed w.r.t a  randomly selected subset of 10,000 images from the entire dataset. The maximum running time per reduce task was 2.5 hours.
As Fig. \ref{fig:tiny_80} shows, \Alg highly outperforms the other distributed benchmarks and can  scale well to very large datasets. Fig. \ref{fig:exemplars} shows a set of cluster exemplars discovered by \Alg where  Fig. \ref{fig:NNeighbors26} and Fig. \ref{fig:NNeighbors63} show 100 nearest images to exemplars 26 and 63 (shown with red borders) in Fig. \ref{fig:exemplars}.

\begin{figure}[t]
%\vspace{-.5cm}
        \centering
		\begin{subfigure}[b]{0.5\textwidth}
                \centering
                \includegraphics[width=.9\textwidth, height=.9\textwidth]{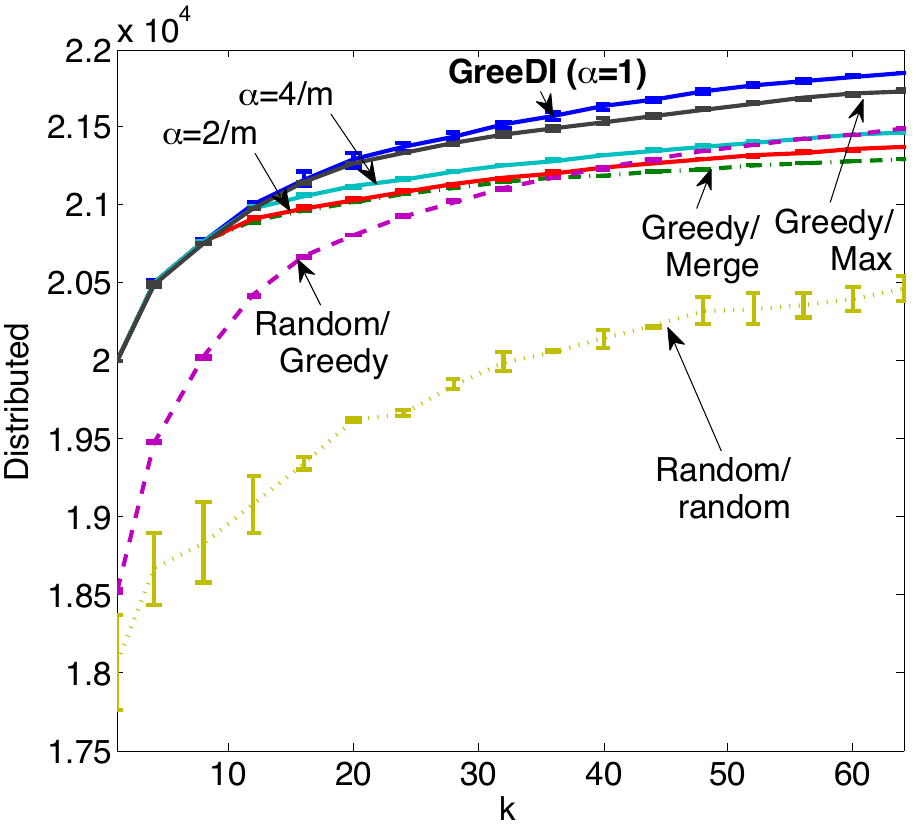}
                \vspace{-0.2cm}
                \caption{Tiny Images 80M}
                \label{fig:tiny_80}
        \end{subfigure}      
        \hspace{4pt}
        \begin{subfigure}[b]{0.45\textwidth}
                \centering
                \includegraphics[width=\textwidth]{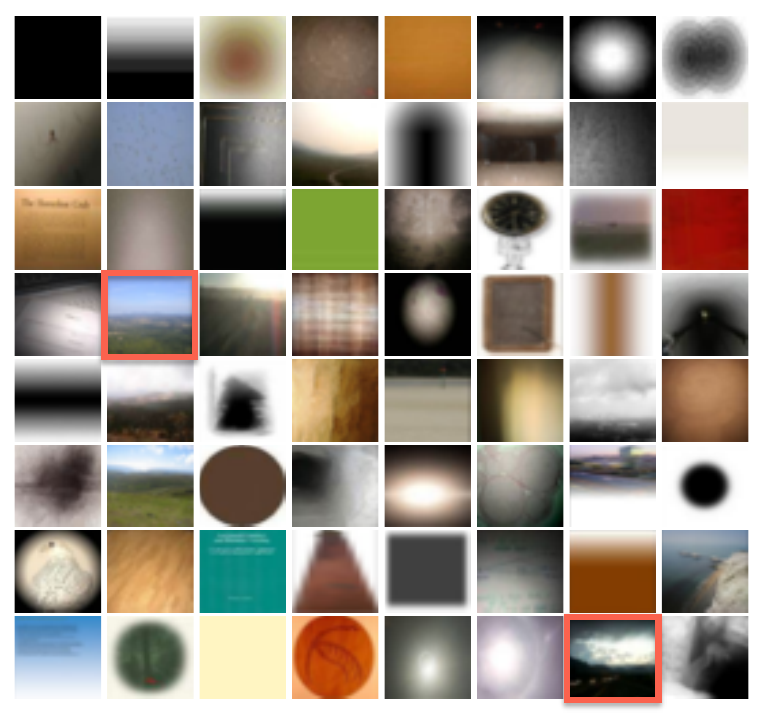}
                %{}
                \vspace{-0.5cm}
                \caption{}
                \label{fig:exemplars}
        \end{subfigure}     

        \begin{subfigure}[b]{0.435\textwidth}
                %\centering
                \hspace{20pt}
                \includegraphics[width=\textwidth]{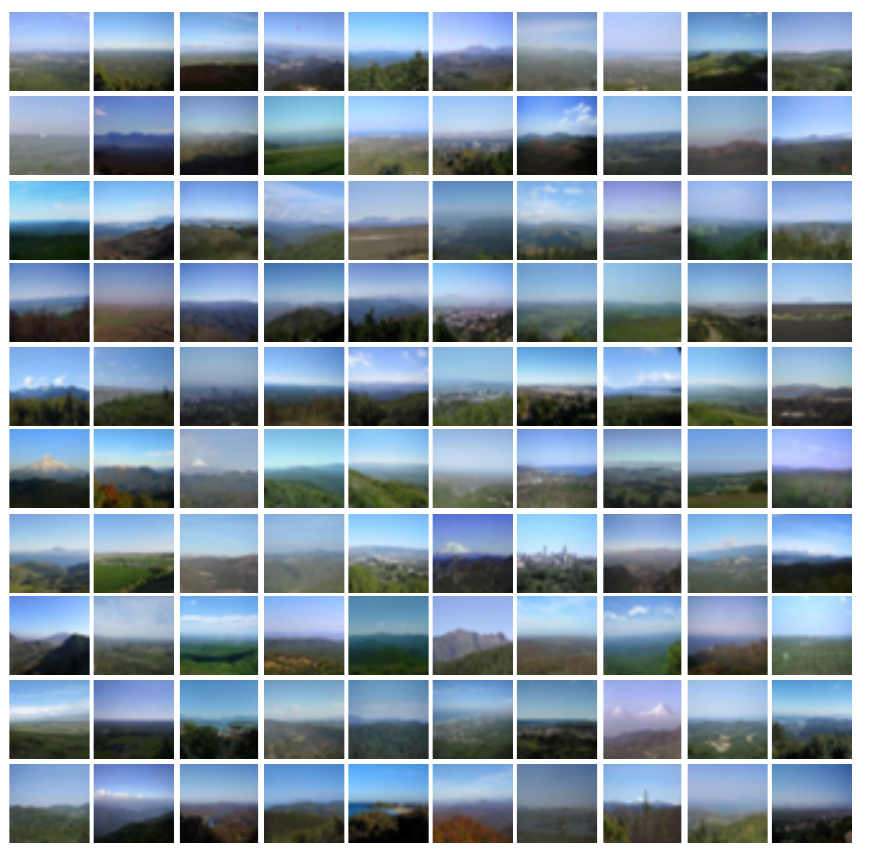}%{neighbors26_63.pdf}
                %\vspace{-0.7cm}
                \caption{}
                \label{fig:NNeighbors26}
        \end{subfigure} 
        \hspace{30pt}
        \begin{subfigure}[b]{0.46\textwidth}
                \centering
                \includegraphics[width=\textwidth]{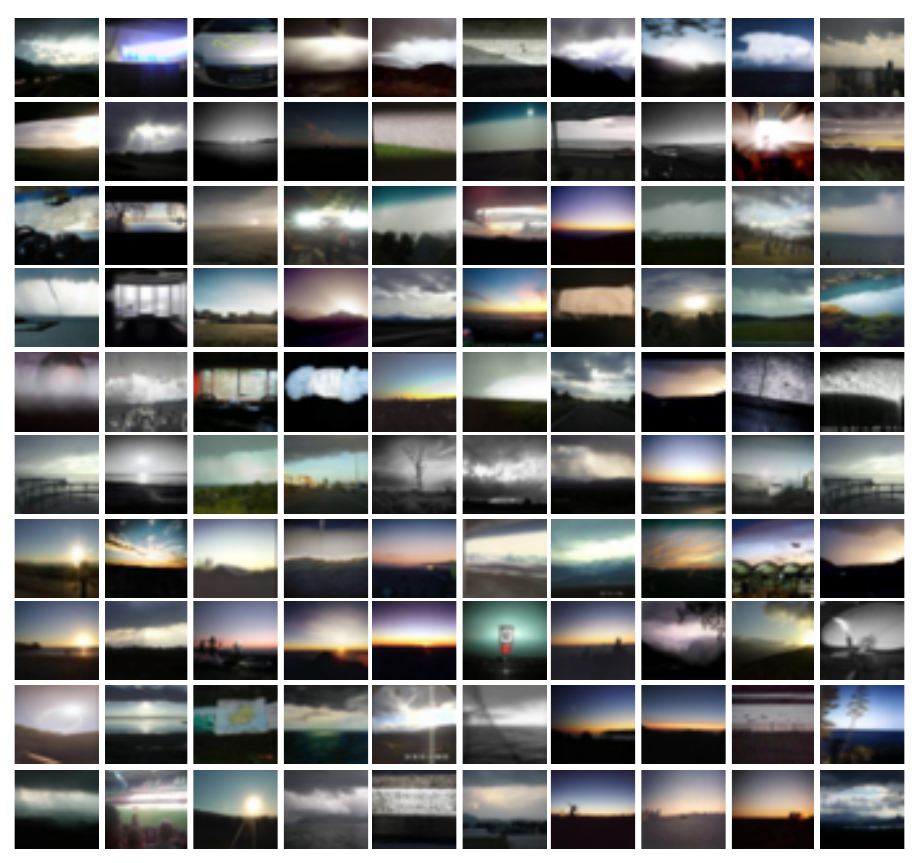}%{neighbors26_63.pdf}
                %\vspace{-0.7cm}
                \caption{}
                \label{fig:NNeighbors63}
        \end{subfigure}
        \captionsetup{format=hang}  
        \caption{
 		Performance of \Alg compared to the other benchmarks. a) shows the distributed solution with $m = 8000$ and varying $k$ for local objective functions on the whole dataset of 80,000,000 {\em Tiny Images}. b) shows a set of cluster exemplars discovered by \Alg, and each column in c) shows 100 images nearest to exemplars 26 and d) shows 100 images nearest to exemplars 63 in b). 		
 		}\label{fig:exp_tiny}
    %    \vspace{-0.5cm}
    \vspace{-.4cm}
\end{figure}        
        
\subsection{Active Set Selection} Our active set selection experiment involves \Alg applied to the information gain $f(S)$ (see Sec. \ref{sec:examples}) with Gaussian kernel, $h = 0.75$ and $\sigma = 1$. We used the \textit{Parkinsons Telemonitoring} dataset \citep{tsanas2010enhanced} consisting of 5,875 bio-medical voice measurements with 22 attributes from people with early-stage Parkinson's disease. 
We normalized the vectors to zero mean and unit norm.
%by subtracting the mean of the voice measurement vectors from each row; then we normalized the resulting vectors to unit norm.
Fig. \ref{fig:parkinson_k} compares the performance \Alg to the benchmarks with fixed $k = 50$ and varying  number  of partitions $m$. Similarly, Fig \ref{fig:parkinson_m} shows the results for fixed $m = 10$ and varying $k$. We find that \Alg significantly outperforms the benchmarks.

\begin{figure}[t]
        \begin{subfigure}[b]{0.5\textwidth}
                \centering
                \includegraphics[width=.9\textwidth, height=.89\textwidth]{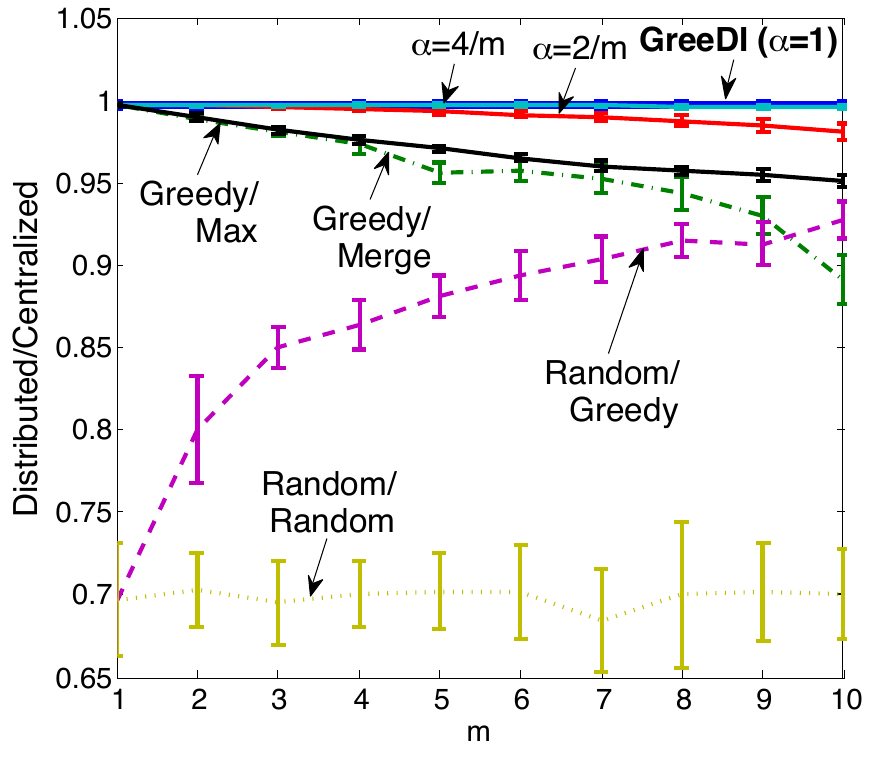}
                \vspace{-0.2cm}
                \caption{Parkinsons Telemonitoring}
                \label{fig:parkinson_m}
        \end{subfigure}%
        \begin{subfigure}[b]{0.5\textwidth}
                \centering
                \includegraphics[width=.9\textwidth, height=.89\textwidth]{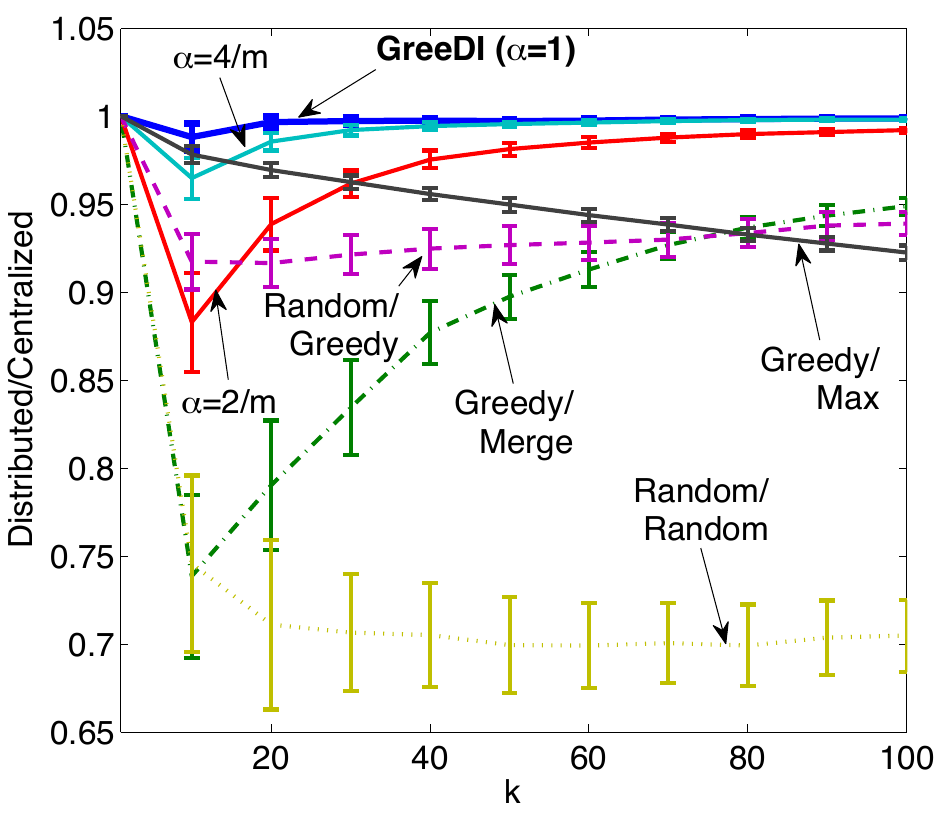}
                \vspace{-0.2cm}
                \caption{Parkinsons Telemonitoring}
                \label{fig:parkinson_k}
        \end{subfigure}
        \captionsetup{format=hang}
        \caption{
 		Performance of \Alg compared to the other benchmarks. a) shows the ratio of distributed vs.~centralized solution with $k = 50$ and varying $m$ for {\em Parkinsons Telemonitoring}. b) shows the same ratio with $m = 10$ and varying $k$ on the same dataset. %\ak{performance in (a) and (b) for small $\alpha$ is extremely inconsistent. Why?}
 		}\label{fig:exp_park}
    %    \vspace{-0.5cm}
    \vspace{-.4cm}
\end{figure}

\noindent \textbf{\textit{Large scale experiments with Hadoop.}} 
\looseness -1 Our second large scale experiment
%uses \textit{Yahoo!~Front Page Today Module} \cite{website:webscope} and is designed to test our algorithm on a large scale dataset. Dataset 
consists of 45,811,883 user visits from the Featured Tab of the Today Module on Yahoo! Front Page \citep{website:webscope}. 
%Each row in the log file corresponds to one of 45,811,883 user clicks for news articles displayed in the Featured Tab of the Today Module on Yahoo! Front Page during the first ten days in May 2009. 
For each visit, both the user and each of the candidate articles are associated with a feature vector of dimension 6. Here, we used the normalized user features. Our experimental setup was a cluster of 8 quad-core machines running Spark with the number of reducers set to $m = 32$. 
Each reducer performed the lazy greedy algorithm on its own set of $\approx$1,431,621 vectors ($\approx$34MB) in order to extract 256 elements with the highest marginal gains w.r.t the local elements of the dataset in that particular partition. We then merged the results and performed another round of lazy greedy selection on the merged results to extract the final active set of size 256. 
%We should mention that  function evaluation in the second stage was performed w.r.t a  randomly selected subset of 1,431,621 vectors from the entire dataset. 
The maximum running time per reduce task was 12 minutes for selecting 128 elements and 48 minutes for selecting 256 elements. %2.5 hours.
Fig. \ref{fig:webscope} shows the performance of \Alg compared to the benchmarks. We note again that \Alg significantly outperforms the other distributed benchmarks and can  scale well to very large datasets.

\begin{figure}[t]
%	   \begin{subfigure}[b]{0.5\textwidth}
                \centering
                \includegraphics[width=.5\textwidth, height=.5\textwidth]{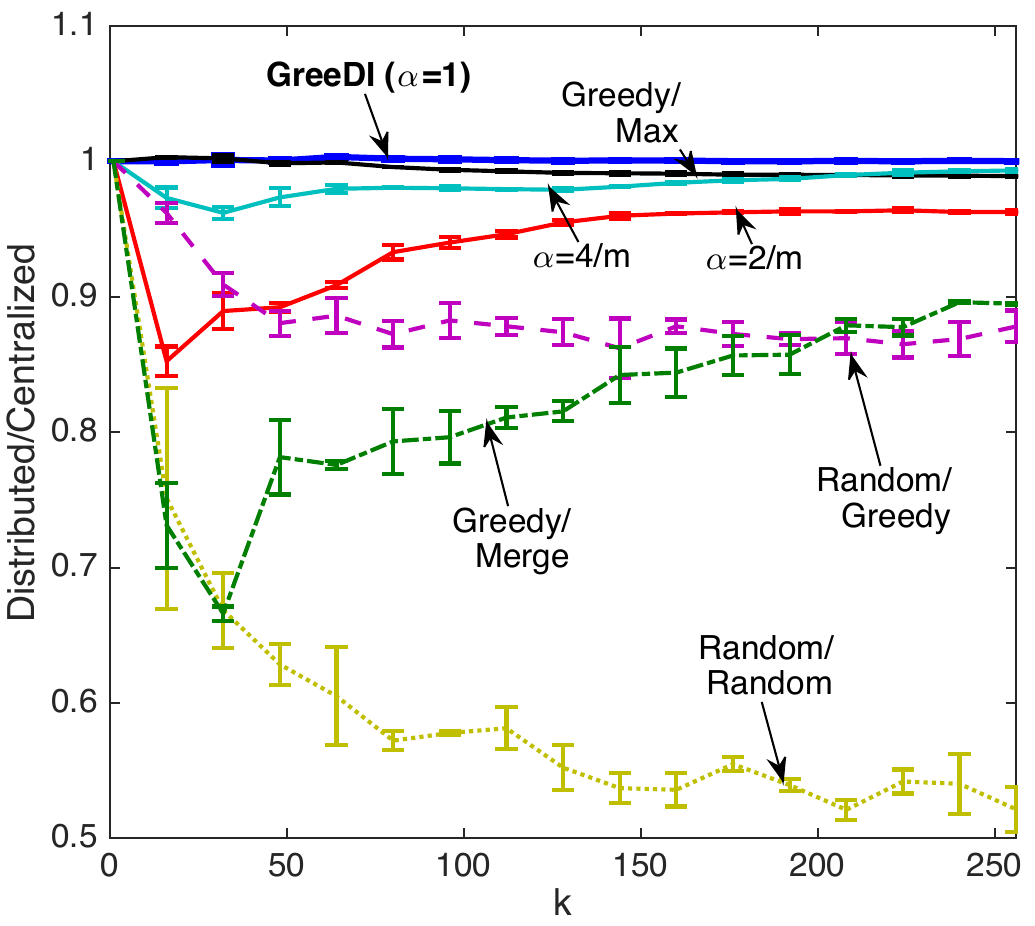}
                \vspace{-0.2cm}
        \captionsetup{format=hang}
        \caption{
 		Performance of \Alg with $m = 32$ and varying budget $k$ compared to the other benchmarks on {\em Yahoo!~Webscope data}.}\label{fig:webscope}
    %    \vspace{-0.5cm}
    \vspace{-.4cm}
\end{figure}

\noindent \textbf{\textit{Performance Comparison.}}
Fig. \ref{fig:speed} shows the speedup of \Alg compared to the centralized greedy benchmark for different values of $k$ and varying number of partitions $m$.
As Fig. \ref{fig:speed_m} shows, for small values of $m$, the speedup is almost linear in the number of machines. However, for large values of $m$ the running time of the second stage of \Alg increases and ultimately dominates the whole running time. Hence, we do not observe a linear speedup anymore. This effect can be observed in Fig. \ref{fig:speed_k}. For larger values of $k$, the speedup is higher on fewer machines, but decreases more quickly by increasing $m$, as the second stage takes longer to complete.

\begin{figure}[t]
        \begin{subfigure}[b]{0.45\textwidth}
                \centering
                \includegraphics[width=1\textwidth, height=1\textwidth]{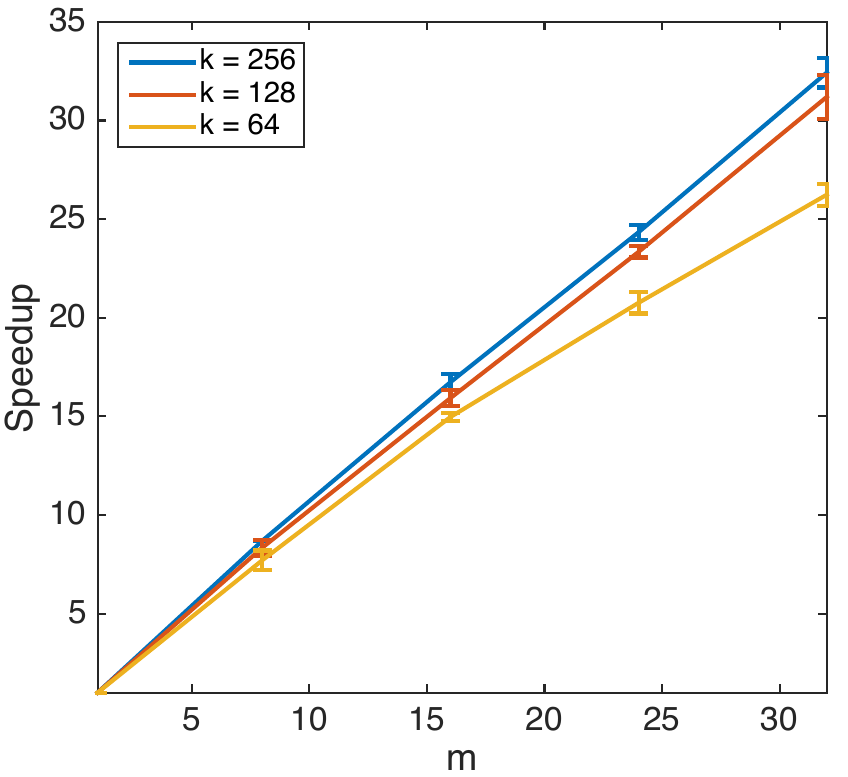}
                \vspace{-0.2cm}
                \caption{Yahoo! front page}
                \label{fig:speed_m}
        \end{subfigure}%
        \hspace{.8cm}
        \begin{subfigure}[b]{0.45\textwidth}
                \centering
                \includegraphics[width=.995\textwidth, height=.995\textwidth]{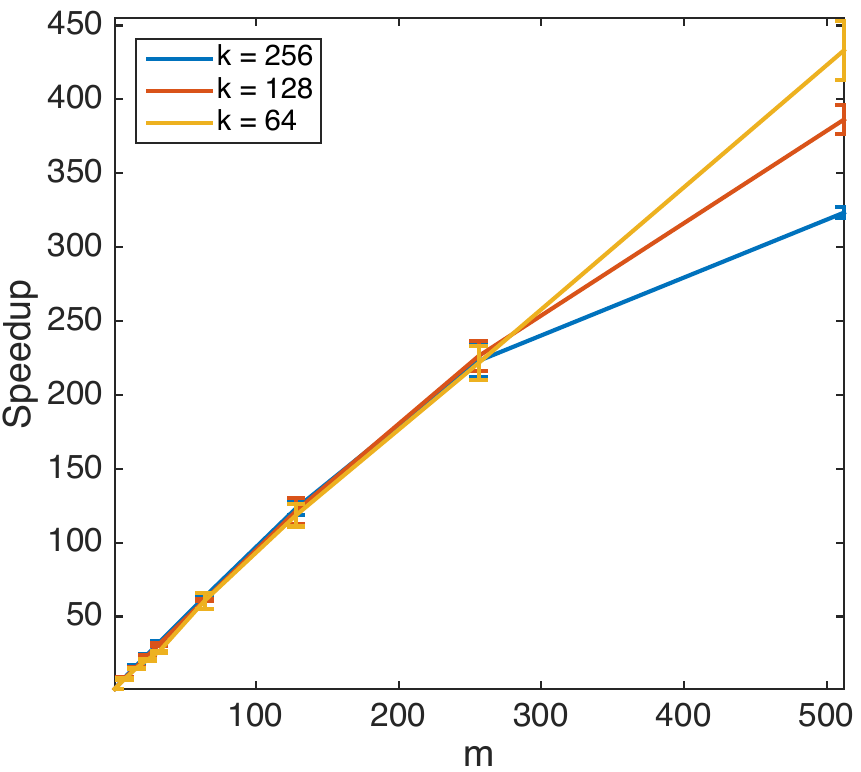}
                \vspace{-0.2cm}
                \caption{Yahoo! front page}
                \label{fig:speed_k}
        \end{subfigure}
        \captionsetup{format=hang}
        \caption{
 		Running time of \Alg compared to the centralized greedy algorithm. a) shows the ratio of centralized vs.~distributed solution with $k = 64, 128, 256$ and up to $m=32$ machines for {\em Yahoo Webscope} data. b) shows the same ratio with $k = 64, 128, 256$ and up to $m=512$ machines on the same dataset. Both experiments are performed on a cluster of 8 quad core machines.
 		}\label{fig:speed}
    %    \vspace{-0.5cm}
    \vspace{-.4cm}
\end{figure}

\subsection{Non-Monotone Submodular Function (Finding Maximum Cuts)}
\looseness -1 We also applied \Alg to the problem of finding maximum cuts in graphs. In our setting we used a \textit{Facebook-like social network} \citep{opsahl2009clustering}.
This dataset includes the users that have sent or received at least one message in an online student community at University of California, Irvine and consists of 1,899 users and 20,296 directed ties. Fig. \ref{fig:facebook_m} and \ref{fig:facebook_k} show the performance of \Alg applied to the cut function on graphs. We evaluated the objective function locally on each partition. Thus, the links between the  partitions are disconnected. Since the problem of finding the maximum cut in a graph is non-monotone submodular, we applied the RandomGreedy algorithm proposed by \cite{buchbinder2014submodular} to find the near optimal solution in each partition.

Although the cut function does not decompose additively over individual data points, perhaps surprisingly, \Alg still performs very well, and significantly outperforms the benchmarks. This suggests that our approach is quite robust, and may be more generally applicable.

\begin{figure}[t]
        \begin{subfigure}[b]{0.5\textwidth}
                \centering
                \includegraphics[width=.9\textwidth, height=.89\textwidth]{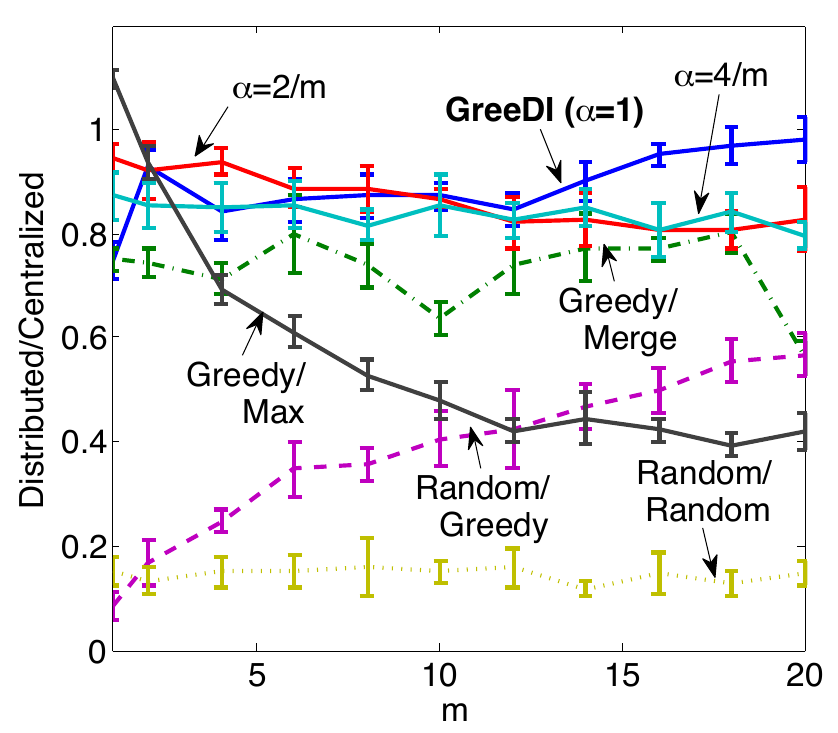}
                \vspace{-0.2cm}
                \caption{Facebook-like social network}
                \label{fig:facebook_m}
        \end{subfigure}%
%        \hspace{.3cm}
        \begin{subfigure}[b]{0.5\textwidth}
                \centering
                \includegraphics[width=.9\textwidth, height=.89\textwidth]{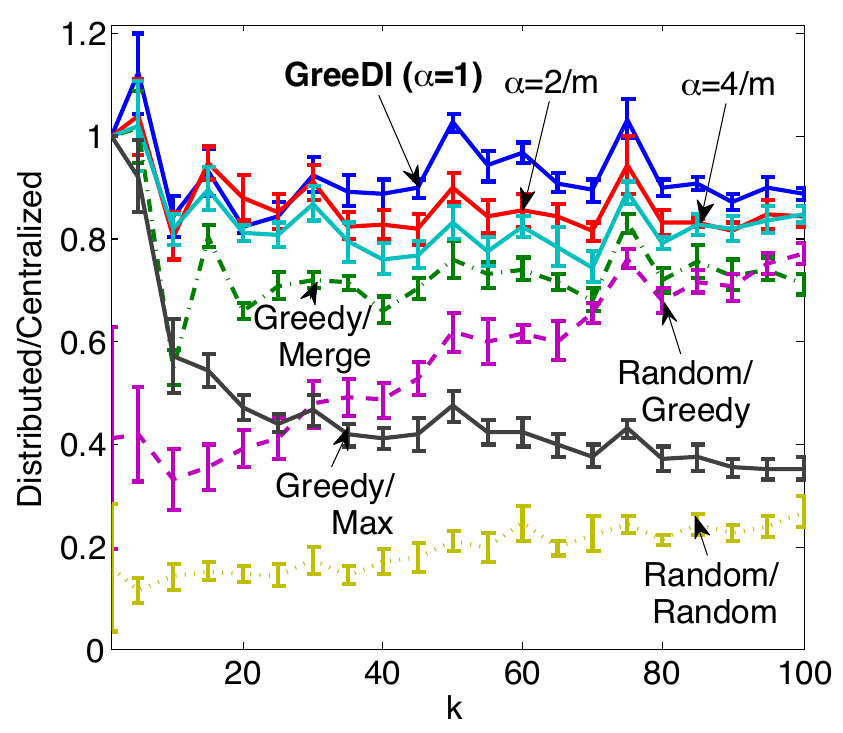}
                \vspace{-0.2cm}
%                \hspace{.5cm}
                \caption{Facebook-like social network}
                \label{fig:facebook_k}
        \end{subfigure}% \\
        \captionsetup{format=hang}
\caption{
        Performance of \Alg compared to the other benchmarks. a) shows the mean and standard deviation of the ratio of distributed to centralized solution for budget $k = 20$ with varying number of machines $m$ and b) shows the same ratio for varying budget $k$ with $m=10$ on {\em Facebook-like social network}. 
        }\label{fig:exp_facebook}
    %    \vspace{-0.5cm}
    \vspace{-.4cm}
\end{figure}

\subsection{Comparision with {Greedy Scaling}.}\label{sec:comparision}
\cite{kumar2013fast} recently proposed an alternative approach--\textsc{GreedyScaling}--for parallel maximization of submodular functions. \textsc{GreedyScaling} is a randomized algorithm that carries out a number (typically less than $k$) rounds of MapReduce computations. 
We applied \Alg  to the submodular coverage problem in which  given a collection $V$ of sets, we would like to pick at most $k$ sets from $V$ in order to maximize the size of their union.  We compared the performance of our \Alg algorithm to the reported performance of \textsc{GreedyScaling} on the same datasets, namely \textit{Accidents} \citep{geurts2003profiling} and \textit{Kosarak} \citep{website:kosarak}. %\footnote{Datasets are available at \url{http://fimi.ua.ac.be/data/}}) .
As Fig \ref{fig:ispaa_accident_dc} and \ref{fig:ispaa_kosarak_dc} shows, \Alg  outperforms \textsc{GreedyScaling}  on the {\em Accidents} dataset and its performance is comparable to that of \textsc{GreedyScaling}  in the {\em Kosarak} dataset. 

\begin{figure}[t]
        \begin{subfigure}[b]{0.5\textwidth}
                \centering
                \includegraphics[width=.9\textwidth, height=.89\textwidth]{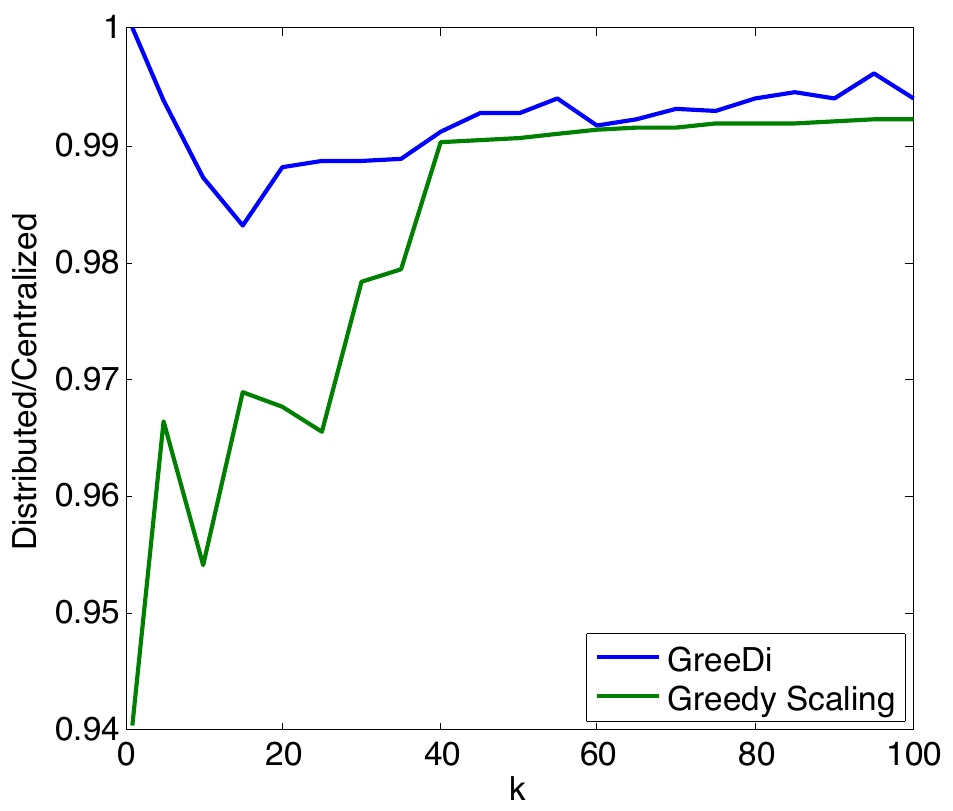}
                \vspace{-0.2cm}
                \caption{Accidents}
                \label{fig:ispaa_accident_dc}
        \end{subfigure}            
        %\\
        %~ %add desired spacing between images, e. g. ~, \quad, \qquad etc.
          %(or a blank line to force the subfigure onto a new line)
          \hspace{-.5cm}
        \begin{subfigure}[b]{0.5\textwidth}
                \centering                
                \includegraphics[width=.93\textwidth, height=.89\textwidth]{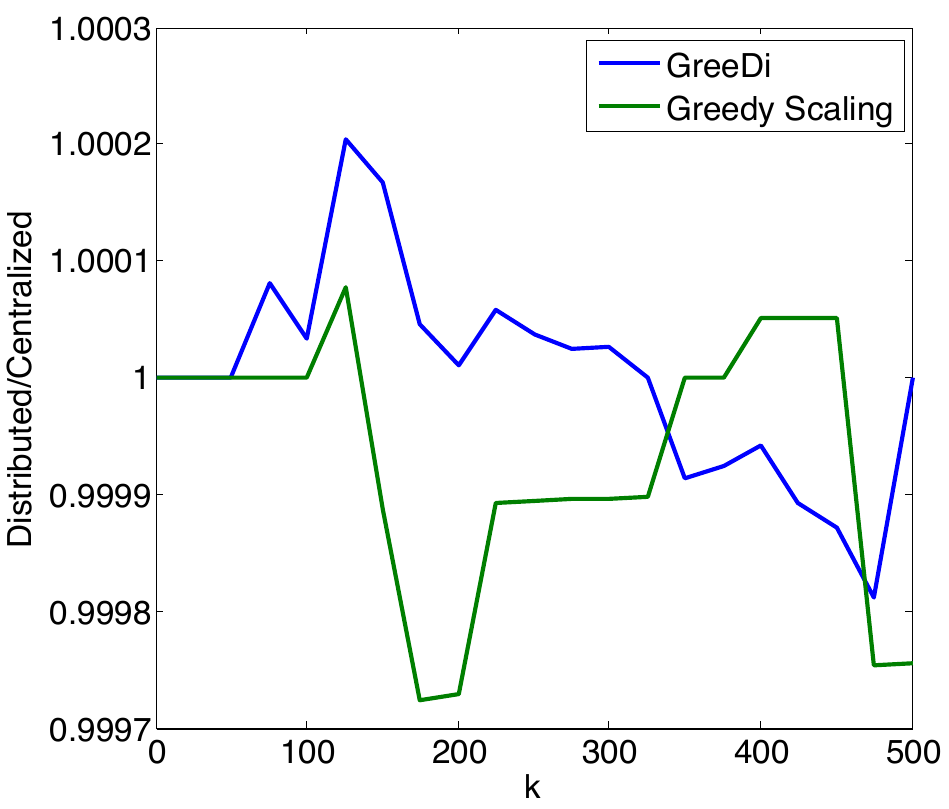}
                \vspace{-0.2cm}
                \caption{Kosarak}
                \label{fig:ispaa_kosarak_dc}
        \end{subfigure}
        \captionsetup{format=hang}
        \caption{
        Performance of \Alg compared to the GreedyScaling algorithm of \cite{kumar2013fast} (as reported in their paper). a) shows the ratio of distributed to centralized solution on \textit{Accidents} dataset with 340,183 elements and b) shows the same ratio for \textit{Kosarak} dataset with 990,002 elements. The results are reported for varying budget $k$ and varying number of machines $m=n/\mu$ where $\mu=O(kn^\delta \log n)$ and $n$ is the size of the dataset. The results are reported for $\delta = 1/2$. Note that the results presented by \cite{kumar2013fast} indicate that GreedyScaling generally requires a substantially larger number of MapReduce rounds compared to \Alg.
        }\label{fig:exp_compare}
    %    \vspace{-0.5cm}
    \vspace{-.4cm}
\end{figure}

\vspace{-.2cm}
\section{Conclusion}\label{sec:conclusion}
 We have developed an efficient distributed protocol \Alg,  for constrained submodular function maximization. We have theoretically  analyzed the performance of our method and showed that under certain natural conditions it performs very close to the centralized (albeit impractical in massive datasets) solution. We have also demonstrated the effectiveness  of our approach through extensive experiments, including active set selection in GPs on a dataset of 45 million examples, and exemplar based summarization of a collection of 80 million images using Hadoop. We believe our results provide an important step towards solving submodular optimization problems in very large scale, real applications. 

%{\noindent \em Remainder omitted in this sample. See http://www.jmlr.org/papers/ for full paper.}

% Acknowledgements should go at the end, before appendices and references
\vspace{1cm}
\acks{This research was supported by SNF 200021-137971, DARPA MSEE FA8650-11-1-7156, ERC StG 307036, a Microsoft Faculty Fellowship, an ETH Fellowship, and a Scottish Informatics and Computer Science Alliance.}

% Manual newpage inserted to improve layout of sample file - not
% needed in general before appendices/bibliography.

%\newpage

\appendix

\section{Proofs}\label{sec:analysis}

This section presents the complete proofs of theorems presented in the article.

\subsection{Proof of Theorem~\ref{th:gap}}
\noindent
$\Rightarrow$ direction:\\
The proof easily follows from the following lemmas.

\begin{lemma}
$\dst \max_{i} f(\Aic[k])\geq \frac{1}{m}f(\Ac{k})$.
\label{lemm_m}
\end{lemma}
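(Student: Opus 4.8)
The plan is to exploit the fact that the global optimum $\Ac{k}$ is split across the $m$ partitions, together with the subadditivity that nonnegative submodular functions enjoy. First I would define, for each machine $i$, the trace of the optimal centralized solution on that machine, $O_i = \Ac{k} \cap V_i$. Since $V_1,\dots,V_m$ form a partition of $V$, the sets $O_1,\dots,O_m$ are pairwise disjoint and satisfy $\bigcup_{i=1}^m O_i = \Ac{k}$. Each $O_i$ lives entirely inside $V_i$ and has cardinality $|O_i|\le |\Ac{k}| \le k$, so it is a feasible candidate for the local optimization problem on machine $i$.

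The key step is subadditivity. Submodularity gives $f(S\cup T) + f(S\cap T)\le f(S)+f(T)$ for all $S,T\subseteq V$; combined with nonnegativity, which forces $f(S\cap T)\ge 0$, this yields $f(S\cup T)\le f(S)+f(T)$. Applying this inequality inductively over the disjoint pieces $O_1,\dots,O_m$ gives
\[
f(\Ac{k}) \;=\; f\Big(\textstyle\bigcup_{i=1}^m O_i\Big) \;\le\; \sum_{i=1}^m f(O_i).
\]
Then I would invoke the optimality of $\Aic[k]$, which is by definition a maximizer of $f$ over size-$k$ subsets of $V_i$. Since each $O_i\subseteq V_i$ is feasible with $|O_i|\le k$, optimality gives $f(O_i)\le f(\Aic[k])$. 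Chaining the bounds,
\[
f(\Ac{k}) \;\le\; \sum_{i=1}^m f(O_i) \;\le\; \sum_{i=1}^m f(\Aic[k]) \;\le\; m\,\max_i f(\Aic[k]),
\]
and dividing through by $m$ yields $\max_i f(\Aic[k]) \ge \frac{1}{m} f(\Ac{k})$, as claimed.

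The only genuinely substantive ingredient is the subadditivity step, and even that is a standard consequence of submodularity plus nonnegativity; everything else is bookkeeping about the partition and about feasibility of the pieces $O_i$ in their respective local problems. I do not anticipate any real obstacle here — the lemma is essentially a pigeonhole argument dressed up with subadditivity, and the factor $1/m$ arises directly from distributing the $m$ disjoint pieces of the optimum and taking the best machine.
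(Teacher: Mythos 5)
Your proof is correct and follows essentially the same route as the paper's: the paper also decomposes the optimum as $B_i = \Ac{k}\cap V_i$, bounds $f(\Ac{k}) \leq \sum_i f(B_i)$ via submodularity (stated there as a telescoping sum of marginal gains $f(B_i \mid B_{i-1},\dots,B_1)\leq f(B_i)$, which is the same subadditivity fact you derive), and then applies local optimality $f(B_i)\leq f(\Aic[k])$ and averaging. The only cosmetic difference is that you make the reliance on nonnegativity explicit, which the paper leaves implicit.
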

\begin{proof}
Let $B_i$ be the elements in $V_i$ that are contained in the optimal solution, $ B_i = \Ac{k} \cap V_i $.  
Then we have:
$$ f(\Ac{k}) = f(B_1 \cup \ldots \cup B_m) = f(B_1) + f(B_2 | B_1) + \ldots + f(B_m | B_{m-1} , \ldots , B_1).$$
Using submodularity of $f$, for each $i \in \{1 \ldots m\}$, we have $$f(B_i | B_{i-1} \ldots B_1) \leq f(B_i),$$ and thus,
$$f(\Ac{k}) \leq f(B_1) + \ldots + f(B_m).$$
Since, $ f(\Aic[k]) \geq f(B_i) $, we have $$ f(\Ac{k}) \leq f(A^c_1[k]) + \ldots + f(A^c_m[k]).$$
Therefore, $$f(\Ac{k}) \leq m \; \max_{i} f(\Aic[k]). $$ 
%%%%%%%%%%%%%%%%%%%%%%%%% Greedy %%%%%%%%%%%%%%%%%%%%%%%%%%%
% $ G_i = Greedy_{|B_i| \leq k} F(B_i)$ \\
% $ f(A_i) \geq (1-1/e) f(A_i^*)$ \\
% $ \hat{G} = Greedy_{|A| \leq k} F(G_1 \cup \ldots \cup G_m)$ \\
% $ \tilde{A} = argmax_{|A| \leq k, A \subset G_1 \cup \ldots \cup G_m} f(A)$\\
% $ f(\tilde{A}) \geq max_i f(G_i) \geq (1-1/e) max_i f(B_i^*)$ \\
% $ f(\tilde{A}) \geq \frac{1}{m} (1-1/e) f(A^*)$ \\
% $f(\hat{A}) \geq \frac{1}{m}(1-1/e)^2 f(A^*)$ \\ \\
%%%%%%%%%%%%%%%%%%%%%%%%%%%%%%%%%%%%%%%%%%%%%%%%%%%%%%%%
\end{proof}
\begin{lemma}
$\dst \max_{i} f(\Aic[k])\geq \frac{1}{k}f(\Ac{k})$.\label{lemm_k}
\end{lemma}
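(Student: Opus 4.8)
The plan is to bound the optimal centralized value by a sum of singleton values and then apply an averaging argument. First I would use nonnegativity together with submodularity to establish the subadditive estimate $f(\Ac{k}) \leq \sum_{e \in \Ac{k}} f(\{e\})$. Writing $\Ac{k} = \{a_1, \dots, a_k\}$, this follows by telescoping $f(\Ac{k}) = f(\emptyset) + \sum_{j=1}^{k}\big(f(\{a_1,\dots,a_j\}) - f(\{a_1,\dots,a_{j-1}\})\big)$, applying the diminishing-returns property to each marginal term, $f(\{a_1,\dots,a_j\}) - f(\{a_1,\dots,a_{j-1}\}) \leq f(\{a_j\}) - f(\emptyset)$, and then discarding the nonnegative $f(\emptyset)$ contributions. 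Note that monotonicity is not needed here; nonnegativity and submodularity suffice.

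Second, since $\Ac{k}$ contains exactly $k$ elements, a pigeonhole (averaging) argument applied to the bound above produces an element $e^\star \in \Ac{k}$ whose singleton value dominates the average, i.e. $f(\{e^\star\}) \geq \frac{1}{k} f(\Ac{k})$.

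Third, I would locate the relevant partition. The element $e^\star$ is assigned to exactly one of the sets $V_1, \dots, V_m$, say $V_j$. The singleton $\{e^\star\}$ is then a feasible candidate of cardinality $1 \leq k$ inside $V_j$, and since $\Aic[k]$ (for $i=j$) is by definition the value-maximizing subset of $V_j$ of cardinality at most $k$, we obtain $f(A^c_j[k]) \geq f(\{e^\star\})$. Chaining the three inequalities yields $\max_i f(\Aic[k]) \geq f(A^c_j[k]) \geq f(\{e^\star\}) \geq \frac{1}{k} f(\Ac{k})$, which is the claim.

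I do not anticipate a genuine obstacle in this argument; the only points that warrant care are justifying the subadditive bound purely from submodularity and nonnegativity, and observing that a single element competes against the in-partition optimum $A^c_j[k]$, so that the per-partition maximum is at least the best singleton value. Combined with Lemma~\ref{lemm_m}, this gives the $\frac{1}{\min(m,k)}$ factor in Theorem~\ref{th:gap}.
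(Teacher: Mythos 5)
Your proof is correct and follows essentially the same route as the paper's: bound $f(\Ac{k})$ by the sum of singleton values via submodularity, select the best singleton by averaging, and observe that the in-partition optimum $A^c_j[k]$ of the partition containing that element dominates its singleton value. Your write-up is in fact slightly more careful than the paper's, since you make explicit the telescoping argument and the role of nonnegativity ($f(\emptyset)\geq 0$) in discarding the empty-set terms, which the paper leaves implicit under the blanket phrase ``using submodularity.''
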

\begin{proof}
Let $ f(\Ac{k}) = f(\{u_1, \dots u_k \})$. Using submodularity of $f$, we have  
$$ f(\Ac{k}) \leq \sum_{i=1}^{k} f(u_i).$$ Thus, $f(\Ac{k}) \leq k f(u^*) $ where $ u^* = {\arg \max}_i f(u_i)$. 
Suppose that the element with highest marginal gain (i.e., $u^*$) is in $V_j$. 
Then the maximum value of $f$ on $V_j$ would be greater or equal to the marginal gain of $u^*$, i.e., $f(A^c_j[k]) \geq f(u^*)$ and since $f(\max_i f(\Aic[k])) \geq f(A^c_j[k])$, we can conclude that $$f(\max_i f(\Aic[k])) \geq f(u^*) \geq \frac{1}{k} f(\Ac{k}).$$
%Therefore, we can write $ f(u^*) \geq \frac{1}{k} f(A^*) $.
%Since, $ f(\hat{A}) \geq f(u^*) $, we can write $ f( \hat{A} ) \geq \frac{1}{k} f(A^*) $.\\
\end{proof}
Since $f(\Ad{m, k})\geq \max_i f(\Aic[k])$;
from Lemma \ref{lemm_m} and \ref{lemm_k} we have $$f(\Ad{m, k}) \geq \frac{1}{\min(m,k)} f(\Ac{k}).$$

%\subsection*{Proof of Theorem~\ref{th:lower}}
\noindent
$\Leftarrow$ direction:\\
 Let us consider a set of unbiased and  independent Bernoulli random variables $X_{i,j}$ for $i \in \{1, \dots, m\}$ and $j \in \{1, \dots, k\}$, i.e., $\Pr(X_{i,j}=1)=\Pr(X_{i,j}=0)=1/2$ and $(X_{i,j}  \perp X_{i',j'})$ if $i \neq i'$ or $j \neq j'$. Let us also define $Y_i = (X_{i,1}, \dots, X_{i,k})$ for $i\in\{1, \dots, m\}$. Now assume that $ V_i = \{ X_{i,1}, \ldots, X_{i,k}, Y_i\}, V = \bigcup_{i=1}^m V_i $ and $ f(S) = H(S)$, where $H$ is the entropy of the subset $S$ of random variables. Note that $H$ is a monotone submodular function.  It is easy to see that $\Aic[k] = \{ X_{i,1}, \ldots, X_{i,k}\}$ or $\Aic[k] = Y_i$ as in both cases $H(\Aic[k]) = k$. If we assume $\Aic[k] = \{ X_{i,1}, \ldots, X_{i,k}\}$, then $B = \{X_{i,j}|1\leq i\leq m, 1\leq j\leq k\}$. Hence, by selecting at most $k$ elements from $B$, we have $H(\Ad{m, k}) = k$. On the other hand, the set of $k$ elements that maximizes the entropy is $\{Y_1, \dots, Y_m\}$. Note that $H(Y_i)=k$ and $Y_i \perp Y_j$ for $i\neq  j$. Hence, $H(A^c) = k\cdot m$ if $m\geq k$ or otherwise $H(\Ac{k}) = k^2$.

\subsection{Proof of Theorem~\ref{th:ggreedy}}

Let us first mention a slight generalization over the performance of the standard greedy algorithm. It follows easily from the argument in \citep{nemhauser78}. 
\begin{lemma}\label{lem:ggeneral}
Let $f$ be a non-negative submodular function, and let $\Agc{q}$ of cardinality $q$ be the greedy selected set by the standard greedy algorithm. Then, $$f(\Agc{q})\geq \left(1-e^{-\frac{q}{k}}\right) f(\Ac{k}). $$ 
\end{lemma}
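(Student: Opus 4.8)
The plan is to follow the classical argument of \cite{nemhauser78}, adapted so that a greedy run of length $q$ is measured against an optimum of a possibly different cardinality $k$. Write $O = \Ac{k}$ for the optimal size-$k$ set, and let $G_i$ denote the greedy solution after $i$ selections, so that $G_0 = \emptyset$ and $G_q = \Agc{q}$; let $\delta_{i+1} = f(G_{i+1}) - f(G_i)$ be the marginal gain obtained at the $(i+1)$-st step. The heart of the argument is a per-step progress inequality asserting that the residual gap to the optimum contracts by a fixed multiplicative factor at every greedy step.

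First I would establish, for each $0 \le i \le q-1$, the bound $f(O) - f(G_i) \le k\,\delta_{i+1}$. To see this, expand $f(O \cup G_i)$ as a telescoping sum of the marginal gains of the (at most $k$) elements of $O \setminus G_i$, added one at a time on top of $G_i$; submodularity bounds each such marginal gain by the corresponding single-element gain $f(G_i \cup \{o\}) - f(G_i)$, and since the greedy rule selects the element of largest marginal value at step $i+1$, each of these is at most $\delta_{i+1}$. Summing over the at most $k$ elements gives $f(O \cup G_i) - f(G_i) \le k\,\delta_{i+1}$, and monotonicity ($f(O) \le f(O \cup G_i)$) yields the claim.

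Setting $\Delta_i := f(O) - f(G_i)$, so that $\delta_{i+1} = \Delta_i - \Delta_{i+1}$, the progress inequality reads $\Delta_i \le k(\Delta_i - \Delta_{i+1})$, i.e. $\Delta_{i+1} \le (1 - 1/k)\,\Delta_i$. Iterating from $i=0$ to $q$, and using nonnegativity ($f(\emptyset)\ge 0$) to bound $\Delta_0 \le f(O)$, I obtain $\Delta_q \le (1-1/k)^q f(O)$. The elementary inequality $1 - 1/k \le e^{-1/k}$ then converts this to $\Delta_q \le e^{-q/k} f(O)$, which rearranges into $f(\Agc{q}) \ge (1 - e^{-q/k})\, f(\Ac{k})$, as desired. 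The case $q = k$ recovers the familiar $(1-1/e)$ guarantee.

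The step I expect to require the most care is the per-step inequality, specifically retaining the factor $k$ (the cardinality of the comparison optimum) rather than $q$ in the contraction, so that the geometric decay is $(1-1/k)^q$; this asymmetry between the run length $q$ and the comparison cardinality $k$ is precisely what produces the $e^{-q/k}$ factor and constitutes the only genuine generalization over the textbook statement. I would also flag that, although the lemma is stated for nonnegative submodular $f$, the expansion step invokes monotonicity through $f(O) \le f(O \cup G_i)$; this is consistent with the monotone setting in which the lemma is subsequently applied (Theorem~\ref{th:ggreedy}).
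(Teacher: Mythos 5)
Your proposal is correct, and it is precisely the classical contraction argument of Nemhauser et al.\ (1978) that the paper itself invokes without writing out (the paper's ``proof'' of this lemma is the single sentence that it ``follows easily from the argument in'' that reference). Your closing observation is also apt: the per-step inequality genuinely requires monotonicity, not just nonnegativity, so the lemma as phrased is only valid in the monotone setting in which the paper applies it.
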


\noindent  By Lemma~\ref{lem:ggeneral} we know that $$f(\Aigc[\kappa])\geq (1-\exp(-\kappa/k)) f(\Aic[k]).$$ Now, let us define 
\begin{eqnarray*}
\Bgc &=& \cup_{i=1}^m \Aigc[\kappa], \\
\Oa{\kappa} &=& \max_i f(\Aigc[\kappa]),\\
\tilde{A}[\kappa] &=& {\arg\max}_{S\subseteq \Bgc \& |S|\leq\kappa} f(S).
\end{eqnarray*}
%
%$$\Bgc = \cup_{i=1}^m \Aigc[\kappa], \quad \Oa{} = \max_i f(\Aigc[\kappa]), \quad\tilde{A}[\kappa]= {\arg\max}_{S\subseteq \Bgc \& |S|\leq\kappa} f(S).$$
 Then by using Lemma~\ref{lem:ggeneral} again, we obtain 
\begin{eqnarray*}
f(\Agd{m,\kappa})&\geq& \max \big \{ f(\Oa{\kappa}), ~(1-\exp(-\kappa/\kappa)) f(\tilde{A}[\kappa]) \big \} \\
&\geq& \frac{(1-\exp(-\kappa/k))}{\min(m,k)}f(\Ac{k}). 
\end{eqnarray*}
%\geq \frac{(1-\exp(-l/\kappa)) (1-\exp(-\kappa/k))}{\min(m,k)}f(\Ac{k}).

\subsection{Proof of Proposition~\ref{prop:lipschitz-active}}
Let $K$ be a positive definite kernel matrix defined in section \ref{sec:nonparam}.  If we replace a point $e_i \in S$ with another point $e'_i \in V \setminus S$, the corresponding row and column $i$ in the modified kernel matrix $K^\prime$ %-except the element on the main diagonal
 will be changed. W.l.o.g assume that we replace the first element $e_1 \in S$ with another element $e'_1 \in V \setminus S$, i.e., $\Delta K = 
  K' - K $ has the following form with non-zero entries only on the first row and first column, 
\[
\Delta K \equiv  K' - K \leq 
\begin{pmatrix}
  a_1 & a_{2} & \cdots & a_{k} \\
  a_{2} & 0 & \cdots & 0\\
  \vdots  & \vdots  & \ddots & \vdots  \\
  a_{k} & 0 & \cdots & 0
 \end{pmatrix}.
\]

\noindent Note that kernel is Lipschitz continuous with constant $\mathcal{L}$, hence we have $|a_i| \leq \mathcal{L}d(e_1,e'_1)$ for $1 \leq i \leq k$.
Then the absolute value of the change in the objective function would be
\begin{align} \label{eq:lipact}
 \dst\left| f(S) - f(S^\prime) \right| 
 &= \left | \frac{1}{2} \log \det(\mathbf{I}+ K^\prime) - \frac{1}{2} \log \det(\mathbf{I}+ K) \right | \notag\\
 &= \frac{1}{2} \left | \log \frac{\det(\mathbf{I}+K^\prime)}{\det(\mathbf{I}+K)} \right |  \notag\\
 & = \frac{1}{2} \left | \log \frac{\det (\mathbf{I}+K + \Delta K)}{\det(\mathbf{I}+K)} \right | \notag\\
 &= \frac{1}{2} \left | \log [ \det(\mathbf{I}+K + \Delta K) . \det(\mathbf{I}+K)^{-1} ] \right | \notag\\
 & = \frac{1}{2} \left | \log \det(\mathbf{I} + \Delta K (\mathbf{I} + K)^{-1}) \right|.
\end{align}

\noindent Note that since $K$ is positive-definite, $\mathbf{I} + K$ is an invertible matrix. Furthermore, since $\Delta K$ and $K$ are symmetric matrices they both have $k$ real eigenvalues.
Therefore, $(\mathbf{I} + K)^{-1}$ has $k$ eigenvalues $\lambda_i = \frac{1}{1+\lambda'_i}\leq 1$, for $1 \leq i \leq k$, where $\lambda'_1 \cdots \lambda'_k$ are (non-negative) eigenvalues of kernel matrix $K$.

Now, we bound the maximum eigenvalues of $\Delta K$ and $\Delta K (\mathbf{I} + K)^{-1}$ respectively.
Consider vectors $x, x' \in \mathbb{R}^n$, such that $||x||_2 = ||x'||_2 = 1$. We have,
\begin{align} \label{eq:eig}
 \left| x^T \Delta K ~ x' \right| &= \left|
\begin{pmatrix} \notag\\
  x_{1}  \\
  x_{2} \\
  \vdots\\
  x_{k} 
 \end{pmatrix}^T 
\begin{pmatrix}
  a_1 & a_{2} & \cdots & a_{k} \\
  a_{2} & 0 & \cdots & 0\\
  \vdots  & \vdots  & \ddots & \vdots  \\
  a_{k} & 0 & \cdots & 0
 \end{pmatrix}
\begin{pmatrix}
  x'_{1}  \\
  x'_{2} \\
  \vdots\\
  x'_{k} 
 \end{pmatrix}
\right | \notag\\
& = \left | 
\begin{pmatrix}
  x_{1}  \\
  x_{2} \\
  \vdots\\
  x_{k} 
 \end{pmatrix}^T
 \begin{pmatrix}
  \sum_{i=1}^k a_i x'_i  \\
  a_{2} x'_1 \\
  \vdots\\
  a_{k} x'_1 
 \end{pmatrix} \right | \notag\\
 & = \left| x_1 \sum_{i=1}^k a_i x'_i + x'_1 \sum_{i=2}^k a_i x_i \right| \notag\\
 & = |x_1|.  \left|\sum_{i=1}^k a_i x'_i \right| + |x'_1| . \left|\sum_{i=2}^k a_i x_i \right| \notag\\
 & = |x_1|.  \sum_{i=1}^k |a_i x'_i | + |x'_1| . \sum_{i=2}^k |a_i x_i| \notag\\
 & \leq 2k \mathcal{L} d(e_1,e'_1),
 \end{align} %\leq 2 \alpha k.

\noindent where we used the following facts to derive the last inequality: 1) the Lipschitz continuity of the kernel gives us an upperbound on the values of $|a_i|$, i.e., $|a_i| \leq \mathcal{L}d(e_1,e'_1)$ for $1 \leq i \leq k$; and 2) since $||x||_2 = ||x'||_2 = 1$, the absolute value of the elements in vectors $x$ and $x'$ cannot be greater than 1, i.e., $|x_i| \leq 1, ~|x'_i| \leq 1$, for $1\leq i \leq k$. 
%
%Substituting $||x|| = ||x'|| = 1$ we get  $| x^T \Delta K ~ x' | \leq 
 Therefore, 
$$\lambda_{\max} (\Delta K) =  \max_{x:~||x||_2 = 1} |x^T \Delta K x| \leq 2k \mathcal{L} d(e_1,e'_1).$$ %and thus the largest eigenvalue of $\Delta K$ is less than or equal to $2k \mathcal{L} d(e_1,e'_1)$.%$2 \alpha k$.

%Now, we find a bound on the maximum eigenvalue of $\Delta K (I + K)^{-1}$. 
Now, let $v_1, \cdots v_k \in \mathbb{R}^n$ be the $k$ eigenvectors of matrix $(\mathbf{I} + K)^{-1}$. Note that $\{ v_1, \cdots v_k \}$ is an orthonormal system and thus for any $x \in \mathbb{R}^n$ we can write it as $x = \sum_{i=1}^k c_i v_i$, and we have $||x||_2^2 = \sum_{i=1}^k c_i^2$. In order to bound the largest eigenvalue of $\Delta K (\mathbf{I} + K)$, we write
\begin{align*}
\left| x^T \Delta K ~ (\mathbf{I} + K)^{-1}  x \right| 
& = 
\left| x^T \Delta K ~ (\mathbf{I} + K)^{-1} \sum_{i=1}^k c_i v_i \right| \\ \nonumber 
& = \left| x^T \Delta K ~ \sum_{i=1}^k \lambda_i c_i v_i \right| \\ \nonumber
& = \left| \left(\sum_{j=1}^k c_jv_j \right)^T \Delta K \left(\sum_{i=1}^k \lambda_i c_iv_i \right) \right| \\ \nonumber
& = \left| \sum_{i,j=1}^k \lambda_i c_i c_j v_j^T \Delta K v_i \right| \\
 & \stackrel{(a)}{\leq} 2k \mathcal{L} d(e_1,e'_1) \sum_{i,j=1}^k |c_i| |c_j| \\ \nonumber
 & = 2 k \mathcal{L} d(e_1,e'_1)   \left( \sum_{i=1}^k |c_i| \right)^2, \\
\end{align*}
%& = \left| \sum_{i,j=1}^k c_i^2 \lambda_i v_i^T \Delta K v_j \right|
where in (a) we used Eq. \ref{eq:eig} and the fact that $\lambda_i \leq 1$ for $1\leq i \leq k$. Using Cauchy-Schwarz inequality  
$$\left(\sum_{i=1}^k |c_i| \right)^2 \leq k \sum_{i=1}^k |c_i|^2$$
and the assumption $||x||_2 = 1$, we conclude
\begin{align*}
\left| x^T \Delta K ~ (\mathbf{I} + K)^{-1}  x \right|
& \leq 2  k^2 \mathcal{L} d(e_1,e'_1)   \sum_{i=1}^k \left| c_i^2 \right| \\ \nonumber
& \leq 2  k^2 ||x||_2^2 \mathcal{L} d(e_1,e'_1) \\ \nonumber
& \leq 2 k^2 \mathcal{L} d(e_1,e'_1) .
\end{align*}

\noindent Therefore, 
\begin{equation}\label{eq:det}
 \lambda_{\max} \left(\Delta K (\mathbf{I} + K)^{-1} \right) =  \max_{x:~||x||_2=1} \left| x^T \Delta K ~ (\mathbf{I} + K)^{-1}  x \right| \leq 2k^2 \mathcal{L} d(e_1,e'_1).
\end{equation}

%and thus, the maximum eigenvalue of $\Delta K (\mathbf{I} + K)^{-1}$ is less than or equal to $2 k^2 \mathcal{L} d(e_1,e'_1)$.
Finally, we can write the determinant of a matrix as the product of its eigenvalues, i.e. 
\begin{equation} \label{eq:eig2}
\det( \mathbf{I} + \Delta K ( \mathbf{I} + K)^{-1}) \leq (1+2 k^2 \mathcal{L} d(e_1,e'_1))^k.
\end{equation}

By substituting Eq. \ref{eq:det} and Eq. \ref{eq:eig2} into Eq. \ref{eq:lipact} we obtain
\begin{align*}
\dst\left| f(S) - f(S^\prime)\right|  
& \leq \frac{1}{2} \left|\log (1+2 k^2 \mathcal{L} d(e_1,e'_1))^k \right| \\ \nonumber 
& \leq \frac{k}{2} \left|\log (1+2 k^2 \mathcal{L} d(e_1,e'_1)) \right| \\ \nonumber 
& \leq  k^3 \mathcal{L} d(e_1,e'_1),
\end{align*}
%\leq \frac{1}{2} \left| \log \det( \mathbf{I} + \Delta K ( \mathbf{I} + K)^{-1}) \right|
where in the last inequality we used  $\log(1+x) \leq x$, for $x \geq 0$.

\noindent Replacing all the $k$ points in set $S$ with another set $S'$ of the same size, we get
\[
\dst\left| f(S) - f(S^\prime)\right| \leq  k^3 \mathcal{L} \sum_{i=1}^k d(e_i,e'_i).
\]
Hence, the differential entropy of the Gaussian process is $\lambda$-Lipschitz with  $\lambda =   \mathcal{L} k^3$.

\subsection{Proof of Proposition~\ref{prop:lipschitz-exemplar}}
Assume we have a set $S$ of $k$ exemplars, i.e., $S_0=\{ e_1, \cdots , e_k \}$, and each element of the dataset $v \in V$ is assigned to its closest exemplar. Now, if we replace set $S$ with another set $S^\prime$ of the same size, the loss associated with every element $v \in V$ may be changed. W.l.o.g, assume we swap one exemplar at a time, i.e., in step $i, 1 \leq i \leq k$, we have $S_i=\{ e'_1, \cdots, e'_i, e_{i+1}, \cdots , e_k \}$. Swapping the $i^{th}$ exemplar $e_i \in S_{i-1}$ with another element $e'_i \in S'$, 4 cases may happen: 1) element $v$ was not assigned to $e_i$ before and doesn't get assigned to $e'_i$, 2) element $v$ was assigned to $e_i$ before and gets assigned to $e'_i$, 3) element $v$ was not assigned to $e_i$ before and gets assigned to $e'_i$, 4) element $v$ was assigned to $e_i$ before and gets assigned to another exemplar $ e_x \in S_i \setminus \{e'_i\}$. 
%
%Suppose $d(s_t, s'_t) = \Delta_t$. 
For any element $v \in V$, we look into the four cases and show that in each case 
$$| l(e'_i,v) - l(e_i,v) | \leq  d(e_i, e'_i) ~ \alpha R^{\alpha-1}. $$ %= \Delta_t$.
%First consider the case where $l=d$.

\begin{itemize}
\item
Case 1: In this case, element $v$ was assigned to another exemplar $e_x \in S_i \setminus {e_i}$ and the assignment doesn't change. Therefore, %this case is trivial as 
there is no change in the value of the loss function.  
\item
Case 2: In this case, element $v$ was assigned to $e_i$ before and gets assigned to $e'_i$.
let $a =  d(e_i,v)$ and $b = d(e'_i,v)$. Then we can write% use the following equation
%a^n - b^n = (a-b)(a^{n-1}+a^{n-2}b + \cdots + ab^{n-2} + b^{n-1}).
%%\begin{equation}
%%x^n - y^n = (x-y)(x^{n-1}+x^{n-2}y + \cdots + xy^{n-2} + y^{n-1}).
%%\end{equation}
%%\noindent to conclude that 
\begin{align}\label{eq:eq}
| l(e'_i,v) - l(e_i,v) | &= | a^\alpha - b^\alpha | \notag \\
 & = |(a-b)|(a^{\alpha-1}+a^{\alpha-2}b + \cdots + ab^{\alpha-2} + b^{\alpha-1}) \notag \\
 & \leq d(e_i,e'_i) ~ \alpha R^{\alpha-1},
\end{align} 

\noindent where in the last step we used triangle inequality 
$|d(e'_t,v)-d(e_t,v)| \leq d(e_t,e'_t)$
and the fact that data points are in a ball of diameter $R$ in the metric space.
%\[ l(e,s'_t) - l(e,s_t) = d^\alpha(e,s'_t) - d^\alpha(e,s_t)  \leq \alpha d^{\alpha-1}(s'_t,s_t) . \] % = \Delta_t$. 

\item
Case 3: In this case, $v$ was assigned to another exemplar $e_x \in S_{i-1} \setminus \{ e_i \}$ and gets assigned to $e'_i$,  which implies that 
$| l(e'_i,v) - l(e_x,v) | \leq | l(e_i,v) - l(e'_i,v) |$, % = d^\alpha(s_t,s'_t) \leq \alpha ~ d(s_t,s'_t) ~ R^{\alpha-1}$ %= \Delta_t$
since otherwise $e$ would have been assigned to $e_t$ before. %, and from triangle inequality we know $l(e,s_t) - l(e,s'_t) \leq l(s_t,s'_t)$. 
%
%Similarly in the last case, from triangle inequality we have $l(e,s_x) - l(e,s_t) \leq  l(s_t, s_x)$. Furthermore% before, as $e$ was assigned to $s_t$

\item
Case 4: In the last case, element $v$ was assigned to $e_i$ before and gets assigned to another exemplar $ e_x \in S_i \setminus \{e'_i\}$. Thus,
we have 
$| l(e_x,v) - l(e_i,v) | \leq | l(e'_i,v) - l(e_i,v) |$
since otherwise $v$ would have been assigned to $e_x$ before. %, and using triangle inequality we get $l(e,s_t) - l(e,s'_t) \leq l(s_t,s'_t)$.
Hence, in all four cases the following inequality holds:
$$| \min_{e \in S_{i-1}} l(e,\upsilon) - \min_{e \in S_i} l(e,\upsilon) | \leq | l(e'_i,v) - l(e_i,v) | \leq d(e_i,e'_i) ~ \alpha R^{\alpha-1}.$$

\end{itemize}

%Using $l(s'_t, s_t) = d^\alpha(s_t,s'_t) \leq \alpha ~ d(s_t,s'_t) ~ R^{\alpha-1}$ and 
\noindent By using Eq. \ref{eq:eq} and averaging over all elements $v \in V$, we have 
\begin{align*}
\left|L(S_{i-1}) - L(S_i) \right| 
& = \frac{1}{|V|}\sum_{v\in V} | \min_{e \in S_{i-1}} l(e,\upsilon) - \min_{e \in S_i} l(e,\upsilon) | \\ \nonumber
& \leq \alpha R^{\alpha-1} d(e_i,e'_i).
\end{align*}
Thus, for any point $e_0$ that satisfies 
$$\max_{v' \in V}l(v,v') \leq l(v,e_0), \quad \forall v \in V \setminus S,$$
we have $L(\{ e_0 \cup S \}) = L(\{ S \})$ and thus
\begin{align*}
\left| f(S_{i-1}) - f(S_i) \right| 
& = \left| L(\{ e_0 \}) - L(\{ e_0 \cup S_{i-1} \}) - L(\{ e_0 \}) + L(\{ e_0 \cup S_i\}) \right| \\ \nonumber
& \leq  \alpha R^{\alpha-1} d(e_i,e'_i).
\end{align*}
\noindent Now, if we replace all the $k$ points in set $S$ with another set $S'$ of the same size, we get%Now we sum over all $k$ exemplars 
\begin{align*}
\left |f(S) - f(S') \right| =  
& \left | \sum_{i=1}^k  f(S_{i-1}) - f(S_i) \right | \\ \nonumber
& = \sum_{i=1}^k \left |f(S_{i-1}) - f(S_i) \right| \\ \nonumber
& \leq \alpha  R^{\alpha-1} \sum_{i=1}^k d(e_i,e'_i). 
\end{align*}

\noindent Therefore, for $l = d^\alpha$, the loss function is $\lambda$-Lipschitz  with   $\lambda = \alpha R^{\alpha - 1}$.

\subsection{Proof of Theorem~\ref{thm:alg-neighborhoods}}
In the following, we say that sets $S$ and $S'$ are $\gamma$-close if $|f(S)-f(S')|\leq \gamma$.
First, we need the following lemma. 

\begin{lemma}\label{thm:random-close}
If for each $e_i\in \Ac{k}, \abs{N_\alpha(e_i)}\geq km\log{(k/\delta^{1/m})}$, and if $V$ is partitioned into  sets $V_1, V_2,\dots V_m$, where each element is randomly assigned to one set with equal probabilities, then there is at least one partition with a subset $\Aic[k]$ such that $\abs{f(\Ac{k})-f(\Aic[k])}\leq \lambda\alpha k$ with probability at least $(1 - \delta)$. 
\end{lemma}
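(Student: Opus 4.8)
The plan is to split the argument into a deterministic, Lipschitz-based reduction and a probabilistic ``capture'' estimate. The deterministic part shows that it suffices for a \emph{single} partition to contain one $\alpha$-close neighbor of every element of the optimal solution $\Ac{k}=\{e_1,\dots,e_k\}$; the probabilistic part shows that, under the stated neighborhood-size hypothesis, at least one of the $m$ partitions enjoys this property with probability at least $1-\delta$. The appearance of $\delta^{1/m}$ in the assumption is the tell-tale sign that the bound is obtained by requiring one success out of $m$ partitions.

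First I would carry out the reduction. Suppose a partition $V_i$ contains, for each $j$, an element $e'_j\in N_\alpha(e_j)\cap V_i$, and set $S'=\{e'_1,\dots,e'_k\}\subseteq V_i$. Since $d(e_j,e'_j)\leq\alpha$ for all $j$, the $\lambda$-Lipschitz property applied to the matching $\{(e_j,e'_j)\}$ gives $\abs{f(\Ac{k})-f(S')}\leq\lambda\sum_j d(e_j,e'_j)\leq\lambda\alpha k$. Because $\Aic[k]$ maximizes $f$ over $k$-subsets of $V_i$ we have $f(\Aic[k])\geq f(S')\geq f(\Ac{k})-\lambda\alpha k$, while optimality of $\Ac{k}$ over all $k$-subsets of $V$ gives $f(\Ac{k})\geq f(\Aic[k])$. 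Combining yields $\abs{f(\Ac{k})-f(\Aic[k])}\leq\lambda\alpha k$, so it remains only to prove that some partition captures a neighbor of every $e_j$.

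Next I would estimate the capture probability. Fix a partition $V_i$ and an element $e_j$. Since each of the (at least) $km\log(k/\delta^{1/m})$ points of $N_\alpha(e_j)$ is sent to $V_i$ independently with probability $1/m$, the chance that $V_i$ misses all of them is $(1-1/m)^{\abs{N_\alpha(e_j)}}\leq e^{-\abs{N_\alpha(e_j)}/m}\leq(\delta^{1/m}/k)^{k}\leq\delta^{1/m}/k$. A union bound over the $k$ elements then gives $\Pr[\,V_i\text{ misses some }e_j\,]\leq\delta^{1/m}$. A partition ``works'' unless it misses some $e_j$, so the event that no partition works is the intersection over $i$ of the per-partition failure events; bounding this intersection by the product $(\delta^{1/m})^m=\delta$ delivers the claim with probability at least $1-\delta$.

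The main obstacle is the last step, the product bound: the per-partition failure events are not independent, since each data point is assigned to exactly one of the $m$ machines, so conditioning on the assignments relevant to one partition alters the distribution for the others. Fortunately the dependence is in our favour — forcing all of $N_\alpha(e_j)$ to avoid $V_i$ makes capture elsewhere \emph{more} likely, so the failure events are negatively correlated — and I would make $\Pr[\text{all partitions fail}]\leq\prod_i\Pr[V_i\text{ fails}]$ rigorous either by invoking negative association of the balls-into-bins assignment indicators or by a direct computation on the multinomial assignment. If one wishes to sidestep this entirely, note that a single fixed partition already succeeds with probability $1-\delta^{1/m}$; obtaining the sharper $1-\delta$ guarantee is precisely what forces us to exploit all $m$ partitions, which is where the $\delta^{1/m}$ in the neighborhood-size hypothesis originates.
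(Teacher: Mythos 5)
Your proof is correct and follows the same overall route as the paper's: bound the probability that a fixed machine misses the $\alpha$-neighborhood of a fixed $e_i\in\Ac{k}$, union-bound over the $k$ elements to get a per-machine failure probability of $\delta^{1/m}$, multiply over the $m$ machines to get $\delta$, and finish with the Lipschitz matching argument. The one place where you genuinely differ from (and improve on) the paper is the multiplication step: the paper simply asserts that the probability that \emph{every} $V_j$ fails is at most $(\delta^{1/m})^m=\delta$, implicitly treating the per-machine failure events as independent, which they are not, since each element lands on exactly one machine; its construction of mutually disjoint representative subsets of the neighborhoods does not remove this cross-machine dependence. You flag exactly this issue and close it correctly: the failure events are decreasing functions of disjoint blocks of the (negatively associated) balls-into-bins assignment indicators, so the product inequality $\Pr[\text{all fail}]\leq\prod_j\Pr[V_j\text{ fails}]$ holds rigorously. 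Two smaller points are also in your favor: you make explicit the sandwich $f(\Ac{k})\geq f(\Aic[k])\geq f(S')\geq f(\Ac{k})-\lambda\alpha k$ that turns ``some $V_i$ captures an $\alpha$-close copy of $\Ac{k}$'' into the stated bound on $\abs{f(\Ac{k})-f(\Aic[k])}$, which the paper leaves implicit, and your argument dispenses with the paper's disjointness construction entirely, since neither the within-machine union bound nor the negative-association step requires it.
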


\begin{proof}
By the hypothesis, the $\alpha$ neighborhood of each element in $\Ac{k}$ contains at least $km\log{(k/\delta^{1/m})}$ elements. 
For each $e_i\in \Ac{k}$, let us take a set of $m\log{(k/\delta^{1/m})}$ elements from its $\alpha$-neighborhood. These sets can be constructed to be mutually disjoint, since each $\alpha$-neighborhood contains $m\log{(k/\delta^{1/m})}$ elements. We wish to show that at least one of the $m$ partitions of $V$ contains elements from $\alpha$-neighborhoods of each element.

Each of the $m\log{(k/\delta^{1/m})}$ elements goes into a particular $V_j$ with a probability $1/m$. The probability that a particular $V_j$ does not contain an element $\alpha$-close to $e_i\in \Ac{k}$ is $\dst \frac{\delta^{1/m}}{k}$.  The probability that $V_j$ does not contain elements $\alpha$-close to one or more of the $k$ elements is at most $\delta^{1/m}$ (by union bound). The probability that {\em each} $V_1, V_2,\dots V_m$ does not contain elements from the $\alpha$-neighborhood of one or more of the $k$ elements is at most $\delta$. Thus, with high probability of at least $(1 - \delta)$, at least one of $V_1, V_2,\dots V_m$ contains an $\Aic[k]$ that is $\lambda\alpha k$-close to $\Ac{k}$.
\end{proof}
%\begin{proof}

By lemma~\ref{thm:random-close}, for some $V_i$, $\abs{f(\Ac{k})-f(\Aic[k]})\leq \lambda\alpha k$ with the given probability. Furthermore, $f(A^{gc}_i[\kappa])\geq(1-e^{-\kappa/k})f(\Aic[k]) $ by Lemma~\ref{lem:ggeneral}. Therefore, the result follows using arguments analogous to the proof of Theorem~\ref{th:ggreedy}.

%\end{proof}

\subsection{Proof of Theorem~\ref{thm:alg-sampling}}

The following lemma says that in a sample drawn from distribution over an infinite dataset, a sufficiently large sample size guarantees a dense neighborhood near each element of $\Ac{k}$ when the elements are from representative regions of the data.

\begin{lemma}\label{thm:sampling-count}
A number of elements: $\dst n\geq \frac{8km\log{(k/\delta^{1/m})}}{\beta  g(\alpha)}$, where $\alpha\leq \alpha^*$, suffices to have at least $4km\log{(k/\delta^{1/m})}$ elements in the $\alpha$-neighborhood of each $e_i\in \Ac{k}$ with probability at least $(1-\delta)$, for small values of $\delta$. 
\end {lemma}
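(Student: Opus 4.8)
The plan is to treat, for each fixed optimal element $e_i \in \Ac{k}$, the number of sampled points landing in its $\alpha$-neighborhood as a binomial random variable, control its lower tail by a multiplicative Chernoff bound, and then take a union bound over the $k$ elements. First I would observe that, by the density assumption, for any $e_i \in \Ac{k}$ and any $\alpha \leq \alpha^*$ the probability that a single i.i.d.\ draw from $\mathcal{V}$ falls in $N_\alpha(e_i)$ is at least $p := \beta g(\alpha)$, since the density is at least $\beta$ throughout a ball of radius $\alpha^* \geq \alpha$ and $g(\alpha)$ is the volume of that ball. Writing $X_i$ for the number of the $n$ independent samples that land in $N_\alpha(e_i)$, we have that $X_i$ is binomially distributed with parameters $n$ and $p_i \geq p$, so its mean satisfies $\mu_i = n p_i \geq n \beta g(\alpha)$.

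Next I would substitute the hypothesis $n \geq 8 km \log(k/\delta^{1/m}) / (\beta g(\alpha))$, which guarantees $\mu_i \geq 8 km \log(k/\delta^{1/m})$. Applying the multiplicative Chernoff lower-tail bound $\Pr[X_i \leq (1-\eta)\mu_i] \leq \exp(-\eta^2 \mu_i/2)$ with $\eta = 1/2$ yields $\Pr[X_i \leq \mu_i/2] \leq \exp(-\mu_i/8)$. Since $\mu_i/2 \geq 4 km \log(k/\delta^{1/m})$, the event that $e_i$ has fewer than $4 km \log(k/\delta^{1/m})$ neighbors is contained in $\{X_i \leq \mu_i/2\}$, whose probability is at most $\exp(-\mu_i/8) \leq \exp(-km\log(k/\delta^{1/m})) = (\delta^{1/m}/k)^{km} = \delta^{k}/k^{km}$.

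Finally I would take a union bound over the $k$ elements of $\Ac{k}$: the probability that some $e_i$ has fewer than $4km\log(k/\delta^{1/m})$ neighbors is at most $k \cdot \delta^{k}/k^{km} = \delta^{k}/k^{km-1} \leq \delta$, where the last inequality uses $k \geq 1$, $m \geq 1$ and $\delta \leq 1$ (so that $\delta^k \leq \delta$ and $k^{km-1} \geq 1$). Complementing, with probability at least $1-\delta$ every $e_i \in \Ac{k}$ has at least $4km\log(k/\delta^{1/m})$ sampled neighbors, as claimed.

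I expect the only delicate point to be the constant bookkeeping, ensuring the $8 \to 4$ factor coming from the $\eta = 1/2$ Chernoff bound lines up exactly with the stated sample-size and count thresholds, and verifying that the per-element tail $\exp(-km\log(k/\delta^{1/m}))$ is small enough that the union bound over the $k$ elements still produces $\delta$ rather than a larger quantity. The somewhat unusual $\delta^{1/m}$ inside the logarithm is precisely what makes these two effects cancel cleanly (in fact it leaves slack, consistent with the phrase ``for small values of $\delta$''); this choice is also what lets the resulting guarantee be composed with the partitioning argument of Lemma~\ref{thm:random-close} without degrading the overall confidence. No machinery beyond independence of the samples and the Chernoff inequality should be required.
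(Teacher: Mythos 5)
Your proof is correct and follows essentially the same route as the paper: a multiplicative Chernoff lower-tail bound showing each $e_i \in \Ac{k}$ realizes at least half its expected number of $\alpha$-neighbors, followed by a union bound over the $k$ optimal elements. In fact your bookkeeping of the union bound (bounding $k\cdot\delta^k/k^{km}$ by $\delta$ for any $\delta\leq 1$) is slightly cleaner than the paper's, which invokes the additional restriction $\delta\leq 1/k$ to absorb the factor of $k$.
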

\begin{proof}
The expected number of $\alpha$-neighbors of an $e_i\in \Ac{k}$, is $E[\abs{N_{\alpha}(e_i)}]\geq 8km\log{(k/\delta^{1/m})}$. We now show that in a random set of samples, at least a half of this number of neighbors is realized with high probability near each element of $\Ac{k}$. 

This follows from a Chernoff bound:
\[\dst P[\abs{N_{\alpha}(e_i)}\leq 4km\log{(k/\delta^{1/m})}]\leq e^{-km\log{(k/\delta^{1/m})}}\leq (\delta^{1/m}/k)^{km}.\]

\noindent Therefore, the probability that some $e_i\in \Ac{k}$ does not have a suitable sized neighborhood is at most $k(\delta^{1/m}/k)^{km}$. For $\delta\leq 1/k$, $k\delta^{km}\leq \delta^m$. Therefore, with probability at least $(1-\delta)$, the $\alpha$-neighborhood of each element $e_i\in \Ac{k}$ contains at least $4km\log{(1/\delta)}$ elements.
\end{proof}

\begin{lemma}\label{thm:epsilon-sampling} 
For $\dst n\geq \frac{8km\log(k/\delta^{1/m})}{\beta  g(\frac{\eps}{\lambda k})}$, where $\frac{\eps}{\lambda k}\leq \alpha^*$, if $V$ is partitioned into  sets $V_1, V_2,\dots V_m$, where each element is randomly assigned to one set with equal probabilities, then for sufficiently small values of $\delta$, there is at least one partition with a subset $\Aic[k]$ such that $\abs{f(\Ac{k})-f(\Aic[k])}\leq \eps$ with probability at least $(1-\delta)$. 
\end{lemma}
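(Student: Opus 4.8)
The plan is to instantiate the two preceding lemmas at the single distinguished choice $\alpha = \eps/(\lambda k)$, which is precisely the scale that converts the geometric closeness guarantee into the desired additive accuracy. First I would note that with this choice the constraint $\alpha \leq \alpha^*$ of Lemma~\ref{thm:sampling-count} is exactly the hypothesis $\eps/(\lambda k)\leq \alpha^*$ assumed here, that $\lambda\alpha k = \eps$, and that the sample-size requirement becomes $n \geq \frac{8km\log(k/\delta^{1/m})}{\beta g(\eps/(\lambda k))}$, matching the lemma's hypothesis verbatim.

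Next I would apply Lemma~\ref{thm:sampling-count}, over the randomness of drawing the $n$ points from the underlying distribution on $\mathcal{V}$. It guarantees that, with probability at least $(1-\delta)$, every $e_i\in\Ac{k}$ has at least $4km\log(k/\delta^{1/m})$ elements in its $\alpha$-neighborhood $N_\alpha(e_i)$. This density exceeds the threshold $km\log(k/\delta^{1/m})$ that Lemma~\ref{thm:random-close} requires, and by a comfortable factor of four. Conditioning on this density event, I would then invoke Lemma~\ref{thm:random-close}, over the independent randomness of the uniform assignment of elements to $V_1,\dots,V_m$. It yields, with high probability, at least one partition carrying a subset $\Aic[k]$ with $\abs{f(\Ac{k})-f(\Aic[k])}\leq \lambda\alpha k$; substituting $\lambda\alpha k = \eps$ gives the claimed bound $\abs{f(\Ac{k})-f(\Aic[k])}\leq \eps$.

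The one point requiring care, and the reason the statement restricts to sufficiently small $\delta$, is the bookkeeping of combining the two probabilistic guarantees. A naive union bound over the sampling-failure and partition-failure events would only deliver $(1-2\delta)$. The resolution is exactly the factor-four slack supplied by Lemma~\ref{thm:sampling-count}: re-running the counting step inside Lemma~\ref{thm:random-close} with $4m\log(k/\delta^{1/m})$ guaranteed neighbors per element (rather than $m\log(k/\delta^{1/m})$) drives the per-pair partition-failure probability down to roughly $(\delta^{1/m}/k)^{4}$, so that after the union bounds over the $k$ elements and $m$ machines the partition-failure probability is of order $\delta^{4}$, negligible beside the $\delta$ coming from sampling. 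Hence for small $\delta$ the combined failure probability stays below $\delta$, giving the overall $(1-\delta)$ guarantee. I expect this probability accounting to be the only delicate part; everything else is a direct substitution of $\alpha = \eps/(\lambda k)$ into the two lemmas already established.
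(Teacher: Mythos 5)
Your route is the same as the paper's: instantiate $\alpha=\eps/(\lambda k)$, invoke Lemma~\ref{thm:sampling-count} to get dense neighborhoods around each element of $\Ac{k}$, then condition on that event and invoke Lemma~\ref{thm:random-close} over the independent partitioning randomness. The substitution $\lambda\alpha k=\eps$ and the verification of the hypotheses are all correct. However, your final probability accounting has a genuine flaw. You spend the entire failure budget $\delta$ on the sampling event (by quoting Lemma~\ref{thm:sampling-count} at parameter $\delta$), and then add a partition-failure probability of order $\delta^{4}$ on top of it, concluding that ``for small $\delta$ the combined failure probability stays below $\delta$.'' This is arithmetically impossible: $\delta + c\,\delta^{4} > \delta$ for every $\delta>0$, no matter how small. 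The factor-four slack can shrink the second failure probability, but it cannot make the sum of $\delta$ and a strictly positive quantity be at most $\delta$. As written, your argument proves a guarantee of $1-\delta-O(\delta^{4})$, not $1-\delta$.

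The repair — and what the paper actually does — is to split the budget so that \emph{each} event fails with probability at most $\delta/2$. For the sampling event, do not use the black-box statement of Lemma~\ref{thm:sampling-count}; its own Chernoff-bound proof gives failure probability at most $k(\delta^{1/m}/k)^{km}$, which is far below $\delta/2$ once $\delta$ is sufficiently small (this is precisely where the ``sufficiently small $\delta$'' hypothesis is consumed). For the partition event, the realized density $4km\log(k/\delta^{1/m})$ exceeds the density $km\log(2k/\delta^{1/m})$ needed to run Lemma~\ref{thm:random-close} at parameter $\delta/2$ — this is where the factor-four slack is genuinely spent, rather than on driving one failure probability down to $\delta^{4}$ after the budget is already exhausted. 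A union bound then gives total failure probability at most $\delta/2+\delta/2=\delta$. So your structural insight (the slack is what makes the bookkeeping close) is right, but it must be deployed to bring both events under $\delta/2$, not appended to an already-spent budget.
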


\begin{proof}
Follows directly by combining Lemma~\ref{thm:sampling-count} and Lemma~\ref{thm:random-close}. The probability that some element does not have a sufficiently dense $\eps/{\lambda k}$-neighborhood with $km\log(2k/\delta^{1/m})$ elements is at most $(\delta/2)$ for sufficiently small $\delta$, and the probability that some partition does not contain elements from the one or more of the dense neighborhoods is at most $(\delta/2)$. Therefore, the result holds with probability at least $(1-\delta)$.
\end{proof}

%\begin{proof}
By Lemma~\ref{thm:epsilon-sampling}, there is at least one $V_i$ such that $\abs{f(\Ac{k})-f(\Aic[k])}\leq \eps$ with the given probability. And $f(A^{gd}_i[\kappa])\geq(1-e^{-\kappa/k})f(\Aic[k]) $ using Lemma~\ref{lem:ggeneral}. The result follows using arguments analogous to the proof of Theorem~\ref{th:ggreedy}.

%\end{proof}

\subsection{Proof of Theorem~\ref{th:decom}}
Note that each machine has on the average $n/m$ elements. Let us define $\Pi_i$ the event that $n/2m<|V_i|< 2n/m$. Then based on the Chernoff bound we know that $\Pr(\neg\Pi_i)\leq 2 \exp(-n/8m)$. Let us also define $\xi_i(S)$ the event that $|f_{V_i}(S) - f(S)|<\epsilon$, for some fixed $\epsilon<1$ and a fixed set $S$ with $|S|\leq k$. Note that $\xi_i(S)$ denotes the event that the empirical mean is close to the true mean. Based on the Hoeffding inequality (without replacement) we have $ \Pr(\neq \xi_i{S} | \leq 2\exp(-2n\epsilon^2/m)$. Hence, $$\Pr(\xi_i(S)\wedge \Pi_i) \geq 1- 2\exp(-2n\epsilon^2/m)- 2 \exp(-n/8m).$$
Let $\xi_i$ be an event that $|f_{V_i}(S) - f(S)|<\epsilon$, for any $S$ such that $|S|\leq \kappa$. Note that there are at most $n^\kappa$ sets of size at most $\kappa$. Hence, 
\begin{equation}\label{eq:event}
\Pr(\xi_i\wedge \Pi_i) \geq 1- 2 n^\kappa(\exp(-2n\epsilon^2/m)-  \exp(-n/8m)).
 \end{equation}
 As a result, for $\epsilon<1/4$ we have
  $$\Pr(\xi_i\wedge \Pi_i) \geq 1- 4 n^\kappa \exp(-2n\epsilon^2/m).$$
  
\noindent   Since there are $m$ machines, by the union bound we can conclude that 
   $$\Pr((\xi_i\wedge \Pi_i)\text{ on all machines}) \geq 1- 4 mn^\kappa \exp(-2n\epsilon^2/m).$$
The above calculation implies that we need to choose $\delta\geq 4 mn^\kappa \exp(-2n\epsilon^2/m)$. Let $n_0$ be chosen in a way that for any $n\geq n_0$ we have $\ln(n)/n\leq \epsilon^2/(mk)$. Then,  we need to choose $n$ as follows:
$$n = \max\left(n_0, \frac{m\log(\delta/4m)}{\epsilon^2}\right).$$

\noindent Hence for the above choice of $n$,  there is at least one $V_i$ such that $\abs{f(\Ac{k})-f(\Aic[\kappa])}\leq \eps$ with probability $1-\delta$. Hence the solution is $\epsilon$ away from the optimum solution with probability $1-\delta$. Now if we confine the evaluation of $f(\Aic)$ to  data point only  in machine $i$ then under the assumption of Theorem~\ref{thm:alg-sampling} we lose another $\epsilon$. Formally, the result at this point simply follows by combining Theorem~\ref{th:ggreedy} and Theorem \ref{thm:alg-sampling}. 

\subsection{Proof of Theorem~\ref{th:ggreedy_const}}

The proof is similar to the proof of Theorem \ref{th:gap} and Theorem \ref{th:ggreedy} and follows from the following lemmas.
\begin{lemma}\label{lemm_kz}
$ \max_i f(\Aic[\zeta]) \geq \frac{1}{m} f(\Ac{\zeta}). $
\end{lemma}

\begin{proof}
Let $B_i$ be the elements in $V_i$ that are contained in the optimal solution, $B_i = \Ac{\zeta} \cap V_i$. Since $\Ac{\zeta} \in \zeta$ and $\zeta$ is a set of hereditary constraints, we must have $B_i \in \zeta$ as well.
Using submodularity of $f$ and by the same argument as in the proof of Lemma \ref{lemm_m}, we have
\begin{eqnarray}
f(\Ac{\zeta}) = f(B_1 \cup \cdots \cup B_m) \nonumber
&=& f(B_1) + f(B_2|B_1) + \cdots + f(B_m | B_{m-1} , \cdots, B_1) \\\nonumber
&\leq & f(B_1) + \cdots + f(B_m).\nonumber
\end{eqnarray}
Since $f(\Aic[\zeta]) \geq f(B_i)$ we get
$$ f(\Ac{\zeta}) \leq  f(A_1^c[\zeta]) + \cdots + f(A^c_m[\zeta])
\leq m \max_i f(\Aic{[\zeta]}).$$
%Therefore,
%$$f(\Ac{\zeta}) \leq m \max_i f(\Aic{[\zeta]}).$$
\end{proof}
\begin{lemma}\label{lemm_mz}
$ \max_i f(\Aic[\zeta]) \geq \frac{1}{k} f(\Ac{\zeta}). $
\end{lemma}

\begin{proof}
The proof follows the outline of the proof of Lemma \ref{lemm_k}.
Let $f(\Ac{\zeta}) = f(\{u_1, \cdots, u_{\rho([\zeta])} \})$. 
Since $\Ac{\zeta} \in \zeta$ and $\zeta$ is a set of hereditary constraints, we have $u_i \in \zeta$. Using submodularity of $f$, we have
$$ f(\Ac{\zeta}) \leq \sum_{i=1}^{\rho([\zeta])} f(u_i) \leq \rho([\zeta]) f(u^*).$$
where $u^* = \arg \max_i f(u_i).$ 
Suppose that $u^* \in V_j$, we get
%By the same argument as in the proof of Lemma \ref{lemm_k}, we get 
$$f(\max_i f(\Aic[\zeta])) \geq f(A_j^c[\zeta]) \geq f(u^*) \geq \frac{1}{\rho([\zeta])} f(\Ac{\zeta}).$$
\end{proof}
Since $f(\Ad{m, \rho([\zeta])})\geq \max_i f(\Aic[\zeta])$;
from Lemma \ref{lemm_mz} and \ref{lemm_kz} we have 
\begin{equation}\label{eq:xappr}
f(\Ad{m, \rho([\zeta])}) \geq \frac{1}{\min(m,\rho([\zeta]))} f(\Ac{\zeta}).
\end{equation}
For the black box algorithm \BBalg ~with a $\tau$-approximation guarantee, we have $$f(A_i^\BBalg[\zeta]) \geq \tau f(\Aic[\zeta]).$$
Now, we generalize the definitions used in the proof of Theorem \ref{th:ggreedy}
\begin{eqnarray*}
\Bgc &=& \cup_{i=1}^m \Aigc[\zeta], \\
\Oa{\zeta} &=& \max_i f(\Aigc[\zeta]),\\
\tilde{A}[\zeta] &=& {\arg\max}_{S\subseteq \Bgc \& |S|\leq\rho([\zeta])} f(S).
\end{eqnarray*} 
Then using Eq. \ref{eq:xappr} again, we obtain
\begin{eqnarray*}
f(\Agd{m,\zeta})&\geq& \max \big \{ f(\Oa{\zeta}), ~\tau f(\tilde{A}[\zeta]) \big \} \\
&\geq& \frac{\tau}{\min(m,\rho([\zeta]))}f(\Ac{\zeta}). 
\end{eqnarray*}

Note that since we do not use monotonicity of the submodular function in any of the proofs, the results hold in general for constrained maximization of any non-negative submodular function.

\subsection{Proof of Theorem~\ref{thm:BBalg-neighborhoods}}
\begin{lemma}\label{thm:random-closeB}
If for each $e_i\in A^\BBalg[\zeta], \abs{N_\alpha(e_i)}\geq \rho([\zeta]) m\log{(\rho([\zeta])/\delta^{1/m})}$, and if $V$ is partitioned into  sets $V_1, V_2,\dots V_m$, where each element is randomly assigned to one set with equal probabilities, then there is at least one partition with a subset $A_i^\BBalg[\zeta] \in \zeta$ such that \\
$\abs{f(\Ac{\zeta})-f(A_i^\BBalg[\zeta])}\leq \lambda\alpha \rho([\zeta])$ with probability at least $(1 - \delta)$. 
\end{lemma}

The proof is similar to the proof of Lemma \ref{thm:random-close} by taking disjoint sets of size $m\log{(\rho([\zeta])/\delta^{1/m})}$ in an $\alpha$-neighborhood of each $e_i \in \Ac{\zeta}$ and showing that with high probability, at least one of the $m$ partitions of $V$ contains elements  
from $\alpha$-neighborhoods of each element in the optimal solution. Note that now the size of the optimal solution is at most $\rho([\zeta])$. Since 
$\zeta$ is locally replaceable with parameter $\alpha$, as elements of $\Ac{\zeta}$ gets replaced by nearby elements in their $\alpha$-neighborhood, the resulting set is also a feasible solution.

By Lemma~\ref{thm:random-closeB}, for some $V_i$, $\abs{f(\Ac{\zeta})-f(A_i^\BBalg[\zeta])}\leq \lambda\alpha \rho([\zeta])$ with the given probability. On the other hand, for the black box algorithm \BBalg, we have $f(A^{\BBalg}_i[\zeta])\geq \tau f(\Aic[\zeta]) $. Therefore, the result follows using arguments analogous to the proof of Theorem~\ref{th:ggreedy_const}.

\subsection{Proof of Theorem~\ref{thm:BBalg-sampling}}

We use the following Lemmas to show that in a sample drawn from a ddistribution over an infinite dataset, a sufficiently large sample size guarantees a dense neighborhood near each element of the optimal solution.
%$\Ac[\zeta]$ when the elements are from representative regions of the data.

\begin{lemma}\label{thm:sampling-count-const}
A number of elements: $\dst n\geq \frac{8 \rho([\zeta]) m\log{(\rho([\zeta])/\delta^{1/m})}}{\beta  g(\alpha)}$, where $\alpha\leq \alpha^*$, suffices to have at least $4 \rho([\zeta]) m\log{(\rho([\zeta])/\delta^{1/m})}$ elements in the $\alpha$-neighborhood of each $e_i\in \Ac{\zeta}$ with probability at least $(1-\delta)$, for small values of $\delta$. 
\end {lemma}

\begin{lemma}\label{thm:epsilon-sampling-const} 
For $\dst n\geq \frac{8\rho([\zeta])m\log(\rho([\zeta])/\delta^{1/m})}{\beta  g(\frac{\eps}{\lambda \rho([\zeta])})}$, where $\frac{\eps}{\lambda \rho([\zeta])}\leq \alpha^*$, if $V$ is partitioned into  sets $V_1, V_2,\dots V_m$, where each element is randomly assigned to one set with equal probabilities, then for sufficiently small values of $\delta$, there is at least one partition with a subset $\Aic[\zeta]$ such that $\abs{f(\Ac{\zeta})-f(\Aic[\zeta])}\leq \eps$ with probability at least $(1-\delta)$. 
\end{lemma}

The proofs follows the same arguments as in the proof of Lemma \ref{thm:sampling-count} and \ref{thm:epsilon-sampling}. 
Recall that, by assumption $\zeta$ is locally replaceable with parameter $\alpha$. Hence, for $\varepsilon \leq \alpha \lambda \rho([\zeta])$, any set $\varepsilon$-close to the optimal solution is also a feasible solution.

By Lemma~\ref{thm:epsilon-sampling-const}, there is at least one $V_i$ such that $\abs{f(\Ac{\zeta})-f(\Aic[\zeta])}\leq \eps$ with the given probability. Furthermore, for the black box algorithm \BBalg, we have $f(A^{gd}_i[\zeta])\geq \tau f(\Aic[\zeta]) $. Thus the result follows using arguments analogous to the proof of Theorem~\ref{th:ggreedy_const}.

\subsection{Proof of Theorem~\ref{th:BBalg-decom}}
Again the proof follows the same line of reasoning as the proof of Theorem \ref{th:decom}, except that for a constraint set $\zeta$ with $\rho([\zeta]) = \max_{S \in \zeta} |S|$, there are at most $n^{\rho([\zeta])}$ feasible solutions.
Using the same definitions for $\Pi_i$ and $\mathcal{E}_i$ as in the proof of Theorem \ref{th:decom}, instead of Eq. \ref{eq:event} we get
$$\Pr(\xi_i\wedge \Pi_i) \geq 1- 2 n^{\rho([\zeta])}(\exp(-2n\epsilon^2/m)-  \exp(-n/8m)).$$ As a result, for $\epsilon<1/4$ and using union bound we conclude that  
$$\Pr((\xi_i\wedge \Pi_i)\text{ on all machines}) \geq 1- 4 mn^{\rho([\zeta])} \exp(-2n\epsilon^2/m).$$
which implies that we need to choose $\delta\geq 4 mn^\rho([\zeta]) \exp(-2n\epsilon^2/m)$.
Now if $n_0$ be chosen in a way that for any $n\geq n_0$ we have $\ln(n)/n\leq \epsilon^2/(mk)$, we get $n \geq \max (n_0, m\log(\delta/4m)/\epsilon^2)$.

Bearing in mind that $\zeta$ is locally replaceable, there is at least one $V_i$ such that % $\abs{f(\Ac{\zeta})-f(\Aic[\zeta])}\leq \eps$ with probability $1-\delta$. Hence
 the solution $\Aic[\zeta]$ is feasible and $\epsilon$ away from the optimum solution with probability $1-\delta$. 
 Now under the assumption of Theorem~\ref{thm:BBalg-sampling}, if we evaluate  $f(\Aic)$ only on machine $i$, then  we lose another $\epsilon$. Now by combining Theorem~\ref{th:ggreedy_const} and Theorem \ref{thm:BBalg-sampling} we get the desired result.

\vskip 0.2in
\bibliography{mirzasoleiman14}

\end{document}